\documentclass{article}
\PassOptionsToPackage{table}{xcolor}
\usepackage{paper/iclr2027_conference,times}

\usepackage{amsmath,amsfonts,amssymb,amsthm,mathtools,bm}
\usepackage{algorithm}
\usepackage{algorithmic}
\usepackage{array}
\usepackage{textcomp}
\usepackage{verbatim}
\usepackage{graphicx}
\usepackage{microtype}
\usepackage{booktabs}
\usepackage{placeins}
\usepackage{wrapfig}
\usepackage{multirow}
\usepackage[table]{xcolor}
\usepackage{enumitem}
\usepackage{subcaption}
\usepackage{url}
\usepackage{hyperref}
\usepackage[capitalize,noabbrev]{cleveref}

\theoremstyle{plain}
\newtheorem{theorem}{Theorem}[section]

\theoremstyle{definition}

\theoremstyle{remark}

\title{STCFormer: Adaptive Spatio-Temporal Modeling with Dynamic Cluster Transformer for Station-based Weather Forecasting}

\author{
{\bfseries Rongwen Li$^{1}$, Haixin Xie$^{1}$, Mingyang Wang$^{1}$, Hongwu Liu$^{2}$,}\\
{\bfseries Kun Fang$^{3}$, Changjian Chen$^{1}$\thanks{Corresponding to changjianchen@hnu.edu.cn.}, Zhuo Tang$^{1}$, Kenli Li$^{1}$}\\[0.4em]
{\normalfont $^{1}$Hunan University}\\
{\normalfont $^{2}$China Meteorological Administration}\\
{\normalfont $^{3}$Hunan Provincial Meteorological Bureau}}
\iclrfinalcopy

\begin{document}
\raggedbottom
\maketitle
\pagestyle{plain}
\thispagestyle{plain}

\begin{abstract}
Station-based weather forecasting supports daily life and economic activity, yet accurate forecasts require modeling complex spatial dependencies among stations. Recent clustering-based selective modeling offers a promising alternative to dense inter-station interactions. However, a grouping shared across an observation window may obscure local changes in station relationships, while intra-cluster interactions alone may miss important global context. The theoretical advantages of selective interactions over dense connectivity also remain insufficiently understood. We therefore propose STCFormer, an adaptive spatio-temporal Transformer that dynamically groups stations according to their local evolution within each temporal patch. Its Cluster-Guided Attention Block combines fine-grained local attention within clusters and global attention over regional state summaries, allowing each station to access information beyond its own cluster. We further show that a derived Lipschitz upper bound for cluster-conditioned local attention is no larger than its fully connected counterpart, explaining a potential robustness benefit and motivating the design of InfoLoss. Experiments on three real-world weather datasets spanning eight temperature and wind forecasting tasks show that STCFormer achieves the lowest 24-hour mean squared error on all eight tasks and ranks first or second in 47 of 48 comparisons across metrics and forecasting horizons. Ablations and case studies further confirm the benefits of locally adaptive grouping and complementary local--global interactions. Our code can be obtained at https://github.com/hnu-vis/STCFormer.
\end{abstract}

\section{Introduction}
Station-based weather forecasting supports agriculture, transportation, and many other applications~\cite{lazo2009300,freebairn2002economic}. The widespread deployment of automatic weather stations has increased the demand for accurate station-level predictions, particularly for fine-grained urban forecasting~\cite{baklanov2018urban,rolnick2022tackling}. Shaped by local environments and atmospheric processes, station observations exhibit complex temporal variations and spatial relationships. Effectively capturing these spatio-temporal dependencies remains a key challenge in station-based weather forecasting.

Existing studies approach this challenge through Spatio-Temporal (ST) and Multivariate Time Series (MTS) forecasting. ST methods incorporate spatial structures, spatio-temporal priors, or physical processes~\cite{DBLP:conf/iclr/LiYS018,DBLP:journals/natmi/WuZLW23}. MTS methods typically treat stations as channels, modeling them independently or capturing their dependencies through channel mixing~\cite{DBLP:conf/iclr/NieNSK23,DBLP:conf/nips/ChenLFHFTLY24,DBLP:conf/kdd/QiuW0GH025}. Because geography and atmospheric processes induce heterogeneous dependencies, dense, fine-grained interactions between all station pairs are not always necessary. Recent clustering-based methods offer a promising strategy for selective dependency modeling by organizing related sequences into groups~\cite{DBLP:conf/nips/ChenLFHFTLY24,DBLP:conf/kdd/QiuW0GH025,DBLP:conf/icml/HuZLLLC0XP25}. Figure~\ref{fig:intro}(a) illustrates this grouping-based perspective.

\begin{figure}[t]
    \centering
    \includegraphics[width=\linewidth]{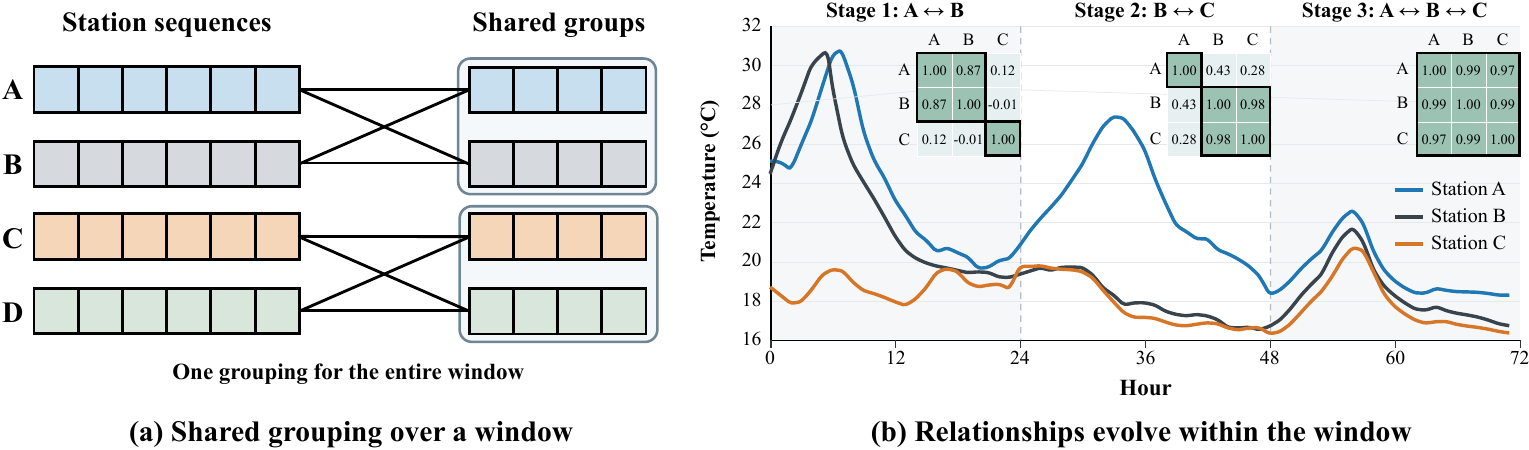}
    \caption{Motivation for locally adaptive station grouping. (a) A single grouping is shared across an observation window. (b) A real 72-hour case from the French temperature dataset shows the strongest relationship shifting from A--B to B--C before all three stations become highly correlated. Insets report Pearson correlations within each 24-hour segment.}
    \label{fig:intro}
\end{figure}

Despite the promising progress, two key challenges remain. \textbf{(1) Window-level grouping can obscure local relationship changes and restrict useful information exchange.} Groups estimated from complete sequences and shared across a window may overlook changing station dependencies as weather processes evolve. In the real French-temperature dataset case in Figure~\ref{fig:intro}(b), A and B are strongly correlated in the first 24-hour segment, B and C in the second, and all three stations in the third. A single grouping can therefore blend distinct local relationships. Moreover, when groups restrict interactions to their members, useful cross-region information may be lost: atmospheric circulation and propagating weather systems can connect stations with different local evolution patterns. Effective selective modeling should thus adapt to local relationship changes while retaining access to global context. \textbf{(2) Clustering-based selective interaction still lacks sufficient theoretical support.} Its effectiveness is primarily established empirically, leaving its advantages over dense interaction and potential robustness benefits insufficiently explained. A theoretical account is needed to complement empirical evidence and inform the design of selective interactions.

To address these challenges, we propose \textbf{STCFormer}, an adaptive spatio-temporal Transformer. Joint spatio-temporal patch embedding represents observations at a local temporal granularity. Within each temporal patch, efficient EM-style clustering uses evolution representations to dynamically organize stations into regional units. The Cluster-Guided Attention Block (CGAB) first performs hard-routed local attention within clusters, then aggregates the updated state features into regional content centers. Global station-to-region attention allows each station to access summaries from all regions, combining selective local interactions with global context. Theoretically, we show that the derived Lipschitz upper bound for fixed-partition local attention is no larger than the corresponding fully connected bound. We further connect this bound to station--cluster mutual information through the largest cluster size, motivating our mutual-information clustering regularizer, InfoLoss. Experiments on three real-world weather datasets demonstrate strong predictive performance and support the proposed design.

Our main contributions are as follows:
\begin{description}[leftmargin=0pt, labelsep=0.5em, itemsep=0pt, parsep=0pt, topsep=0pt]
    \item[(1)] We propose STCFormer, combining evolution-driven grouping at each temporal patch with cluster-guided local--global attention to capture changing station dependencies while preserving global context.
    \item[(2)] We establish a conditional sensitivity analysis of clustering-based attention, compare its derived Lipschitz upper bound with the dense counterpart, and connect the bound to mutual information to motivate InfoLoss.
    \item[(3)] Experiments on three real-world datasets covering eight forecasting tasks demonstrate competitive performance against 15 baselines, while ablations and case studies validate the benefits of locally adaptive grouping and complementary local--global interactions.
\end{description}

\section{STCFormer}
\label{sec:stcformer}

Let $\mathcal{X}_{H}\in\mathbb{R}^{T\times N\times C}$ denote $C$ meteorological variables observed at $N$ stations over the past $T$ time steps. Our goal is to predict the next $O$ steps, $\mathcal{Y}_{F}\in\mathbb{R}^{O\times N\times C}$. Figure~\ref{fig:stcformer_pipeline} gives an overview of STCFormer. The model combines locally adaptive station grouping with access to global spatial context. Spatio-Temporal Patch Embedding forms compact tokens from local observations and geographic coordinates. Each subsequent block constructs two temporal views: a state view provides contextualized features for prediction, while an evolution view guides grouping at each temporal patch. The Cluster-Guided Attention Block (CGAB) first models dependencies within the resulting groups, then enables each station to attend to content summarized from all regions. A direct prediction head maps the encoded features to future observations. We then analyze how grouping controls a local-attention sensitivity bound and use the resulting mutual-information connection to motivate InfoLoss for within-patch assignments.

\subsection{Spatio-Temporal Patch Embedding}
Patch embedding is an effective way to reduce sequence length in multivariate forecasting~\cite{DBLP:conf/iclr/NieNSK23,DBLP:conf/aaai/HuangSZCDZW24}, while conventional temporal patches encode each variable separately without explicitly incorporating neighboring stations. We construct joint spatio-temporal patches to retain local spatial context. Before patching, stations are grouped into latitude bands and ordered by longitude in alternating directions across adjacent bands. This snake-like ordering limits jumps at band boundaries and places geographically nearby stations at nearby indices~\cite{DBLP:journals/natmi/WuZLW23}. The temporal and station axes are padded to multiples of the patch lengths when necessary.

Let $L_t$ and $L_s$ be the temporal and spatial patch lengths, and let $T'$ and $N'$ be the resulting numbers of patches. The final feature or variable axis is implicit in all slices. The $(p,s)$-th patch is
\begin{align}
\label{eq:patch-seg}
    \mathcal{X}^{(\mathrm{pa})}_{p,s}
    =\mathcal{X}_{H}[(p-1)L_t:pL_t,(s-1)L_s:sL_s]
    \in\mathbb{R}^{L_t\times L_s\times C}.
\end{align}
A value projector encodes each flattened patch, while a lightweight geographic projector embeds the normalized latitude and longitude coordinates $\mathbf{c}_{s}$ within each spatial patch. Together with a learnable temporal-patch position embedding $\mathbf{e}^{(\mathrm{pos})}_{p}$, the input representation is
\begin{align}
\label{eq:patch-embed}
    \mathcal{X}_{p,s}
    =f_{\mathrm{val}}\!\left(\operatorname{vec}(\mathcal{X}^{(\mathrm{pa})}_{p,s})\right)+f_{\mathrm{geo}}(\mathbf{c}_{s})+\mathbf{e}^{(\mathrm{pos})}_{p}.\end{align}
The embedded tensor $\mathcal{X}\in\mathbb{R}^{T'\times N'\times D}$ reduces the token grid from $T\times N$ to $T'\times N'$ while retaining local spatio-temporal and geographic context for subsequent attention.

\begin{figure}
    \centering
    \captionsetup{skip=3pt}    \includegraphics[width=\linewidth]{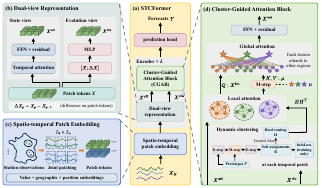}
    \caption{Overview of STCFormer. (a) The forecasting pipeline stacks CGAB encoders over patch embeddings and dual-view features. (b) Dual-view representation separates temporal state and local evolution. (c) Spatio-temporal patch embedding forms joint station--time tokens. (d) CGAB combines dynamic clustering, local attention within clusters, and global attention over regional summaries.}
    \label{fig:stcformer_pipeline}
\end{figure}

\subsection{Dual-View Temporal Representation}
We construct two temporal views to support predictive information exchange and station grouping according to local evolution. For spatial patch $s$, the temporal slice $\mathcal{X}_{:,s}\in\mathbb{R}^{T'\times D}$ contains its features across all temporal patches, where a colon denotes all entries along an axis. Specifically, the state view uses temporal self-attention to capture temporal dependencies across the observation window, followed by a feed-forward update:
\begin{align}
\label{eq:temporal-content}
    \mathcal{X}^{\mathrm{temp}}_{:,s}&=\operatorname{SelfAttention}(\mathcal{X}_{:,s}),\qquad
    \mathcal{X}^{\mathrm{att}}_{:,s}=\mathcal{X}^{\mathrm{temp}}_{:,s}+\operatorname{FFN}(\mathcal{X}^{\mathrm{temp}}_{:,s}).
\end{align}
The resulting state view $\mathcal{X}^{\mathrm{att}}$ retains temporal context, providing the feature content for subsequent local and global spatial attention. 
To make grouping responsive to changes within the observation window, we additionally construct an evolution view at each temporal patch. It incorporates local changes by concatenating each input token with its first-order temporal difference:
\begin{align}
\label{eq:temporal-clustering}
    \Delta\mathcal{X}_{p,s}&=\mathcal{X}_{p,s}-\mathcal{X}_{p-1,s},\qquad
    \mathcal{X}^{\mathrm{clu}}_{p,s}=\operatorname{MLP}_{\mathrm{clu}}\!\left([\mathcal{X}_{p,s};\Delta\mathcal{X}_{p,s}]\right).
\end{align}
Here the difference is defined for $p=2,\ldots,T'$, with $\Delta\mathcal{X}_{1,s}=\mathbf{0}$, and concatenation is along the feature axis. The patch-aligned views give CGAB temporally contextualized state features for attention and local evolution features for patch-specific grouping.

\subsection{Cluster-Guided Attention Block (CGAB)}
We propose the Cluster-Guided Attention Block to combine time-varying station interactions with global spatial context. CGAB comprises two modules: Dynamic Station Clustering and Cluster-Guided Local--Global Attention. For notational convenience, we take temporal patch $p$ as an example and write $\mathcal{X}^{\mathrm{clu}}_{p,:}=\mathbf{X}^{\mathrm{clu}}=[\mathbf{x}^{\mathrm{clu}}_1,\ldots,\mathbf{x}^{\mathrm{clu}}_{N'}]^{\top}$ and $\mathcal{X}^{\mathrm{att}}_{p,:}=\mathbf{X}^{\mathrm{att}}=[\mathbf{x}^{\mathrm{att}}_1,\ldots,\mathbf{x}^{\mathrm{att}}_{N'}]^{\top}$, both in $\mathbb{R}^{N'\times D}$. Each row corresponds to a spatial patch, referred to as a station token, and the time index is omitted below.

\subsubsection{Dynamic Station Clustering}
\label{sec:sc-section}
To capture relationship changes within an observation window, we group the $N'$ station tokens into $K$ clusters separately at each temporal patch using a differentiable EM-style procedure~\cite{NEURIPS2024_b48bb4ad,li2024one}. Each layer maintains learnable prototypes $\mathbf P=[\mathbf p_1,\ldots,\mathbf p_K]^\top\in\mathbb R^{K\times D}$. These shared prototypes provide a common reference across patches, while patch-specific updates allow a station to change groups as its local evolution changes. We perform one E--M update, followed by a final E-step to obtain the assignments used for spatial interaction.

\textbf{Initial E-step.} Using the prototypes as initial centroids, we compute provisional soft assignments from the evolution descriptors:
\begin{align}
\label{eq:E-step}
    \widetilde r_{s,k}
    =\operatorname{softmax}_{k}\!\left(
    \frac{\langle\mathbf x_s^{\mathrm{clu}},\mathbf p_k\rangle}{\tau}\right),
\end{align}
where $\tau>0$ is a learned temperature.
\textbf{M-step.} The assignments produce a data-dependent center for each cluster, which is blended with its learnable prototype through an EMA-style update:
\begin{align}
\label{eq:M-step}
    \bar{\mathbf c}_k
    &=\frac{\sum_{s=1}^{N'}\widetilde r_{s,k}\mathbf x_s^{\mathrm{clu}}}
    {\max(\sum_{s=1}^{N'}\widetilde r_{s,k},\epsilon)},\qquad
    \mathbf c_k=\eta\bar{\mathbf c}_k+(1-\eta)\mathbf p_k.
\end{align}
Here $\eta\in[0,1]$ balances the current patch's evolution information and the learned prototype, while $\epsilon>0$ prevents division by zero. This blend is computed within the current patch and does not carry centroids across time.
Finally, we repeat the E-step in Eq.~\eqref{eq:E-step} with the updated centers $\mathbf c_k$ replacing $\mathbf p_k$, obtaining the final soft assignments $\mathbf R=[r_{s,k}]\in[0,1]^{N'\times K}$ for InfoLoss, hard local routing, and soft regional aggregation. During training, Gumbel-Max samples the cluster index
\begin{align}
\label{eq:gumbel-max}
    \kappa_s=\underset{k}{\arg\max}\bigl(\log r_{s,k}+g_{s,k}\bigr),\qquad
    g_{s,k}\sim\operatorname{Gumbel}(0,1).
\end{align}
The hard assignment matrix $\mathbf H\in\{0,1\}^{N'\times K}$ has $H_{s,k}=1$ iff $k=\kappa_s$. During training, a straight-through Gumbel--Softmax estimator preserves hard routing in the forward pass while enabling backpropagation through the assignment probabilities~\cite{DBLP:journals/pami/HuijbenKPS23}. At inference, we omit the noise and set $\kappa_s=\arg\max_k r_{s,k}$. Recomputing this E--M--E procedure at each temporal patch lets groups track local changes within the observation window. Its hard assignments determine local attention neighborhoods, while the corresponding soft assignments carry graded membership into regional aggregation.

\subsubsection{Cluster-Guided Local--Global Attention}
Tokens are grouped according to similarity in their local evolution representations. We express local attention over the state features $\mathbf{X}^{\mathrm{att}}$ using $\mathbf{H}\mathbf{H}^{\top}$ as a mask; our experiments implement the same hard partition with efficient packed attention (Appendix~\ref{sec:b}):
\begin{align}
\label{eq:local}
    \mathbf{X}^{\mathrm{loc}}
    =\mathbf{X}^{\mathrm{att}}+
    \operatorname{Softmax}\!\left(
      \frac{(\mathbf{X}^{\mathrm{att}}\mathbf{W}^{\mathrm{loc}}_Q)
      (\mathbf{X}^{\mathrm{att}}\mathbf{W}^{\mathrm{loc}}_K)^{\top}}{\sqrt{D}}
      +\log(\mathbf{H}\mathbf{H}^{\top})\right)
      (\mathbf{X}^{\mathrm{att}}\mathbf{W}^{\mathrm{loc}}_V).
\end{align}
At each temporal patch, the mask restricts this attention step to station tokens assigned to the same region. The state view can therefore model fine-grained dependencies among stations with similar recent evolution.

Stations with different local evolution patterns may still provide useful predictive context. We therefore complement local attention with global station-to-region attention over all regional content centers. Specifically, the regional M-step aggregates the locally refined state features $\mathbf{X}^{\mathrm{loc}}$ with the soft assignments $\mathbf R$. Let $\mathbf D_R=\operatorname{Diag}(\max(\mathbf R^\top\mathbf1,\epsilon))$, where $\epsilon=10^{-6}$ and the maximum is elementwise. We then perform cross-attention with $\mathbf{X}^{\mathrm{loc}}$ as queries and $\boldsymbol{\mu}$ as keys and values:
\begin{align}
\label{eq:global}
    \boldsymbol{\mu}&=\mathbf{D}_{R}^{-1}\mathbf{R}^{\top}\mathbf{X}^{\mathrm{loc}},\qquad
    \mathbf{X}^{\mathrm{glo}}=\mathbf{X}^{\mathrm{loc}}+
    \operatorname{Softmax}\!\left(
      \frac{(\mathbf{X}^{\mathrm{loc}}\mathbf{W}^{\mathrm{glo}}_Q)
      (\boldsymbol{\mu}\mathbf{W}^{\mathrm{glo}}_K)^{\top}}{\sqrt{D}}
      \right)(\boldsymbol{\mu}\mathbf{W}^{\mathrm{glo}}_V).
\end{align}
The regional centers $\boldsymbol{\mu}$ summarize state content, whereas the updated clustering centers $\{\mathbf c_k\}_{k=1}^{K}$ represent evolution for grouping. Soft weights retain each token's degree of membership when forming these centers, even though local attention uses hard cluster boundaries. Each station queries all regional centers, accessing global context beyond its own group without all-to-all station attention. We then apply a feed-forward update for nonlinear transformation:
\begin{align}
    \mathbf{X}^{\mathrm{out}}
    =\mathbf{X}^{\mathrm{glo}}+\operatorname{FFN}(\mathbf{X}^{\mathrm{glo}}).\end{align}
Thus, local attention captures fine-grained dependencies within evolving clusters, while global attention shares information across clusters through compact regional summaries. This separation preserves selective local interactions without isolating stations from other regions. The following sensitivity analysis focuses on local attention; the equations above use single-head notation for clarity.

\paragraph{Direct prediction.} For spatial patch $s$, we flatten the encoded temporal features $\mathbf F_s\in\mathbb R^{T'\times D}$ and predict all $O L_s$ future values for one target variable: \begin{equation}\label{eq:prediction-head}\widehat{\mathbf Y}_s=\operatorname{Reshape}_{O\times L_s}\!\left(\operatorname{MLP}_{\mathrm{pred}}\!\left(\operatorname{LN}(\operatorname{vec}(\mathbf F_s))\right)\right).\end{equation} The MLP uses two affine layers with GELU and dropout between them. The patch outputs are assembled, unpadded, and reordered to recover forecasts in the original station order.

\subsection{Lipschitz Bounds and Mutual Information Regularization}
\label{sec:cluster-conditioned-theory}
\label{sec:infoloss}
In this section, we analyze the potential sensitivity advantage of cluster-conditioned local attention over dense attention. Following bounded-domain analyses of attention~\citep{castin2024smooth}, we compare the two operators under identical projection weights and feature-norm constraints. At one temporal patch, let $n=N'$, fix a hard partition $H$, and denote its largest cluster size by $m(H)$. Let $\mathcal A_H$ be the single-head residual local-attention operator in Eq.~\eqref{eq:local}, and $\mathcal A_{\mathrm{dense}}$ its fully connected counterpart. Features use the Frobenius norm on $\Omega_{R_X}=\{U:\max_s\|u_s\|_2\le R_X\}$, and weights use the spectral norm.

\begin{theorem}[Grouping-dependent sensitivity bound]
\label{thm:local-partition-lipschitz}
Let $v=\|W_V^{\mathrm{loc}}\|_2$ and
\[
\beta=\frac{R_X^2\|W_Q^{\mathrm{loc}}(W_K^{\mathrm{loc}})^\top\|_2}{\sqrt D}.
\]
For $q\ge1$, define the nondecreasing bound
\begin{equation}
\label{eq:attention-size-certificate}
B(q)=1+v\left[\beta+(1+\beta)\sqrt{\frac{q e^{2\beta}}{q-1+e^{2\beta}}}\right].
\end{equation}
Then the derived Lipschitz upper bounds satisfy
\begin{equation}
\label{eq:local-partition-bound}
B_{\mathrm{ours}}(H):=B(m(H))\le B(n)=:B_{\mathrm{dense}}.
\end{equation}
Moreover, $\operatorname{Lip}_F(\mathcal A_H)\le B_{\mathrm{ours}}(H)$ and $\operatorname{Lip}_F(\mathcal A_{\mathrm{dense}})\le B_{\mathrm{dense}}$.
\end{theorem}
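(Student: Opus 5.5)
The plan has two parts. First, the monotonicity claim $B(m(H))\le B(n)$ follows from a derivative check together with $m(H)\le n$. Second, for a fixed hard partition $H$ I would compute the Jacobian of the masked attention map on $\Omega_{R_X}$ and bound its Frobenius-to-Frobenius operator norm, uniformly over the domain, by $B(q)$ with $q$ the size of the attended set. The dense operator is the special case of a single cluster of size $n$.

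\textbf{Monotonicity.} Set $a=e^{2\beta}\ge1$ and $g(q)=qa/(q-1+a)$. Then
\[
g'(q)=\frac{a(a-1)}{(q-1+a)^2}\ge0 .
\]
Since $v\ge0$ and $1+\beta>0$, $B$ is nondecreasing in $q$, and $m(H)\le n$ gives $B(m(H))\le B(n)$.

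\textbf{Decoupling across clusters.} With the mask $\log(HH^\top)$, row $s$ of the output depends only on the tokens of its own cluster $C$. So $\mathcal A_H$ acts blockwise as the identity plus the direct sum of dense attentions $f_C$ on $C$. Because the Frobenius norm squares additively over disjoint row blocks, we get $\operatorname{Lip}_F(f_H)\le\max_C\operatorname{Lip}_F(f_C)$. It therefore suffices to show $\operatorname{Lip}_F(f_C)\le v[\beta+(1+\beta)\sqrt{g(|C|)}]$ for dense attention on $q=|C|$ tokens. Monotonicity then gives the same bound with $q=m(H)$, and the identity residual contributes the leading $1$.

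\textbf{Dense-block bound.} Write $f(X)_i=\sum_j P_{ij}x_jW_V$ with $P_{ij}=\operatorname{softmax}_j(x_i^\top A x_j)$, where $A=W_QW_K^\top/\sqrt D$. By the product rule,
\[
df_i=\sum_j P_{ij}\,dx_jW_V+\sum_j dP_{ij}\,x_jW_V .
\]
For the second term, the softmax Jacobian gives
\[
\textstyle\sum_j dP_{ij}x_j=\operatorname{Cov}_{P_i}\!\big(x_j,\ x_i^\top A\,dx_j+dx_i^\top A x_j\big).
\]
\begin{itemize}
\item The part involving $dx_i$ is bounded by $\operatorname{Var}$-type quantities times $\|A\|_2R_X\|dx_i\|$. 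Since $\|x_j-\bar x\|\le R_X$ scale-wise, this yields $\beta\|dx_i\|$, and summing over rows produces the standalone $\beta$ term.
\item The $dx_j$ part, together with the first term, combines into $\sum_j P_{ij}(I+\text{rank-one correction of size}\le\beta)\,dx_j$. This gives a factor $(1+\beta)\,\|P\|_{2}$ acting on $dX$.
\end{itemize}
So the whole remaining term is controlled by the spectral norm of the row-stochastic matrix $P$.

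\textbf{Main obstacle.} The hardest step is bounding $\|P\|_2\le\sqrt{g(q)}$. The logits are bounded: $|x_i^\top Ax_j|\le\beta$, so each row's entries lie in a ratio range of at most $e^{2\beta}$. The largest entry in a row is therefore at most $e^{2\beta}/(q-1+e^{2\beta})$, i.e. one entry at $e^{\beta}$ and the rest at $e^{-\beta}$ after normalization.
\begin{itemize}
\item Use $\|P\|_2^2\le\|P\|_1\|P\|_\infty$ with $\|P\|_\infty=1$.
\item For the column sums, $\sum_i P_{ij}\le q\max_{i}P_{ij}\le qe^{2\beta}/(q-1+e^{2\beta})=g(q)$.
\end{itemize}
Hence $\|P\|_2\le\sqrt{g(q)}$. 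Inserting this into the dense-block bound gives exactly $v[\beta+(1+\beta)\sqrt{g(q)}]$, which proves the stated Lipschitz bounds for both $\mathcal A_H$ and $\mathcal A_{\mathrm{dense}}$ on $\Omega_{R_X}$ via the mean value inequality on this convex domain.

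The delicate bookkeeping is ensuring the covariance terms produce precisely $\beta$ and $(1+\beta)$ rather than larger constants. I would follow the bounded-domain attention Jacobian analysis of \citet{castin2024smooth}, specializing its constants.
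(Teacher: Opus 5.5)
Your overall structure is the same as the paper's. Monotonicity comes from $g'(q)=a(a-1)/(q-1+a)^2\ge0$. The bound is block-separable over the fixed clusters, so the Lipschitz constant is the maximum over blocks. Bounded logits give the column-mass bound $\sum_iP_{ij}\le g(q)$. The residual adds the $+1$, and the mean-value inequality applies on the convex domain.

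The gap is in the key term, i.e.\ the part of $\sum_j dP_{ij}x_j$ coming from $dx_j$ inside the logits. You fold it into $\sum_jP_{ij}(I+M_{ij})\,dx_j$ with $M_{ij}=(x_j-\mu_i)(A^\top x_i)^\top$, and assert $\|M_{ij}\|_2\le\beta$. That needs $\|x_j-\mu_i\|\le R_X$ pointwise, which is false. Take $q=2$, $x_1=R_Xe=-x_2$ and $e^\top Ae=\|A\|_2$. Then $\mu_1=R_X\tanh(\beta)\,e$ and $\|x_2-\mu_1\|=R_X(1+\tanh\beta)$, so $\|M_{12}\|_2=(1+\tanh\beta)\beta>\beta$. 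In general only $\|M_{ij}\|_2\le2\beta$ holds. The pointwise route therefore only yields the block constant $v[\beta+(1+2\beta)\sqrt{g(q)}]$, not the stated $B(q)$. Your intermediate claim that this term is controlled by $(1+\beta)\|P\|_2$ is not established either. Your ``scale-wise'' remark in the query bullet has the same defect, but there it is harmless: that term equals $C_iA^\top dx_i$, and only $\|C_i\|_2\le\operatorname{tr}C_i\le R_X^2$ is needed.

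The missing idea is to let $x_j-\mu_i$ enter only through the weighted variance
$\sum_jP_{ij}\|x_j-\mu_i\|^2=\sum_jP_{ij}\|x_j\|^2-\|\mu_i\|^2\le R_X^2$, never pointwise. Keep the key term separate from the value term and apply weighted Cauchy--Schwarz:
\[
\Bigl\|\sum_jP_{ij}(x_i^\top A\,dx_j)(x_j-\mu_i)\Bigr\|_2^2\le\Bigl(\sum_jP_{ij}(x_i^\top A\,dx_j)^2\Bigr)\Bigl(\sum_jP_{ij}\|x_j-\mu_i\|_2^2\Bigr)\le\beta^2\sum_jP_{ij}\|dx_j\|_2^2 .
\]
Summing over $i$ and using $\sum_iP_{ij}\le g(q)$ gives $\beta\sqrt{g(q)}\,\|dX\|_F$. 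The value term obeys $\|\sum_jP_{ij}dx_j\|^2\le\sum_jP_{ij}\|dx_j\|^2$ by Jensen, hence is at most $\sqrt{g(q)}\,\|dX\|_F$. The triangle inequality then gives exactly $(1+\beta)\sqrt{g(q)}$, as in the paper.

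Your Schur-test step $\|P\|_2\le\sqrt{\|P\|_1\|P\|_\infty}$ is valid and is equivalent to this Jensen-plus-column-sum estimate for the value term. The key term, however, must go through $\max_j\sum_iP_{ij}$ after the variance-weighted Cauchy--Schwarz, not through $\|P\|_2$ after a pointwise bound.
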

Since $m(H)\le n$ and $B$ is nondecreasing, the derived local-attention bound is no larger than the dense bound and is strictly tighter when $m(H)<n$ and $v,\beta>0$. The maximum cluster size therefore controls the local-attention certificate. This observation motivates a loss that favors informative, non-degenerate assignments rather than allowing cluster collapse.

\textbf{Mutual information clustering loss.}
Let $R\in[0,1]^{n\times K}$ be row-stochastic, with $1\le K\le n$. Let $X$ be a station token sampled uniformly within the patch and $Y$ its cluster label, with $\Pr(Y=k\mid X=\mathbf x_s^{\mathrm{clu}})=R_{s,k}$. Using natural logarithms, we maximize the within-patch mutual information $I(X;Y)=H(Y)-H(Y\mid X)$~\citep{gomes2010rim}. Restoring the temporal-patch index $p$, write
\begin{equation}
\label{eq:info-marginal}
\pi_{p,k}=\frac1{N'}\sum_sR_{p,s,k},\qquad
H(Y_p\mid X_p)=\frac1{N'}\sum_sH(\mathbf R_{p,s,:}),
\end{equation}
where $H(\mathbf r)=-\sum_kr_k\log r_k$ and $H(Y_p)=H(\boldsymbol\pi_p)$. We minimize the negative mutual information
\begin{equation}
\label{eq:mi-loss}
\mathcal L_{\mathrm{info}}=-\sum_{\ell=1}^{L}\frac1{T'}\sum_{p=1}^{T'}I(X_p^{(\ell)};Y_p^{(\ell)})
=\sum_{\ell=1}^{L}\frac1{T'}\sum_{p=1}^{T'}\left[\frac1{N'}\sum_sH(\mathbf R_{p,s,:}^{(\ell)})-H(\boldsymbol\pi_p^{(\ell)})\right].
\end{equation}
Here $L$ is the number of encoder layers: InfoLoss is computed at every layer, averaged over its temporal patches, and summed over layers. Lower conditional entropy encourages confident assignments, while higher marginal entropy encourages balanced cluster use. We train with $\mathcal L=\mathcal L_{\mathrm{pred}}+\lambda\mathcal L_{\mathrm{info}}$, using MSE for $\mathcal L_{\mathrm{pred}}$ and a single regularization weight $\lambda$. The following theorem formalizes why this objective is relevant: increasing station--cluster mutual information tightens a probabilistic certificate on the largest sampled group, which in turn tightens the derived local-attention bound.

\begin{theorem}[Mutual-information control of sampled grouping]
\label{cor:mi-partition-bound}
Conditional on $R$, sample token labels independently according to its rows, as in training-time Gumbel-Max. For $\delta\in(0,1)$, define
\begin{equation}
\label{eq:mi-size-envelope}
M_\delta(I)=\min\!\left\{n,\;\frac nK+
\sqrt{\left(\frac{n^2}{2}+\frac n\delta\right)(\log K-I)}\right\}.
\end{equation}
Then, with conditional probability at least $1-\delta$,
\begin{equation}
\label{eq:mi-hard-size-bound}
m(H)\le M_\delta(I(X;Y)),\qquad
B_{\mathrm{ours}}(H)\le B\!\left(M_\delta(I(X;Y))\right).
\end{equation}
\end{theorem}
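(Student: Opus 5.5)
The plan is to control the largest sampled cluster through the label counts $S_k=\sum_{s=1}^n H_{s,k}$, so that $m(H)=\max_k S_k$. Conditional on $R$, the rows of $H$ are independent one-hot vectors with $\Pr(H_{s,k}=1)=R_{s,k}$. Hence $\mathbb E[S_k]=n\pi_k$ and $\operatorname{Var}(S_k)=\sum_s R_{s,k}(1-R_{s,k})$.

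Write $\Delta=\log K-I(X;Y)\ge 0$. I would split each count as
\[
S_k-\tfrac nK=\Bigl(n\pi_k-\tfrac nK\Bigr)+\bigl(S_k-n\pi_k\bigr),
\]
a deterministic imbalance term plus a sampling fluctuation, and bound each piece by $\Delta$.

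\textbf{Imbalance term.} Since $I=H(Y)-H(Y\mid X)\le H(Y)$, we get $\mathrm{KL}(\pi\,\|\,u)=\log K-H(\pi)\le\Delta$, where $u$ is uniform on $[K]$. Pinsker's inequality then gives $\max_k(\pi_k-1/K)\le \mathrm{TV}(\pi,u)\le\sqrt{\Delta/2}$. Therefore
\[
\max_k n\pi_k\le \frac nK+\sqrt{\frac{n^2}{2}\Delta}.
\]

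\textbf{Fluctuation term.} The key inequality is $1-\sum_k r_k^2=\sum_k r_k(1-r_k)\le -\sum_k r_k\log r_k=H(\mathbf r)$, which follows from $1-x\le-\log x$. Summing over stations and clusters,
\[
\sum_k\operatorname{Var}(S_k)\le nH(Y\mid X)=n\bigl(H(Y)-I\bigr)\le n\Delta .
\]
A union bound with Chebyshev (or Markov applied to $\sum_k(S_k-n\pi_k)^2$) then gives $\max_k(S_k-n\pi_k)\le\sqrt{n\Delta/\delta}$ with conditional probability at least $1-\delta$.

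\textbf{Main obstacle: merging the two terms.} Adding the bounds naively yields
\[
m(H)\le \frac nK+\sqrt{\frac{n^2}{2}\Delta}+\sqrt{\frac{n}{\delta}\Delta}.
\]
This is weaker than the stated $\frac nK+\sqrt{(\frac{n^2}{2}+\frac n\delta)\Delta}$ by up to a factor $\sqrt2$ on the deviation, since $\sqrt a+\sqrt b\ge\sqrt{a+b}$. The exact constant therefore needs a single concentration step rather than a sum of two.

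What I would try first is a second moment of $S_k-n/K$ itself. Its expectation is $\operatorname{Var}(S_k)+n^2(\pi_k-1/K)^2$, and summing over $k$ gives at most $n\Delta+n^2\|\pi-u\|_2^2$. One then needs $\|\pi-u\|_2^2\lesssim\Delta/2$; a standard bound only gives $\|\pi-u\|_2^2\le\|\pi-u\|_1^2\le2\Delta$, so this step needs a sharper comparison of $\sum_k\pi_k^2-1/K$ with $\mathrm{KL}(\pi\|u)$. If the exact constant does not close, I would fall back to a $\delta$-independent deterministic part and a Chebyshev part arranged so the two radicands add. If that also fails, I would state the envelope with the slightly larger constant, which suffices for the qualitative conclusion.

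\textbf{Final step.} The cap at $n$ holds trivially because $m(H)\le n$. The second inequality in \eqref{eq:mi-hard-size-bound} then follows immediately from the monotonicity of $B$ established in Theorem~\ref{thm:local-partition-lipschitz}.
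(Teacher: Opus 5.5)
As you note yourself, the proposal stops short of the stated envelope, and that merging step is what fixes the constant. Your second fallback, a $\delta$-free deterministic part plus a Markov part whose radicands add, has the right shape. The missing idea is that you should not bound each part by the full gap $\Delta$, because the gap splits exactly. With $d=\mathrm{KL}(\pi\,\|\,u_K)$ and $\bar h=H(Y\mid X)$,
\[
\Delta=\log K-I(X;Y)=\bigl(\log K-H(\pi)\bigr)+H(Y\mid X)=d+\bar h,\qquad d,\bar h\ge0 .
\]
Your Pinsker step needs only $d$, giving $\max_k n\pi_k\le n/K+n\sqrt{d/2}$. Your variance step needs only $\bar h$, giving $\sum_k\operatorname{Var}(S_k)\le n\bar h$. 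Markov applied to $\sum_k(S_k-n\pi_k)^2$ then gives $\max_k(S_k-n\pi_k)\le\sqrt{n\bar h/\delta}$ with conditional probability at least $1-\delta$; this is trivial if $\bar h=0$, since the counts are then deterministic. Cauchy--Schwarz in $\mathbb R^2$ merges the two radicands exactly:
\[
\sqrt{\tfrac{n^2}{2}}\,\sqrt{d}+\sqrt{\tfrac{n}{\delta}}\,\sqrt{\bar h}
\le\sqrt{\Bigl(\tfrac{n^2}{2}+\tfrac{n}{\delta}\Bigr)(d+\bar h)}
=\sqrt{\Bigl(\tfrac{n^2}{2}+\tfrac{n}{\delta}\Bigr)\Delta}.
\]
This is exactly the paper's argument. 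You already wrote $\mathrm{KL}(\pi\|u_K)=\log K-H(\pi)$ and $\sum_k\operatorname{Var}(S_k)\le nH(Y\mid X)$, so the only loss came from replacing each quantity by $\Delta$.

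Your first repair, the second moment of $S_k-n/K$, would not work. Markov applied to $\|S-n u_K\|_2^2$ puts the factor $1/\delta$ on the deterministic term $n^2\|\pi-u_K\|_2^2$ as well, and the target cannot absorb that as $\delta\to0$. The comparison $\|\pi-u_K\|_2^2\le\Delta/2$ it would need is also false in general. For $K=2$ with deterministic, nearly balanced assignments, $\Delta=d$ and $\|\pi-u_K\|_2^2\approx d$, well above $d/2$. Stating a larger constant would prove a different theorem. Finally, when you invoke monotonicity of $B$, note that $M_\delta(I)\ge\min\{n,n/K\}=n/K\ge1$. This places $B(M_\delta(I))$ in the domain $q\ge1$ on which Theorem~\ref{thm:local-partition-lipschitz} defines $B$ as nondecreasing.
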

At fixed $n,K,v,\beta,\delta$, increasing mutual information tightens or preserves this probabilistic certificate. Together, the two results connect the training objective to the sensitivity analysis: InfoLoss promotes informative assignments, the mutual-information theorem controls the largest sampled group, and the grouping theorem maps that size to the local-attention bound. The failure probability $\delta$ is an analysis parameter, not a training hyperparameter. Appendix~\ref{app:cluster-conditioned-proofs} gives the proofs and a deterministic argmax counterpart; Appendix~\ref{sec:infoloss-properties} discusses further loss properties.

\section{Experiments}
\subsection{Experimental Setup}
\label{sec:experimental_setup}

\paragraph{Datasets and Baselines.}
We evaluate STCFormer on three real-world weather datasets---French, Hunan, and Global---covering eight temperature and wind forecasting tasks. All datasets use a 6:2:2 training/validation/test split and Z-score normalization based on training-set statistics. We compare with 15 baselines: six spatio-temporal methods and nine multivariate time series methods. Dataset descriptions and the complete baseline list are provided in Appendices~\ref{app:setup_datasets} and~\ref{app:setup_baselines}.

\paragraph{Implementation Details.}
All experiments use PyTorch and BasicTS~\cite{DBLP:journals/tkde/ShaoWXWYZYSJCCJC25} on an NVIDIA A100 GPU (40\,GB). We use a 48-hour look-back window to forecast the next 24 and 72 hours, with MSE as the training loss. Models are trained with Adam for up to 50 epochs, with an early-stopping patience of 5. We repeat each experiment three times and report mean and standard deviation. The main text presents 24-hour MSE; other metrics and 72-hour results appear in the appendix. Full training and model configurations are given in Appendix~\ref{app:setup_implementation}.

% Generated from Feishu main revision 323 and ablation revision 193.
% STCFormer denotes the STCFormerV4 results in the source tables.
% Corrformer / Hunan Temp. / 72h / MAE SD corrected to 0.245 with author confirmation.
\subsection{Main Results}
% Table~\ref{main_results_24} reports 24-hour MSE on eight dataset--variable tasks. Results for 24-hour MAE and SEDI and all 72-hour metrics are provided in Appendix~\ref{sec:additional_results}.

\begin{table}[!t]
\centering
\caption{24-hour forecasting results (MSE, $\downarrow$) with a 48-hour look-back window. Values are mean $\pm$ standard deviation over three runs. Bold and underlined means indicate the best and second-best reported values in each column; ties share the same marking.}
\label{main_results_24}
\begingroup
\fontsize{8}{9.5}\selectfont
\setlength{\tabcolsep}{0.8pt}
\renewcommand{\arraystretch}{1.10}
\begin{tabular*}{\textwidth}{@{\extracolsep{\fill}}l*{8}{c}@{}}
\toprule
\multirow{2}{*}{\textbf{Method}} & \multicolumn{3}{c}{\textbf{French}} & \multicolumn{3}{c}{\textbf{Hunan}} & \multicolumn{2}{c}{\textbf{Global}} \\
\cmidrule(lr){2-4}\cmidrule(lr){5-7}\cmidrule(lr){8-9}
 & Temp. & U-Wind & V-Wind & Temp. & U-Wind & V-Wind & Temp. & Wind \\
\midrule
\multicolumn{9}{l}{\textit{Spatio-Temporal Methods}} \\
DCRNN & 6.428{\fontsize{6}{7}\selectfont\textpm0.062} & 6.111{\fontsize{6}{7}\selectfont\textpm0.159} & 8.091{\fontsize{6}{7}\selectfont\textpm0.133} & \underline{4.921}{\fontsize{6}{7}\selectfont\textpm2.835} & 1.947{\fontsize{6}{7}\selectfont\textpm0.007} & 2.815{\fontsize{6}{7}\selectfont\textpm0.039} & 8.401{\fontsize{6}{7}\selectfont\textpm0.138} & 3.675{\fontsize{6}{7}\selectfont\textpm0.005} \\
Corrformer & 7.911{\fontsize{6}{7}\selectfont\textpm0.161} & 4.700{\fontsize{6}{7}\selectfont\textpm0.068} & 6.411{\fontsize{6}{7}\selectfont\textpm0.001} & 5.594{\fontsize{6}{7}\selectfont\textpm0.723} & 2.044{\fontsize{6}{7}\selectfont\textpm0.006} & 2.640{\fontsize{6}{7}\selectfont\textpm0.030} & 7.709{\fontsize{6}{7}\selectfont\textpm0.009} & 3.889{\fontsize{6}{7}\selectfont\textpm0.004} \\
EasyST & 7.823{\fontsize{6}{7}\selectfont\textpm0.697} & 4.735{\fontsize{6}{7}\selectfont\textpm0.092} & 5.884{\fontsize{6}{7}\selectfont\textpm0.047} & 6.673{\fontsize{6}{7}\selectfont\textpm1.390} & 2.026{\fontsize{6}{7}\selectfont\textpm0.034} & 2.680{\fontsize{6}{7}\selectfont\textpm0.039} & 7.333{\fontsize{6}{7}\selectfont\textpm0.028} & 3.715{\fontsize{6}{7}\selectfont\textpm0.007} \\
CDPNet & 7.592{\fontsize{6}{7}\selectfont\textpm0.136} & 4.478{\fontsize{6}{7}\selectfont\textpm0.049} & 4.998{\fontsize{6}{7}\selectfont\textpm0.065} & 5.432{\fontsize{6}{7}\selectfont\textpm0.006} & 1.979{\fontsize{6}{7}\selectfont\textpm0.000} & 2.590{\fontsize{6}{7}\selectfont\textpm0.047} & \underline{7.029}{\fontsize{6}{7}\selectfont\textpm0.018} & 3.638{\fontsize{6}{7}\selectfont\textpm0.001} \\
STELLA & 26.347{\fontsize{6}{7}\selectfont\textpm3.948} & 9.844{\fontsize{6}{7}\selectfont\textpm1.123} & 9.345{\fontsize{6}{7}\selectfont\textpm1.033} & 21.410{\fontsize{6}{7}\selectfont\textpm2.085} & 2.411{\fontsize{6}{7}\selectfont\textpm0.044} & 3.541{\fontsize{6}{7}\selectfont\textpm0.179} & 8.371{\fontsize{6}{7}\selectfont\textpm0.021} & 3.835{\fontsize{6}{7}\selectfont\textpm0.003} \\
S$^2$Transformer & 7.807{\fontsize{6}{7}\selectfont\textpm0.053} & 4.373{\fontsize{6}{7}\selectfont\textpm0.131} & 5.112{\fontsize{6}{7}\selectfont\textpm0.372} & 6.864{\fontsize{6}{7}\selectfont\textpm0.494} & 1.912{\fontsize{6}{7}\selectfont\textpm0.001} & 2.492{\fontsize{6}{7}\selectfont\textpm0.070} & 7.149{\fontsize{6}{7}\selectfont\textpm0.002} & 3.708{\fontsize{6}{7}\selectfont\textpm0.015} \\
\midrule
\multicolumn{9}{l}{\textit{Multivariate Time Series Methods}} \\
DLinear & 6.902{\fontsize{6}{7}\selectfont\textpm0.013} & 4.676{\fontsize{6}{7}\selectfont\textpm0.002} & 5.335{\fontsize{6}{7}\selectfont\textpm0.001} & 5.564{\fontsize{6}{7}\selectfont\textpm0.012} & 2.080{\fontsize{6}{7}\selectfont\textpm0.000} & 2.448{\fontsize{6}{7}\selectfont\textpm0.000} & 9.180{\fontsize{6}{7}\selectfont\textpm0.005} & 4.114{\fontsize{6}{7}\selectfont\textpm0.004} \\
PatchTST & 6.921{\fontsize{6}{7}\selectfont\textpm0.030} & 4.788{\fontsize{6}{7}\selectfont\textpm0.019} & 5.627{\fontsize{6}{7}\selectfont\textpm0.027} & 5.509{\fontsize{6}{7}\selectfont\textpm0.058} & 1.952{\fontsize{6}{7}\selectfont\textpm0.001} & 2.546{\fontsize{6}{7}\selectfont\textpm0.009} & 8.177{\fontsize{6}{7}\selectfont\textpm0.001} & 4.139{\fontsize{6}{7}\selectfont\textpm0.010} \\
Crossformer & 7.288{\fontsize{6}{7}\selectfont\textpm0.027} & 4.172{\fontsize{6}{7}\selectfont\textpm0.022} & 4.998{\fontsize{6}{7}\selectfont\textpm0.036} & 5.061{\fontsize{6}{7}\selectfont\textpm0.173} & 1.900{\fontsize{6}{7}\selectfont\textpm0.025} & 2.523{\fontsize{6}{7}\selectfont\textpm0.030} & 7.774{\fontsize{6}{7}\selectfont\textpm0.046} & \underline{3.609}{\fontsize{6}{7}\selectfont\textpm0.017} \\
TimeXer & \underline{6.388}{\fontsize{6}{7}\selectfont\textpm0.134} & 4.348{\fontsize{6}{7}\selectfont\textpm0.033} & 5.338{\fontsize{6}{7}\selectfont\textpm0.015} & 5.390{\fontsize{6}{7}\selectfont\textpm0.321} & 1.947{\fontsize{6}{7}\selectfont\textpm0.010} & 2.447{\fontsize{6}{7}\selectfont\textpm0.014} & 7.855{\fontsize{6}{7}\selectfont\textpm0.029} & 3.872{\fontsize{6}{7}\selectfont\textpm0.005} \\
CCM & 6.491{\fontsize{6}{7}\selectfont\textpm0.019} & 4.159{\fontsize{6}{7}\selectfont\textpm0.011} & 5.146{\fontsize{6}{7}\selectfont\textpm0.001} & 4.958{\fontsize{6}{7}\selectfont\textpm0.032} & 1.922{\fontsize{6}{7}\selectfont\textpm0.002} & \underline{2.434}{\fontsize{6}{7}\selectfont\textpm0.002} & 8.161{\fontsize{6}{7}\selectfont\textpm0.006} & 3.981{\fontsize{6}{7}\selectfont\textpm0.002} \\
DUET & 6.833{\fontsize{6}{7}\selectfont\textpm0.138} & 4.507{\fontsize{6}{7}\selectfont\textpm0.039} & \underline{4.997}{\fontsize{6}{7}\selectfont\textpm0.097} & 5.882{\fontsize{6}{7}\selectfont\textpm0.020} & \underline{1.895}{\fontsize{6}{7}\selectfont\textpm0.022} & 2.859{\fontsize{6}{7}\selectfont\textpm0.058} & 7.699{\fontsize{6}{7}\selectfont\textpm0.098} & 3.708{\fontsize{6}{7}\selectfont\textpm0.070} \\
TimeFilter & 6.760{\fontsize{6}{7}\selectfont\textpm0.078} & \underline{4.151}{\fontsize{6}{7}\selectfont\textpm0.022} & \underline{4.997}{\fontsize{6}{7}\selectfont\textpm0.018} & 5.041{\fontsize{6}{7}\selectfont\textpm0.065} & \underline{1.895}{\fontsize{6}{7}\selectfont\textpm0.023} & 2.435{\fontsize{6}{7}\selectfont\textpm0.045} & 7.848{\fontsize{6}{7}\selectfont\textpm0.024} & 3.650{\fontsize{6}{7}\selectfont\textpm0.072} \\
TQNet & 6.613{\fontsize{6}{7}\selectfont\textpm0.005} & 4.317{\fontsize{6}{7}\selectfont\textpm0.017} & 5.172{\fontsize{6}{7}\selectfont\textpm0.051} & 4.978{\fontsize{6}{7}\selectfont\textpm0.024} & 1.920{\fontsize{6}{7}\selectfont\textpm0.002} & 2.469{\fontsize{6}{7}\selectfont\textpm0.006} & 7.734{\fontsize{6}{7}\selectfont\textpm0.011} & 3.812{\fontsize{6}{7}\selectfont\textpm0.002} \\
xPatch & 7.010{\fontsize{6}{7}\selectfont\textpm0.008} & 4.517{\fontsize{6}{7}\selectfont\textpm0.004} & 5.334{\fontsize{6}{7}\selectfont\textpm0.011} & 5.227{\fontsize{6}{7}\selectfont\textpm0.021} & 1.924{\fontsize{6}{7}\selectfont\textpm0.001} & 2.516{\fontsize{6}{7}\selectfont\textpm0.002} & 8.380{\fontsize{6}{7}\selectfont\textpm0.007} & 3.930{\fontsize{6}{7}\selectfont\textpm0.003} \\
\midrule
\textbf{STCFormer} & \textbf{6.291}{\fontsize{6}{7}\selectfont\textpm0.090} & \textbf{4.045}{\fontsize{6}{7}\selectfont\textpm0.028} & \textbf{4.883}{\fontsize{6}{7}\selectfont\textpm0.013} & \textbf{4.743}{\fontsize{6}{7}\selectfont\textpm0.149} & \textbf{1.832}{\fontsize{6}{7}\selectfont\textpm0.001} & \textbf{2.402}{\fontsize{6}{7}\selectfont\textpm0.003} & \textbf{6.917}{\fontsize{6}{7}\selectfont\textpm0.002} & \textbf{3.584}{\fontsize{6}{7}\selectfont\textpm0.002} \\
\bottomrule
\end{tabular*}
\endgroup
\end{table}

Table~\ref{main_results_24} reports 24-hour MSE on eight dataset--variable tasks. The corresponding 24-hour MAE and SEDI, together with all 72-hour metrics, are reported in Appendix~\ref{sec:additional_results}. STCFormer achieves the lowest mean 24-hour MSE on all eight tasks, consistently outperforming both spatio-temporal and multivariate forecasting methods. Relative to the second-best mean in each task, it reduces MSE by 2.2\% on average. The additional results in Appendix~\ref{sec:additional_results} further demonstrate this consistency.

Across all 48 dataset--variable--horizon--metric combinations, STCFormer ranks first in 42 and second in five. It also outperforms the clustering-based methods DUET and CCM and the dynamic interaction-selection method TimeFilter on every task in the main table. We attribute this advantage to locally adaptive clustering, which tracks changing station relationships, together with global attention over regional summaries, which restores information exchange across clusters. Overall, the results support a stable predictive advantage across heterogeneous station networks.

\FloatBarrier
\subsection{Ablation Studies}
We evaluate six variants: one grouping shared across the observation window (\textbf{Window-shared}), full spatial attention instead of CGAB (\textbf{Full Attention}), fixed geographic grouping (\textbf{Static Clustering}), and removal of \textbf{InfoLoss}, \textbf{global attention}, or \textbf{local attention}. Table~\ref{tab:ablation_full} reports 24-hour MSE; the corresponding MAE and SEDI results appear in Appendix~\ref{sec:additional_ablation}.

\begin{table}[!htbp]
\centering
\caption{24-hour ablation results (MSE, $\downarrow$) with a 48-hour look-back window. Values are mean $\pm$ standard deviation over three runs. Bold means indicate the best reported values in each column; ties are both bold.}
\label{tab:ablation_full}
\begingroup
\fontsize{8}{9.5}\selectfont
\setlength{\tabcolsep}{0.8pt}
\renewcommand{\arraystretch}{1.10}
\begin{tabular*}{\textwidth}{@{\extracolsep{\fill}}l*{8}{c}@{}}
\toprule
\multirow{2}{*}{\textbf{Variant}} & \multicolumn{3}{c}{\textbf{French}} & \multicolumn{3}{c}{\textbf{Hunan}} & \multicolumn{2}{c}{\textbf{Global}} \\
\cmidrule(lr){2-4}\cmidrule(lr){5-7}\cmidrule(lr){8-9}
 & Temp. & U-Wind & V-Wind & Temp. & U-Wind & V-Wind & Temp. & Wind \\
\midrule
Window-shared & 6.516{\fontsize{6}{7}\selectfont\textpm0.093} & 4.181{\fontsize{6}{7}\selectfont\textpm0.041} & 4.996{\fontsize{6}{7}\selectfont\textpm0.081} & 4.891{\fontsize{6}{7}\selectfont\textpm0.078} & 1.913{\fontsize{6}{7}\selectfont\textpm0.024} & 2.428{\fontsize{6}{7}\selectfont\textpm0.032} & 6.954{\fontsize{6}{7}\selectfont\textpm0.027} & 3.758{\fontsize{6}{7}\selectfont\textpm0.012} \\
Full Attention & 6.334{\fontsize{6}{7}\selectfont\textpm0.059} & 4.197{\fontsize{6}{7}\selectfont\textpm0.118} & 5.044{\fontsize{6}{7}\selectfont\textpm0.183} & 4.930{\fontsize{6}{7}\selectfont\textpm0.118} & 1.932{\fontsize{6}{7}\selectfont\textpm0.004} & 2.518{\fontsize{6}{7}\selectfont\textpm0.012} & 6.976{\fontsize{6}{7}\selectfont\textpm0.019} & 3.787{\fontsize{6}{7}\selectfont\textpm0.004} \\
Static Clustering & 6.503{\fontsize{6}{7}\selectfont\textpm0.175} & 4.204{\fontsize{6}{7}\selectfont\textpm0.062} & 5.079{\fontsize{6}{7}\selectfont\textpm0.130} & 4.904{\fontsize{6}{7}\selectfont\textpm0.401} & 1.905{\fontsize{6}{7}\selectfont\textpm0.013} & 2.437{\fontsize{6}{7}\selectfont\textpm0.018} & 6.932{\fontsize{6}{7}\selectfont\textpm0.016} & 3.796{\fontsize{6}{7}\selectfont\textpm0.006} \\
w/o InfoLoss & 6.634{\fontsize{6}{7}\selectfont\textpm0.310} & 4.269{\fontsize{6}{7}\selectfont\textpm0.044} & 4.891{\fontsize{6}{7}\selectfont\textpm0.027} & 5.119{\fontsize{6}{7}\selectfont\textpm0.018} & 1.902{\fontsize{6}{7}\selectfont\textpm0.017} & 2.439{\fontsize{6}{7}\selectfont\textpm0.025} & 6.930{\fontsize{6}{7}\selectfont\textpm0.006} & 3.765{\fontsize{6}{7}\selectfont\textpm0.008} \\
w/o Global Attn. & 6.363{\fontsize{6}{7}\selectfont\textpm0.065} & 4.193{\fontsize{6}{7}\selectfont\textpm0.019} & 4.926{\fontsize{6}{7}\selectfont\textpm0.048} & 5.185{\fontsize{6}{7}\selectfont\textpm0.083} & 1.902{\fontsize{6}{7}\selectfont\textpm0.002} & 2.421{\fontsize{6}{7}\selectfont\textpm0.007} & 6.966{\fontsize{6}{7}\selectfont\textpm0.012} & 3.760{\fontsize{6}{7}\selectfont\textpm0.003} \\
w/o Local Attn. & 6.388{\fontsize{6}{7}\selectfont\textpm0.041} & 4.260{\fontsize{6}{7}\selectfont\textpm0.085} & 4.931{\fontsize{6}{7}\selectfont\textpm0.016} & 4.874{\fontsize{6}{7}\selectfont\textpm0.210} & 1.907{\fontsize{6}{7}\selectfont\textpm0.000} & 2.497{\fontsize{6}{7}\selectfont\textpm0.022} & 7.159{\fontsize{6}{7}\selectfont\textpm0.011} & 3.811{\fontsize{6}{7}\selectfont\textpm0.013} \\
\midrule
\textbf{STCFormer} & \textbf{6.291}{\fontsize{6}{7}\selectfont\textpm0.090} & \textbf{4.045}{\fontsize{6}{7}\selectfont\textpm0.028} & \textbf{4.883}{\fontsize{6}{7}\selectfont\textpm0.013} & \textbf{4.743}{\fontsize{6}{7}\selectfont\textpm0.149} & \textbf{1.832}{\fontsize{6}{7}\selectfont\textpm0.001} & \textbf{2.402}{\fontsize{6}{7}\selectfont\textpm0.003} & \textbf{6.917}{\fontsize{6}{7}\selectfont\textpm0.002} & \textbf{3.584}{\fontsize{6}{7}\selectfont\textpm0.002} \\
\bottomrule
\end{tabular*}
\endgroup
\end{table}

The full model achieves the lowest mean MSE on all eight tasks. \textbf{(1)} Sharing one grouping across the window increases MSE on all tasks, supporting grouping updates within the window. Averaged over the eight tasks, Window-shared raises MSE by 2.9\%, whereas replacing CGAB with full attention raises it by 3.6\%. The latter result indicates that selective, structured interaction is beneficial beyond merely reducing attention cost. \textbf{(2)} Fixed geographic grouping produces a 3.3\% average increase, showing that geographic proximity alone cannot capture the input-dependent relationships exposed by evolving weather conditions. Removing the within-patch InfoLoss regularizer also increases MSE, with task-dependent effects: Hunan temperature rises from 4.743 to 5.119, while Global temperature changes only from 6.917 to 6.930. This variation is consistent with InfoLoss acting most strongly when the task benefits from clearer assignment structure. \textbf{(3)} Removing either attention branch worsens all tasks. On Global wind, MSE rises from 3.584 to 3.760 without global attention and 3.811 without local attention, supporting their complementary roles. Together, these comparisons isolate the contributions of assignment adaptivity, regularized grouping, and cross-region information exchange rather than attributing the gain to a single component.

% Let the conclusion share the page with the ablation table.

% Current experiment plan: Feishu Muxewbrs3icRr8kiaiHcSTSnnFb.
\subsection{Model Analysis}
\label{sec:model_analysis}
\subsubsection{Hyperparameter Sensitivity}
\label{sec:hyperparameter_sensitivity}
We evaluate the InfoLoss weight $\lambda$, cluster ratio $\rho$, and embedding dimension $d_{\mathrm{model}}$ on the French, Hunan, and Global temperature tasks using 24-hour MSE. The cluster count is $K\approx\rho N'$, where $N'$ denotes the station tokens after spatial patching. Look-back and patch-size studies are provided in Appendix~\ref{app:additional_hyperparameters}.

\begingroup
\setlength{\textfloatsep}{8pt plus 2pt minus 2pt}
\setlength{\intextsep}{6pt plus 2pt minus 2pt}
\begin{figure}[!ht]
\centering
\captionsetup{skip=3pt}
\captionsetup[subfigure]{skip=1pt}
\begin{subfigure}[t]{0.325\linewidth}
\centering
\includegraphics[width=\linewidth]{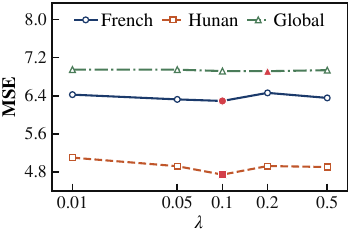}
\caption{MSE w.r.t. different $\lambda$.}
\end{subfigure}\hfill
\begin{subfigure}[t]{0.325\linewidth}
\centering
\includegraphics[width=\linewidth]{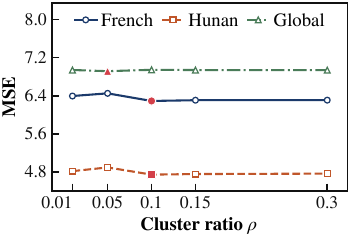}
\caption{MSE w.r.t. different $\rho$.}
\end{subfigure}\hfill
\begin{subfigure}[t]{0.325\linewidth}
\centering
\includegraphics[width=\linewidth]{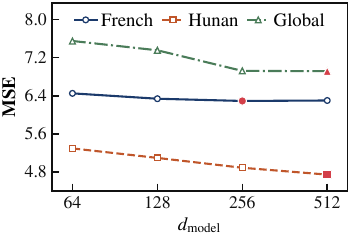}
\caption{MSE w.r.t. different $d_{\text{model}}$.}
\end{subfigure}
\caption{Hyperparameter sensitivity on three temperature tasks (24-hour MSE). Red markers indicate the lowest reported MSE within each dataset's sweep.}
\label{fig:hyperparameters_main}
\end{figure}
\endgroup

Figure~\ref{fig:hyperparameters_main} shows moderate sensitivity across all three settings. The best $\lambda$ is $0.1$ on French and Hunan and $0.2$ on Global, while larger weights bring no further gain. Intermediate cluster ratios are preferred: $\rho=0.1$ performs best on French and Hunan, and $0.05$ on Global, with nearby settings remaining competitive. French favors $d_{\mathrm{model}}=256$, whereas Hunan and Global favor 512; however, gains beyond 256 are small. Overall, STCFormer remains stable over practical ranges and does not require uniformly large regularization, cluster counts, or feature dimensions.
\clearpage

\subsubsection{Efficiency Analysis}
\label{sec:efficiency_analysis}
\begin{wrapfigure}[16]{l}{0.47\columnwidth}
  \vspace{-0.8\baselineskip}
  \centering
  \includegraphics[width=\linewidth]{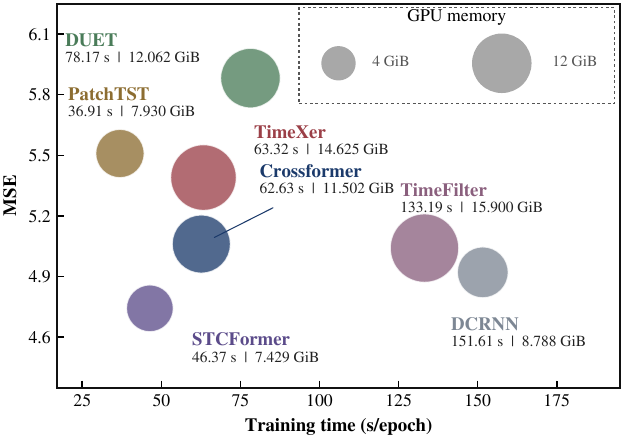}
  \captionsetup{hypcap=false}
  \caption{Efficiency on Hunan temperature (48-to-24-hour forecasting, batch size 8).}
  \label{fig:efficiency_hunan}
\end{wrapfigure}
Figure~\ref{fig:efficiency_hunan} compares accuracy, mean training time per epoch, and peak memory on Hunan. Among the methods shown, STCFormer achieves the lowest reported MSE (4.743) and memory usage (7.429\,GiB), taking $46.372\pm0.095$\,s/epoch. Compared with Crossformer, it reduces training time by 26.0\% and memory by 35.4\%; it is $2.87\times$ faster than TimeFilter. PatchTST trains faster (36.913\,s/epoch), but has higher MSE (5.509) and memory usage (7.930\,GiB). The bubble sizes make clear that STCFormer's accuracy gain does not come with a larger memory footprint. This result is consistent with CGAB's selective local interactions and compact regional summaries, which retain cross-cluster information without dense station-wise attention. STCFormer thus provides a favorable accuracy--resource trade-off on the larger station network. Appendix~\ref{app:additional_efficiency} reports the French comparison.
\FloatBarrier

\subsubsection{Dynamic Clustering Case Study}
\label{sec:clustering_case_study}
\begin{wrapfigure}[14]{r}{0.31\columnwidth}
  \vspace{-0.9\baselineskip}
  \centering
  \includegraphics[width=\linewidth]{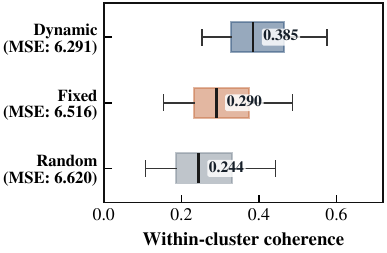}
  \captionsetup{hypcap=false}
  \caption{Dynamic, fixed, and random grouping on all French-temperature test windows ($48\!\rightarrow\!24$). Parentheses report MSE.}
  \label{fig:dynamic_clustering_case}
  \vspace{-0.9\baselineskip}
\end{wrapfigure}
To test whether learned groups track within-window relationship changes, we evaluate all test windows from the French-temperature 48-to-24-hour forecasting task. For each window, we measure within-cluster evolution coherence using Pearson correlations between hourly temperature increments for valid co-clustered station pairs and average the scores over local temporal blocks. Higher values indicate that stations grouped together exhibit more consistent short-term evolution.

Figure~\ref{fig:dynamic_clustering_case} compares dynamic grouping with a fixed window-shared partition and size-matched random grouping. Dynamic grouping achieves the highest median coherence (0.385), compared with Fixed (0.290) and Random (0.244), while also obtaining the lowest forecasting MSE (6.291 versus 6.516 and 6.620). Fixed remains better than Random, indicating persistent station dependence; the additional gain from Dynamic shows that updating assignments within a window captures transient couplings that a single shared partition obscures.
\FloatBarrier

\section{Conclusion}
We introduced STCFormer to model locally changing station dependencies while preserving global context for weather forecasting. Evolution-driven grouping adapts spatial interactions at each temporal patch, and CGAB combines fine-grained local attention within clusters and global attention over regional state summaries. Our theoretical analysis shows that the derived Lipschitz upper bound for fixed-partition local attention is no larger than its dense counterpart. Its connection to station--cluster mutual information motivates InfoLoss. Across three real-world datasets, STCFormer achieves the lowest 24-hour MSE on all eight tasks and ranks first or second in 47 of 48 metric--task--horizon comparisons. Ablations and case studies support the complementary roles of local grouping updates, global context, and clustering regularization. These findings support locally adaptive, selective spatial modeling for station-based weather forecasting.
\FloatBarrier

\subsection*{AI use statement}
Generative AI assisted the authors with preliminary exploration, literature review, and manuscript polishing and editing. The authors remain fully responsible for the accuracy of the text, the cited references, and the reported research.

\bibliographystyle{paper/iclr2027_conference}
\bibliography{sample-base,experiment_references}

@inproceedings{fu2025stella,
  title = {On the Integration of Spatial-Temporal Knowledge: A Lightweight Approach to Atmospheric Time Series Forecasting},
  author = {Fu, Yisong and Wang, Fei and Shao, Zezhi and Diao, Boyu and Wu, Lin and An, Zhulin and Yu, Chengqing and Li, Yujie and Xu, Yongjun},
  booktitle = {Advances in Neural Information Processing Systems},
  volume = {38},
  year = {2025},
  url = {https://papers.nips.cc/paper_files/paper/2025/file/7691484a7a35d5e2742279c1d926b778-Paper-Conference.pdf}
}

@article{chen2026s2transformer,
  title = {{S$^2$Transformer}: Scalable Structured Transformers for Global Station Weather Forecasting},
  author = {Chen, Hongyi and Li, Xiucheng and Chen, Xinyang and Cheng, Yun and Li, Jing and Chen, Kehai and Nie, Liqiang},
  journal = {Transactions on Machine Learning Research},
  year = {2026},
  url = {https://openreview.net/forum?id=AL2VnKno5n}
}

@inproceedings{lin2025tqnet,
  title = {Temporal Query Network for Efficient Multivariate Time Series Forecasting},
  author = {Lin, Shengsheng and Chen, Haojun and Wu, Haijie and Qiu, Chunyun and Lin, Weiwei},
  booktitle = {Proceedings of the 42nd International Conference on Machine Learning},
  pages = {37797--37814},
  year = {2025},
  volume = {267},
  series = {Proceedings of Machine Learning Research},
  publisher = {PMLR},
  url = {https://proceedings.mlr.press/v267/lin25e.html}
}

@inproceedings{stitsyuk2025xpatch,
  title = {{xPatch}: Dual-Stream Time Series Forecasting with Exponential Seasonal-Trend Decomposition},
  author = {Stitsyuk, Artyom and Choi, Jaesik},
  booktitle = {Proceedings of the AAAI Conference on Artificial Intelligence},
  volume = {39},
  pages = {20601--20609},
  year = {2025},
  url = {https://github.com/stitsyuk/xPatch}
}

@article{kim2026sigmaformer,
  title = {{SIGMAformer}: a spatiotemporal Gaussian mixture correlation transformer for global weather forecasting},
  author = {Kim, Do-Yeon and Suk, Heung-Il},
  journal = {npj Climate and Atmospheric Science},
  volume = {9},
  pages = {113},
  year = {2026},
  doi = {10.1038/s41612-026-01385-w},
  url = {https://www.nature.com/articles/s41612-026-01385-w}
}

@inproceedings{gomes2010rim,
  title = {Discriminative Clustering by Regularized Information Maximization},
  author = {Gomes, Ryan and Krause, Andreas and Perona, Pietro},
  booktitle = {Advances in Neural Information Processing Systems},
  volume = {23},
  year = {2010},
  url = {https://proceedings.neurips.cc/paper_files/paper/2010/file/42998cf32d552343bc8e460416382dca-Paper.pdf}
}

@inproceedings{castin2024smooth,
  title = {How Smooth Is Attention?},
  author = {Castin, Val{\'e}rie and Ablin, Pierre and Peyr{\'e}, Gabriel},
  booktitle = {Proceedings of the 41st International Conference on Machine Learning},
  pages = {5817--5840},
  year = {2024},
  volume = {235},
  series = {Proceedings of Machine Learning Research},
  publisher = {PMLR},
  url = {https://proceedings.mlr.press/v235/castin24a.html}
}

@inproceedings{liu2024timer,
  title = {{Timer}: Generative Pre-trained Transformers Are Large Time Series Models},
  author = {Liu, Yong and Zhang, Haoran and Li, Chenyu and Huang, Xiangdong and Wang, Jianmin and Long, Mingsheng},
  booktitle = {Proceedings of the 41st International Conference on Machine Learning},
  pages = {32369--32399},
  volume = {235},
  year = {2024},
  url = {https://proceedings.mlr.press/v235/liu24cb.html}
}

@article{ansari2024chronos,
  title = {{Chronos}: Learning the Language of Time Series},
  author = {Ansari, Abdul Fatir and Stella, Lorenzo and Turkmen, Caner and Zhang, Xiyuan and Mercado, Pedro and Shen, Huibin and Shchur, Oleksandr and Rangapuram, Syama Syndar and Pineda Arango, Sebastian and Kapoor, Shubham and Zschiegner, Jasper and Maddix, Danielle C. and Mahoney, Michael W. and Torkkola, Kari and Wilson, Andrew Gordon and Bohlke-Schneider, Michael and Wang, Yuyang},
  journal = {Transactions on Machine Learning Research},
  year = {2024},
  url = {https://openreview.net/forum?id=gerNCVqqtR}
}

@inproceedings{shi2025timemoe,
  title = {{Time-MoE}: Billion-Scale Time Series Foundation Models with Mixture of Experts},
  author = {Shi, Xiaoming and Wang, Shiyu and Nie, Yuqi and Li, Dianqi and Ye, Zhou and Wen, Qingsong and Jin, Ming},
  booktitle = {International Conference on Learning Representations},
  year = {2025},
  url = {https://proceedings.iclr.cc/paper_files/paper/2025/hash/558d48c1f08675daa636e09bfe94a89e-Abstract-Conference.html}
}

@inproceedings{DBLP:conf/iclr/NieNSK23,
  author       = {Yuqi Nie and
                  Nam H. Nguyen and
                  Phanwadee Sinthong and
                  Jayant Kalagnanam},
  title        = {A Time Series is Worth 64 Words: Long-term Forecasting with Transformers},
  booktitle    = {The Eleventh International Conference on Learning Representations,
                  {ICLR} 2023, Kigali, Rwanda, May 1-5, 2023},
  year         = {2023},
}

@article{DBLP:conf/aaai/HuangSZCDZW24,
  author       = {Qihe Huang and
                  Lei Shen and
                  Ruixin Zhang and
                  Jiahuan Cheng and
                  Shouhong Ding and
                  Zhengyang Zhou and
                  Yang Wang},
  title        = {HDMixer: Hierarchical Dependency with Extendable Patch for Multivariate
                  Time Series Forecasting},
  booktitle    = {Thirty-Eighth {AAAI} Conference on Artificial Intelligence, {AAAI}
                  2024, Thirty-Sixth Conference on Innovative Applications of Artificial
                  Intelligence, {IAAI} 2024, Fourteenth Symposium on Educational Advances
                  in Artificial Intelligence, {EAAI} 2024, February 20-27, 2024, Vancouver,
                  Canada},  pages        = {12608--12616},
  year         = {2024},
  journal      = {Proceedings of the {AAAI} Conference on Artificial Intelligence},
  volume       = {38},
  number       = {11},
  doi          = {10.1609/aaai.v38i11.29155},
}

@article{DBLP:journals/pami/HuijbenKPS23,
  author       = {Iris A. M. Huijben and
                  Wouter Kool and
                  Max B. Paulus and
                  Ruud J. G. van Sloun},
  title        = {A Review of the Gumbel-max Trick and its Extensions for Discrete Stochasticity
                  in Machine Learning},
  journal      = {{IEEE} Trans. Pattern Anal. Mach. Intell.},
  volume       = {45},
  number       = {2},
  pages        = {1353--1371},
  year         = {2023},
}

@inproceedings{DBLP:conf/aaai/ZhouZPZLXZ21,
  author       = {Haoyi Zhou and
                  Shanghang Zhang and
                  Jieqi Peng and
                  Shuai Zhang and
                  Jianxin Li and
                  Hui Xiong and
                  Wancai Zhang},
  title        = {Informer: Beyond Efficient Transformer for Long Sequence Time-Series
                  Forecasting},
  booktitle    = {Thirty-Fifth {AAAI} Conference on Artificial Intelligence, {AAAI}
                  2021, Thirty-Third Conference on Innovative Applications of Artificial
                  Intelligence, {IAAI} 2021, The Eleventh Symposium on Educational Advances
                  in Artificial Intelligence, {EAAI} 2021, Virtual Event, February 2-9,
                  2021},
  pages        = {11106--11115},
  year         = {2021},
}

@inproceedings{DBLP:conf/nips/WuXWL21,
  author       = {Haixu Wu and
                  Jiehui Xu and
                  Jianmin Wang and
                  Mingsheng Long},
  title        = {Autoformer: Decomposition Transformers with Auto-Correlation for Long-Term
                  Series Forecasting},
  booktitle    = {Advances in Neural Information Processing Systems 34: Annual Conference
                  on Neural Information Processing Systems 2021, NeurIPS 2021, December
                  6-14, 2021, virtual},
  pages        = {22419--22430},
  year         = {2021},
}

@inproceedings{DBLP:conf/iclr/ZhangY23,
  author       = {Yunhao Zhang and
                  Junchi Yan},
  title        = {Crossformer: Transformer Utilizing Cross-Dimension Dependency for
                  Multivariate Time Series Forecasting},
  booktitle    = {The Eleventh International Conference on Learning Representations,
                  {ICLR} 2023, Kigali, Rwanda, May 1-5, 2023},
  year         = {2023},
}

@inproceedings{DBLP:conf/iclr/LiuHZWWML24,
  author       = {Yong Liu and
                  Tengge Hu and
                  Haoran Zhang and
                  Haixu Wu and
                  Shiyu Wang and
                  Lintao Ma and
                  Mingsheng Long},
  title        = {iTransformer: Inverted Transformers Are Effective for Time Series
                  Forecasting},
  booktitle    = {The Twelfth International Conference on Learning Representations,
                  {ICLR} 2024, Vienna, Austria, May 7-11, 2024},
  year         = {2024},
}

@inproceedings{DBLP:conf/aaai/ZengCZ023,
  author       = {Ailing Zeng and
                  Muxi Chen and
                  Lei Zhang and
                  Qiang Xu},
  title        = {Are Transformers Effective for Time Series Forecasting?},
  booktitle    = {Thirty-Seventh {AAAI} Conference on Artificial Intelligence, {AAAI}
                  2023, Thirty-Fifth Conference on Innovative Applications of Artificial
                  Intelligence, {IAAI} 2023, Thirteenth Symposium on Educational Advances
                  in Artificial Intelligence, {EAAI} 2023, Washington, DC, USA, February
                  7-14, 2023},
  pages        = {11121--11128},
  year         = {2023},
}

@inproceedings{DBLP:conf/iclr/LiYS018,
  author       = {Yaguang Li and
                  Rose Yu and
                  Cyrus Shahabi and
                  Yan Liu},
  title        = {Diffusion Convolutional Recurrent Neural Network: Data-Driven Traffic
                  Forecasting},
  booktitle    = {6th International Conference on Learning Representations, {ICLR} 2018,
                  Vancouver, BC, Canada, April 30 - May 3, 2018, Conference Track Proceedings},
  year         = {2018},
}

@inproceedings{DBLP:conf/ijcai/YuYZ18,
  author       = {Bing Yu and
                  Haoteng Yin and
                  Zhanxing Zhu},
  title        = {Spatio-Temporal Graph Convolutional Networks: {A} Deep Learning Framework
                  for Traffic Forecasting},
  booktitle    = {Proceedings of the Twenty-Seventh International Joint Conference on
                  Artificial Intelligence, {IJCAI} 2018, July 13-19, 2018, Stockholm,
                  Sweden},
  pages        = {3634--3640},
  year         = {2018},
}

@inproceedings{DBLP:conf/ijcai/WuPLJZ19,
  author       = {Zonghan Wu and
                  Shirui Pan and
                  Guodong Long and
                  Jing Jiang and
                  Chengqi Zhang},
  title        = {Graph WaveNet for Deep Spatial-Temporal Graph Modeling},
  booktitle    = {Proceedings of the Twenty-Eighth International Joint Conference on
                  Artificial Intelligence, {IJCAI} 2019, Macao, China, August 10-16,
                  2019},
  pages        = {1907--1913},
  year         = {2019},
}

@inproceedings{DBLP:conf/kdd/WuPL0CZ20,
  author       = {Zonghan Wu and
                  Shirui Pan and
                  Guodong Long and
                  Jing Jiang and
                  Xiaojun Chang and
                  Chengqi Zhang},
  title        = {Connecting the Dots: Multivariate Time Series Forecasting with Graph
                  Neural Networks},
  booktitle    = {{KDD} '20: The 26th {ACM} {SIGKDD} Conference on Knowledge Discovery
                  and Data Mining, Virtual Event, CA, USA, August 23-27, 2020},
  pages        = {753--763},
  year         = {2020},
}

@article{lazo2009300,
  title={300 billion served: Sources, perceptions, uses, and values of weather forecasts},
  author={Lazo, Jeffrey K and Morss, Rebecca E and Demuth, Julie L},
  journal={Bulletin of the American Meteorological Society},
  volume={90},
  number={6},
  pages={785--798},
  year={2009},
}

@article{freebairn2002economic,
  title={Economic benefits of meteorological services},
  author={Freebairn, John W and Zillman, John W},
  journal={Meteorological Applications},
  volume={9},
  number={1},
  pages={33--44},
  year={2002},
  publisher={Cambridge University Press}
}

@article{baklanov2018urban,
  title={From urban meteorology, climate and environment research to integrated city services},
  author={Baklanov, A and Grimmond, C Sue B and Carlson, D and Terblanche, D and Tang, X and Bouchet, V and Lee, B and Langendijk, G and Kolli, R Kumar and Hovsepyan, A},
  journal={Urban Climate},
  volume={23},
  pages={330--341},
  year={2018},
  publisher={Elsevier}
}

@article{rolnick2022tackling,
  title={Tackling climate change with machine learning},
  author={Rolnick, David and Donti, Priya L and Kaack, Lynn H and Kochanski, Kelly and Lacoste, Alexandre and Sankaran, Kris and Ross, Andrew Slavin and Milojevic-Dupont, Nikola and Jaques, Natasha and Waldman-Brown, Anna and others},
  journal={ACM Computing Surveys (CSUR)},
  volume={55},
  number={2},
  pages={1--96},
  year={2022},
}

@article{DBLP:journals/natmi/WuZLW23,
  author       = {Haixu Wu and
                  Hang Zhou and
                  Mingsheng Long and
                  Jianmin Wang},
  title        = {Interpretable weather forecasting for worldwide stations with a unified
                  deep model},
  journal      = {Nat. Mac. Intell.},
  volume       = {5},
  number       = {6},
  pages        = {602--611},
  year         = {2023},
}

@article{hewage2020temporal,
  title={Temporal convolutional neural (TCN) network for an effective weather forecasting using time-series data from the local weather station},
  author={Hewage, Pradeep and Behera, Ardhendu and Trovati, Marcello and Pereira, Ella and Ghahremani, Morteza and Palmieri, Francesco and Liu, Yonghuai},
  journal={Soft Computing},
  volume={24},
  number={21},
  pages={16453--16482},
  year={2020},
}

@inproceedings{DBLP:conf/nips/SenYD19,
  author       = {Rajat Sen and
                  Hsiang{-}Fu Yu and
                  Inderjit S. Dhillon},
  editor       = {Hanna M. Wallach and
                  Hugo Larochelle and
                  Alina Beygelzimer and
                  Florence d'Alch{\'{e}}{-}Buc and
                  Emily B. Fox and
                  Roman Garnett},
  title        = {Think Globally, Act Locally: {A} Deep Neural Network Approach to High-Dimensional
                  Time Series Forecasting},
  booktitle    = {Advances in Neural Information Processing Systems 32: Annual Conference
                  on Neural Information Processing Systems 2019, NeurIPS 2019, December
                  8-14, 2019, Vancouver, BC, Canada},
  pages        = {4838--4847},
  year         = {2019},
}

@inproceedings{DBLP:conf/nips/WangWDQZLQWL24,
  author       = {Yuxuan Wang and
                  Haixu Wu and
                  Jiaxiang Dong and
                  Guo Qin and
                  Haoran Zhang and
                  Yong Liu and
                  Yunzhong Qiu and
                  Jianmin Wang and
                  Mingsheng Long},
  title        = {TimeXer: Empowering Transformers for Time Series Forecasting with
                  Exogenous Variables},
  booktitle    = {Advances in Neural Information Processing Systems 37: Annual Conference
                  on Neural Information Processing Systems 2024, NeurIPS 2024, Vancouver,
                  BC, Canada, December 10 - 15, 2024},
  year         = {2024},
}

@article{DBLP:journals/tmlr/DasKLMSY23,
  author       = {Abhimanyu Das and
                  Weihao Kong and
                  Andrew Leach and
                  Shaan Mathur and
                  Rajat Sen and
                  Rose Yu},
  title        = {Long-term Forecasting with TiDE: Time-series Dense Encoder},
  journal      = {Trans. Mach. Learn. Res.},
  volume       = {2023},
  year         = {2023},
}

@inproceedings{DBLP:conf/kdd/QiuW0GH025,
  author       = {Xiangfei Qiu and
                  Xingjian Wu and
                  Yan Lin and
                  Chenjuan Guo and
                  Jilin Hu and
                  Bin Yang},
  title        = {{DUET:} Dual Clustering Enhanced Multivariate Time Series Forecasting},
  booktitle    = {Proceedings of the 31st {ACM} {SIGKDD} Conference on Knowledge Discovery
                  and Data Mining, V.1, {KDD} 2025, Toronto, ON, Canada, August 3-7,
                  2025},
  pages        = {1185--1196},
  year         = {2025},
}

@inproceedings{DBLP:conf/nips/ChenLFHFTLY24,
  author       = {Jialin Chen and
                  Jan Eric Lenssen and
                  Aosong Feng and
                  Weihua Hu and
                  Matthias Fey and
                  Leandros Tassiulas and
                  Jure Leskovec and
                  Rex Ying},
  title        = {From Similarity to Superiority: Channel Clustering for Time Series
                  Forecasting},
  booktitle    = {Advances in Neural Information Processing Systems 37: Annual Conference
                  on Neural Information Processing Systems 2024, NeurIPS 2024, Vancouver,
                  BC, Canada, December 10 - 15, 2024},
  year         = {2024},
}

@inproceedings{DBLP:conf/icml/HuZLLLC0XP25,
  author       = {Yifan Hu and
                  Guibin Zhang and
                  Peiyuan Liu and
                  Disen Lan and
                  Naiqi Li and
                  Dawei Cheng and
                  Tao Dai and
                  Shu{-}Tao Xia and
                  Shirui Pan},
  title        = {TimeFilter: Patch-Specific Spatial-Temporal Graph Filtration for Time
                  Series Forecasting},
  booktitle    = {Forty-second International Conference on Machine Learning, {ICML}
                  2025, Vancouver, BC, Canada, July 13-19, 2025},
  year         = {2025},
}

@inproceedings{DBLP:conf/aaai/Jiang0YJCK0FS23,
  author       = {Renhe Jiang and
                  Zhaonan Wang and
                  Jiawei Yong and
                  Puneet Jeph and
                  Quanjun Chen and
                  Yasumasa Kobayashi and
                  Xuan Song and
                  Shintaro Fukushima and
                  Toyotaro Suzumura},
  title        = {Spatio-Temporal Meta-Graph Learning for Traffic Forecasting},
  booktitle    = {Thirty-Seventh {AAAI} Conference on Artificial Intelligence, {AAAI}
                  2023, Thirty-Fifth Conference on Innovative Applications of Artificial
                  Intelligence, {IAAI} 2023, Thirteenth Symposium on Educational Advances
                  in Artificial Intelligence, {EAAI} 2023, Washington, DC, USA, February
                  7-14, 2023},
  pages        = {8078--8086},
  year         = {2023},
}

@inproceedings{DBLP:conf/kdd/Liu0SH22,
  author       = {Dachuan Liu and
                  Jin Wang and
                  Shuo Shang and
                  Peng Han},
  title        = {{MSDR:} Multi-Step Dependency Relation Networks for Spatial Temporal
                  Forecasting},
  booktitle    = {{KDD} '22: The 28th {ACM} {SIGKDD} Conference on Knowledge Discovery
                  and Data Mining, Washington, DC, USA, August 14 - 18, 2022},
  pages        = {1042--1050},
  year         = {2022},
  doi          = {10.1145/3534678.3539397},
}

@inproceedings{DBLP:conf/ijcai/XuMDGW0025,
  author       = {Chujie Xu and
                  Yuqing Ma and
                  Haoyuan Deng and
                  Yajun Gao and
                  Yudie Wang and
                  Kai Lv and
                  Xianglong Liu},
  title        = {Continuous Diffusive Prediction Network for Multi-Station Weather
                  Prediction},
  booktitle    = {Proceedings of the Thirty-Fourth International Joint Conference on
                  Artificial Intelligence, {IJCAI} 2025, Montreal, Canada, August 16-22,
                  2025},
  pages        = {6714--6722},
  year         = {2025},
  doi          = {10.24963/IJCAI.2025/747},
}

@article{DBLP:journals/tkde/ShaoWXWYZYSJCCJC25,
  author       = {Zezhi Shao and
                  Fei Wang and
                  Yongjun Xu and
                  Wei Wei and
                  Chengqing Yu and
                  Zhao Zhang and
                  Di Yao and
                  Tao Sun and
                  Guangyin Jin and
                  Xin Cao and
                  Gao Cong and
                  Christian S. Jensen and
                  Xueqi Cheng},
  title        = {Exploring Progress in Multivariate Time Series Forecasting: Comprehensive
                  Benchmarking and Heterogeneity Analysis},
  journal      = {{IEEE} Trans. Knowl. Data Eng.},
  volume       = {37},
  number       = {1},
  pages        = {291--305},
  year         = {2025},
}

@inproceedings{DBLP:conf/cikm/Tang0X024,
  author       = {Jiabin Tang and
                  Wei Wei and
                  Lianghao Xia and
                  Chao Huang},
  editor       = {Edoardo Serra and
                  Francesca Spezzano},
  title        = {EasyST: {A} Simple Framework for Spatio-Temporal Prediction},
  booktitle    = {Proceedings of the 33rd {ACM} International Conference on Information
                  and Knowledge Management, {CIKM} 2024, Boise, ID, USA, October 21-25,
                  2024},
  pages        = {2220--2229},
  year         = {2024},
}

@inproceedings{DBLP:conf/cvpr/GuenT20a,
  author       = {Vincent Le Guen and
                  Nicolas Thome},
  title        = {Disentangling Physical Dynamics From Unknown Factors for Unsupervised
                  Video Prediction},
  booktitle    = {2020 {IEEE/CVF} Conference on Computer Vision and Pattern Recognition,
                  {CVPR} 2020, Seattle, WA, USA, June 13-19, 2020},
  pages        = {11471--11481},
  publisher    = {Computer Vision Foundation / {IEEE}},
  year         = {2020},
}

@article{DBLP:journals/pami/WangWZGWYL23,
  author       = {Yunbo Wang and
                  Haixu Wu and
                  Jianjin Zhang and
                  Zhifeng Gao and
                  Jianmin Wang and
                  Philip S. Yu and
                  Mingsheng Long},
  title        = {PredRNN: {A} Recurrent Neural Network for Spatiotemporal Predictive
                  Learning},
  journal      = {{IEEE} Trans. Pattern Anal. Mach. Intell.},
  volume       = {45},
  number       = {2},
  pages        = {2208--2225},
  year         = {2023},
}

@inproceedings{DBLP:conf/cvpr/Gao0WL22,
  author       = {Zhangyang Gao and
                  Cheng Tan and
                  Lirong Wu and
                  Stan Z. Li},
  title        = {SimVP: Simpler yet Better Video Prediction},
  booktitle    = {{IEEE/CVF} Conference on Computer Vision and Pattern Recognition,
                  {CVPR} 2022, New Orleans, LA, USA, June 18-24, 2022},
  pages        = {3160--3170},
  publisher    = {{IEEE}},
  year         = {2022},
}

@inproceedings{NEURIPS2024_b48bb4ad,
 author = {Zhou, Peng and Li, Rongwen and Du, Liang},
 booktitle = {Advances in Neural Information Processing Systems},
 pages = {99686--99714},
 title = {Fair Kernel K-Means: from Single Kernel to Multiple Kernel},
 year = {2024}
}

@inproceedings{li2024one,
  title={One-stage fair multi-view spectral clustering},
  author={Li, Rongwen and Hu, Haiyang and Du, Liang and Chen, Jiarong and Jiang, Bingbing and Zhou, Peng},
  booktitle={Proceedings of the 32nd ACM International Conference on Multimedia},
  pages={1407--1416},
  year={2024}
}

\clearpage
\appendix
\makeatletter\setlength{\@fptop}{0pt}\makeatother
\section{Related Work}

\textbf{Spatio-Temporal Forecasting:} ST methods combine temporal modeling with spatial structures, learned relationships, or domain knowledge. Graph-based approaches represent locations as nodes and dependencies as edges~\cite{DBLP:conf/aaai/Jiang0YJCK0FS23,DBLP:conf/kdd/Liu0SH22}. DCRNN~\cite{DBLP:conf/iclr/LiYS018} uses diffusion convolution and recurrent modeling, while STGCN~\cite{DBLP:conf/ijcai/YuYZ18} combines graph convolutions with gated temporal convolutions. Graph WaveNet~\cite{DBLP:conf/ijcai/WuPLJZ19} and MTGNN~\cite{DBLP:conf/kdd/WuPL0CZ20} learn adjacency structures from data, although a graph learned during training need not adapt within each observation window. Weather models further incorporate domain structure: Corrformer~\cite{DBLP:journals/natmi/WuZLW23} organizes correlations hierarchically, and STELLA~\cite{fu2025stella} integrates spatial-temporal knowledge into a lightweight model. SIGMAformer~\cite{kim2026sigmaformer} combines Gaussian-mixture pattern extraction with dynamic spatio-temporal correlation modeling for global station weather forecasting. Our focus is on grouping stations from local evolution at each temporal patch while retaining information from all groups.

\textbf{Multivariate Time Series Forecasting:} MTS methods treat station sequences as channels. Channel-dependent methods model series jointly, whereas channel-independent methods process them separately with shared parameters. Following recurrent and convolutional approaches~\cite{hewage2020temporal,DBLP:conf/nips/SenYD19}, Informer~\cite{DBLP:conf/aaai/ZhouZPZLXZ21} and Autoformer~\cite{DBLP:conf/nips/WuXWL21} improve temporal modeling through sparse attention and decomposition. iTransformer~\cite{DBLP:conf/iclr/LiuHZWWML24} attends over variate tokens to capture cross-channel dependencies. Channel-independent models avoid unnecessary mixing~\cite{DBLP:journals/tmlr/DasKLMSY23,DBLP:conf/iclr/NieNSK23}: DLinear~\cite{DBLP:conf/aaai/ZengCZ023} uses decomposition and linear mappings, while PatchTST~\cite{DBLP:conf/iclr/NieNSK23} applies a shared Transformer to temporal patches. Independent processing, however, does not explicitly exploit other stations' predictive information.

Clustering balances shared modeling and channel specialization. CCM~\cite{DBLP:conf/nips/ChenLFHFTLY24} groups similar channels and learns cluster-specific shared parameters. DUET~\cite{DBLP:conf/kdd/QiuW0GH025} combines temporal distribution clustering with frequency-based channel soft clustering. These mechanisms differ from fully isolated hard partitions. TimeFilter~\cite{DBLP:conf/icml/HuZLLLC0XP25} further adapts dependencies through patch-specific spatio-temporal graph filtration. Across these designs, grouping granularity and information coverage matter: window-shared assignments may obscure local relationship changes, while exclusively intra-group interactions can omit useful global context. STCFormer combines locally updated groups with attention over all regional content centers.

\section{Efficient Implementation of Intra-Cluster Local Attention}
\label{sec:b}
Equation~\eqref{eq:local} describes the hard-partition attention pattern. Our experiments evaluate it with packed local attention, avoiding an explicit $N'\times N'$ mask. Assignments are recomputed at every temporal patch and encoder layer.

\paragraph{Hard routing with a differentiable membership bias.}
Training obtains straight-through assignments $\mathbf H_{\mathrm{ST}}$ using \texttt{F.gumbel\_softmax(log(R), hard=True)}. Let $\kappa_s$ be the selected cluster index and $\mathcal I_k=\{s:\kappa_s=k\}$. The indices and their sorting are nondifferentiable. Crucially, we retain the selected membership $a_s=(H_{\mathrm{ST}})_{s,\kappa_s}$ as a differentiable tensor. For group $k$, gather the state features $\mathbf X_k=\mathbf X^{\mathrm{att}}_{\mathcal I_k,:}$ and the membership vector $\mathbf a_k=(a_s)_{s\in\mathcal I_k}$. Packed attention computes
\begin{equation}
\label{eq:app-packed-local}
\mathbf Y_k=\mathbf X_k+\operatorname{Softmax}\!\left(
\frac{(\mathbf X_k\mathbf W_Q^{\mathrm{loc}})(\mathbf X_k\mathbf W_K^{\mathrm{loc}})^\top}{\sqrt D}
+\mathbf1_{n_k}(\log\mathbf a_k)^\top\right)
(\mathbf X_k\mathbf W_V^{\mathrm{loc}}).
\end{equation}
The logarithmic membership is a key bias, broadcast across queries. Selected memberships equal one in the hard forward pass, so this bias is zero and each token attends to exactly its hard cluster. In the backward pass, the membership bias propagates gradients through the continuous Gumbel--Softmax relaxation to $\mathbf R$, despite nondifferentiable group indexing. At inference, routing uses deterministic $\arg\max_k R_{s,k}$.

\section{Computational Complexity Analysis}
\label{sec:c}
We analyze the current pipeline: patch embedding, dual-view temporal representation, dynamic clustering, intra-cluster attention, station-to-region attention, and a direct prediction head. Let $n=N'=\lceil N/L_s\rceil$ and $m=T'=\lceil T/L_t\rceil$, where $N,T$ are the original station count and history length. Let $D$ be the hidden width and $K\le n$ the number of clusters. Clustering uses one E--M update followed by a final E-step. Costs below are per sample and per block, with a fixed number of heads and feed-forward widths proportional to $D$. For a batch, multiply operation counts by the batch size.

\subsection{Module-wise computation}
\paragraph{Patch embedding and dual views.}
Flattening and linearly projecting the value patches costs $\mathcal O(mnL_tL_sCD)$, where $C$ is the number of observed variables; geographic projection and positional addition contribute separate lightweight embedding operations. The state view applies temporal attention to $m$ tokens for each of the $n$ spatial patches, costing $\mathcal O(nm^2D)$. Temporal differences cost $\mathcal O(mnD)$, and the evolution MLP, attention projections, and feed-forward updates cost $\mathcal O(mnD^2)$ for fixed expansion ratios. The evolution view introduces no additional all-pairs temporal attention.

\paragraph{Dynamic station clustering.}
At each temporal patch, the initial E-step computes $\mathbf X^{\mathrm{clu}}\mathbf P^\top$, the M-step aggregates $\widetilde{\mathbf R}^{\top}\mathbf X^{\mathrm{clu}}$ and blends the result with $\mathbf P$, and the final E-step computes $\mathbf X^{\mathrm{clu}}\mathbf C^\top$. Each matrix product costs $\mathcal O(nKD)$; prototype mixing and normalization are lower-order operations. Thus, the fixed E--M--E sequence over the window costs
\begin{equation}
\mathcal C_{\mathrm{clu}}=\mathcal O(mnKD).
\end{equation}
Sampling or taking the row-wise argmax of the final assignments adds $\mathcal O(mnK)$. No station-to-station distance matrix or reuse of the previous patch's partition is required.

\paragraph{Intra-cluster attention.}
Let $n_{p,k}=|\mathcal I_{p,k}|$ and $S_p=\sum_{k=1}^K n_{p,k}^2$. The packed evaluation in Appendix~\ref{sec:b} has attention-product cost
\begin{equation}
\mathcal C_{\mathrm{loc}}=\mathcal O\!\left(D\sum_{p=1}^{m}S_p\right),
\qquad \frac{n^2}{K}\le S_p\le n^2.
\end{equation}
The lower bound follows from Cauchy--Schwarz and is attained by equal group sizes when divisibility permits. Approximately balanced groups give $S_p\approx n^2/K$; collapse into one group gives $S_p=n^2$. InfoLoss encourages balanced soft masses but does not enforce equal hard group sizes.

\paragraph{Station-to-region global attention.}
Soft-assignment-weighted aggregation $\mathbf R^\top\mathbf X^{\mathrm{loc}}$ costs $\mathcal O(mnKD)$; computing and normalizing the regional soft masses costs $\mathcal O(mnK+mKD)$. Cross-attention between $n$ station queries and $K$ regional content centers costs $\mathcal O(mnKD)$, with an additional $\mathcal O(m(n+K)D^2)$ for projections. These are station-to-region interactions, rather than a $K\times K$ centroid self-attention stage. The centroids used in the E--M updates and the content centers used as attention keys and values are computed separately.

\paragraph{InfoLoss during training.}
At each temporal patch, computing $\boldsymbol\pi_p=n^{-1}\mathbf R_p^\top\mathbf1$ and the sum of row entropies takes $\mathcal O(nK)$ operations. The marginal entropy adds $\mathcal O(K)$, giving
\begin{equation}
\label{eq:app-infoloss-cost}
\mathcal C_{\mathrm{info}}=\mathcal O(mnK).
\end{equation}
No $K\times K$ joint matrix or adjacent-patch pairing is required.
The loss averages over temporal patches within each layer and sums across layers; these reductions do not change the asymptotic cost. This term is absent at inference. Backpropagation through the differentiable clustering updates has the same asymptotic arithmetic order as their forward operations for the fixed E--M--E sequence; retained intermediate activations and the chosen assignment surrogate affect training memory.

\paragraph{Direct prediction head.}
For one target variable, the head in Eq.~\eqref{eq:prediction-head} flattens the $m\times D$ encoded features of each spatial patch. Layer normalization costs $\mathcal O(nmD)$, and the two projections cost $\mathcal O(nmDD_h+nD_h O L_s)$, where $D_h$ is the head's hidden width and $O$ the forecasting horizon. Reshaping, removing padding, and restoring station order add no learned pairwise interaction.

\subsection{Overall cost and memory}
Combining the representation and interaction modules gives
\begin{equation}
\label{eq:complexity_bound}
\mathcal C_{\mathrm{block}}=\mathcal O\!\left(
mnD^2+nm^2D+mnKD+D\sum_{p=1}^{m}S_p\right).
\end{equation}
With approximately balanced groups, the last term becomes $mn^2D/K$; in the worst case it becomes $mn^2D$. Clustering and global attention remain in both expressions and cannot be discarded merely because $K<n$. For $B$ blocks with unchanged dimensions, the backbone cost is $B\mathcal C_{\mathrm{block}}$. If dimensions differ across blocks, their costs must instead be summed explicitly. The full model additionally incurs patch embedding and direct-head costs; training adds InfoLoss wherever it is applied. No layer-wise temporal shrinking or autoregressive decoder is assumed.

Ignoring widths and rounding, the principal terms are
\begin{equation}
\label{eq:app-patch-costs}
\underbrace{\frac{NT^2}{L_sL_t^2}}_{\text{temporal attention}},\qquad
\underbrace{\frac{TN^2}{L_tL_s^2K}}_{\text{balanced local attention}},\qquad
\underbrace{\frac{TNK}{L_tL_s}}_{\text{clustering and global attention}}.
\end{equation}
Increasing $K$ reduces balanced local-attention work but increases clustering, global-attention, and InfoLoss costs. In particular, setting $K$ proportional to $n$ does not make the full block linear in $n$: clustering and global attention remain quadratic, while within-patch InfoLoss costs $\mathcal O(mnK)$ and is quadratic in $n$ at a fixed cluster ratio. Patching reduces the token grid, but these trade-offs and hardware utilization determine practical efficiency.

For conventional materialized attention, score storage is $\mathcal O(nm^2+\sum_pS_p+mnK)$, while token features and assignments require $\mathcal O(mnD+mnK)$. The two E-steps and single M-step retain only a fixed number of clustering intermediates during training. Fused attention kernels can avoid storing the full score matrices without changing the arithmetic orders above. InfoLoss reduces the existing soft assignments into $m$ cluster-marginal vectors and row-entropy sums. These summaries require $\mathcal O(mK+mn)$ storage if materialized together; a conventional elementwise entropy implementation may additionally retain $\mathcal O(mnK)$ intermediates for backward computation. Sequential or fused reductions can reduce forward workspace, with backward storage depending on checkpointing. These are activation and workspace costs, excluding model parameters and optimizer states.

\subsection{Comparison with current baselines}
Table~\ref{com_comp} compares representative dependency-modeling operators from the baselines in our experiments. Hidden widths, head counts, and depth are fixed; each expression covers the operator named in the table, not end-to-end training or forecasting. Embedding, prediction heads, and auxiliary objectives are excluded. This scope makes explicit the distinction between dense station attention, temporal patch attention, compressed global interaction, and dynamically grouped interaction.

\begin{table}[t]
\centering
\small
\setlength{\tabcolsep}{4pt}
\renewcommand{\arraystretch}{1.18}
\caption{Computational complexity of representative dependency-modeling operators among the current baselines. $N,T$ denote stations and history steps. $P_t$ is each baseline's actual temporal-token count (accounting for its patch length and stride), $r$ is Crossformer's router count, $K_{\mathrm{CCM}}$ is CCM's cluster count, and $q,|\mathcal E|$ are DCRNN's diffusion order and graph-edge count. For STCFormer, $n=\lceil N/L_s\rceil$, $m=\lceil T/L_t\rceil$, with spatial and temporal patch sizes $L_s,L_t$. Widths are fixed; the stated scope excludes embeddings, output heads, and training losses.}
\label{com_comp}
\begin{tabular}{@{}p{0.23\linewidth}p{0.35\linewidth}p{0.37\linewidth}@{}}
\toprule
Model & Operator compared & Time complexity \\
\midrule
DCRNN & Diffusion recurrence over $T$ steps & $\mathcal O(Tq(N+|\mathcal E|))$ \\
Corrformer & Multi-correlation & $\mathcal O(NT\log T)$ \\
\midrule
PatchTST & Per-channel temporal attention & $\mathcal O(NP_t^2)$ \\
Crossformer & Temporal attention and routers & $\mathcal O(NP_t^2+NP_t r)$ \\
CCM (DLinear) & Cluster assignment and prototype cross-attention & $\mathcal O(NK_{\mathrm{CCM}})$ \\
DUET & Masked channel-attention fusion & $\mathcal O(N^2)$ \\
TimeFilter & Dense patch-pair graph construction & $\mathcal O(N^2P_t^2)$ \\
\midrule
STCFormer & Dual views, clustering, local--global attention & $\mathcal O(nm^2+mnK+\sum_pS_p)$ \\
\bottomrule
\end{tabular}
\end{table}

DCRNN's sparse diffusion cost depends on graph edges rather than necessarily on $N^2$~\citep{DBLP:conf/iclr/LiYS018}. Corrformer's multi-correlation uses a spatial tree and temporal correlation to obtain linear spatial and log-linear temporal scaling~\citep{DBLP:journals/natmi/WuZLW23}. PatchTST attends independently along each channel's temporal patches~\citep{DBLP:conf/iclr/NieNSK23}, whereas Crossformer additionally uses $r$ routers per temporal segment for cross-channel communication~\citep{DBLP:conf/iclr/ZhangY23}. CCM's assignment and prototype cross-attention cost $\mathcal O(NK_{\mathrm{CCM}})$ at fixed hidden width~\citep{DBLP:conf/nips/ChenLFHFTLY24}. We use DLinear as its backbone; the displayed term excludes that backbone and the cluster-aware prediction heads, and is not an end-to-end cost.

DUET's displayed $\mathcal O(N^2)$ term concerns fusion after temporal extraction and channel-mask construction; its frequency transform and learned distance metric incur additional costs~\citep{DBLP:conf/kdd/QiuW0GH025}. TimeFilter constructs pairwise scores over $NP_t$ patch tokens before neighbor selection and filtration, so a sparse final graph alone does not remove this construction cost~\citep{DBLP:conf/icml/HuZLLLC0XP25}. For STCFormer, substitute $\sum_pS_p\approx mn^2/K$ for balanced groups or $\sum_pS_p\le mn^2$ for the worst case. The matched runtime and memory measurements in Sections~\ref{sec:efficiency_analysis} and~\ref{app:additional_efficiency} complement these asymptotic comparisons and show STCFormer's favorable accuracy--resource trade-off.
\FloatBarrier

\section{Proofs for Cluster-Conditioned Local Attention}
\label{app:cluster-conditioned-proofs}

\subsection{Notation and Preliminaries}
\label{app:local-preliminaries}
We analyze the single-head residual local-attention stage in Eq.~\eqref{eq:local}, equivalently evaluated by the packed implementation in Appendix~\ref{sec:b}. The evolution view determines the partition and the state view supplies the attention content. Holding the realized partition and projection weights fixed isolates how restricted interaction changes the sensitivity bound relative to dense attention.

Feature perturbations use the Frobenius norm; weights use the spectral norm. The convex domain is $\Omega_{R_X}=\{U\in\mathbb R^{n\times D}:\max_s\|u_s\|_2\le R_X\}$. For a mapping $f$, define
\begin{equation}
\label{eq:app-lipschitz-definition}
\operatorname{Lip}_F(f)=\sup_{\substack{U,\widetilde U\in\Omega_{R_X}\\U\ne\widetilde U}}
\frac{\|f(U)-f(\widetilde U)\|_F}{\|U-\widetilde U\|_F}.
\end{equation}
Integration of the derivative along the segment joining two inputs gives
\begin{equation}
\label{eq:app-jacobian-lipschitz}
\operatorname{Lip}_F(f)\le\sup_{U\in\Omega_{R_X}}\|\mathrm Df(U)\|_{F\to F},
\qquad
\|\mathrm Df(U)\|_{F\to F}=\sup_{E\ne0}\frac{\|\mathrm Df(U)[E]\|_F}{\|E\|_F},
\end{equation}
for mappings smooth on a neighborhood of the domain. Also, $\operatorname{Lip}_F(\operatorname{Id}+f)\le1+\operatorname{Lip}_F(f)$. The hard assignment matrix $H$ has one nonzero entry per row; empty clusters are permitted and produce no output rows. We use natural logarithms, $0\log0=0$, and $1\le K\le n$.

\subsection{Proof of Theorem~\ref{thm:local-partition-lipschitz}}
\label{app:local-partition-proof}
Write the residual operator and its attention part as
\begin{equation}
\label{eq:app-local-operator}
\begin{gathered}
\mathcal A_H(U)=U+\Phi_H(U),\qquad
\Phi_H(U)=\operatorname{Softmax}\!\left(UAU^\top+\log(HH^\top)\right)UW_V^{\mathrm{loc}},\\
A=\frac{W_Q^{\mathrm{loc}}(W_K^{\mathrm{loc}})^\top}{\sqrt D},
\qquad v=\|W_V^{\mathrm{loc}}\|_2,\qquad\beta=R_X^2\|A\|_2.
\end{gathered}
\end{equation}
Here $\log0=-\infty$ implements the fixed mask. Every attention row contains its self-entry. Dense attention corresponds to a single group containing all $n$ tokens.

\begin{proof}
\textbf{Step 1: a single attention block.}\quad
Let $\mathcal I$ be a nonempty cluster of size $q$. Write token rows as columns $a_i=u_i^\top$ and set $V=(W_V^{\mathrm{loc}})^\top$. Its probabilities, weighted mean, and output are
\begin{equation}
\label{eq:app-attention-probabilities}
p_{ij}=\frac{\exp(a_i^\top A a_j)}{\sum_{\ell\in\mathcal I}\exp(a_i^\top A a_\ell)},
\qquad\mu_i=\sum_{j\in\mathcal I}p_{ij}a_j,
\qquad g_i=V\mu_i.
\end{equation}
The row-norm assumption implies $|a_i^\top A a_j|\le\beta$. Consequently,
\begin{equation}
\label{eq:app-column-mass}
p_{ij}\le\frac{e^{2\beta}}{q-1+e^{2\beta}},\qquad
\sum_{i\in\mathcal I}p_{ij}\le s_\beta(q):=\frac{q e^{2\beta}}{q-1+e^{2\beta}}\le\min\{q,e^{2\beta}\}.
\end{equation}
The probability bound follows by placing its numerator at the largest admissible logit and the other $q-1$ logits at the smallest admissible value. It sharpens the generic column-sum bound $q$ by accounting for bounded logits, consistent with the bounded-domain perspective of \citet{castin2024smooth}.

For a perturbation with token columns $e_i$, differentiating the bilinear logits and softmax gives
\begin{equation}
\label{eq:app-derivative-decomposition}
\begin{aligned}
\mathrm Dg_i[E]&=V\bigl(T_i^{\mathrm{val}}+T_i^{\mathrm{qry}}+T_i^{\mathrm{key}}\bigr),\\
T_i^{\mathrm{val}}&=\sum_jp_{ij}e_j,\qquad
T_i^{\mathrm{qry}}=C_iA^\top e_i,\\
T_i^{\mathrm{key}}&=\sum_jp_{ij}(a_i^\top A e_j)(a_j-\mu_i),
\qquad C_i=\sum_jp_{ij}(a_j-\mu_i)(a_j-\mu_i)^\top.
\end{aligned}
\end{equation}
All sums in this step are over $\mathcal I$. The covariance matrix satisfies
\begin{equation}
\label{eq:app-covariance-bound}
\|C_i\|_2\le\operatorname{tr}(C_i)
=\sum_jp_{ij}\|a_j\|_2^2-\|\mu_i\|_2^2\le R_X^2.
\end{equation}
Let $T^{\mathrm{val}},T^{\mathrm{qry}},T^{\mathrm{key}}$ collect the corresponding row contributions. Jensen's inequality, Eq.~\eqref{eq:app-column-mass}, and weighted Cauchy--Schwarz imply
\begin{equation}
\label{eq:app-value-query-bounds}
\begin{aligned}
\|T^{\mathrm{val}}\|_F^2
&\le\sum_{i,j}p_{ij}\|e_j\|_2^2
\le s_\beta(q)\|E_{\mathcal I}\|_F^2,\\
\|T^{\mathrm{qry}}\|_F^2
&\le R_X^4\|A\|_2^2\sum_i\|e_i\|_2^2
=\beta^2\|E_{\mathcal I}\|_F^2,\\
\|T^{\mathrm{key}}\|_F^2
&\le\sum_i\left(\sum_jp_{ij}(a_i^\top A e_j)^2\right)
\left(\sum_jp_{ij}\|a_j-\mu_i\|_2^2\right)\\
&\le R_X^4\|A\|_2^2\sum_{i,j}p_{ij}\|e_j\|_2^2
\le\beta^2s_\beta(q)\|E_{\mathcal I}\|_F^2.
\end{aligned}
\end{equation}
In the key term, $|a_i^\top A e_j|\le R_X\|A\|_2\|e_j\|_2$ and the second factor is at most $R_X^2$. Thus the non-residual block obeys
\begin{equation}
\label{eq:app-block-derivative}
\|\mathrm D\Phi_{\mathcal I}(U_{\mathcal I})[E_{\mathcal I}]\|_F
\le v\left[\beta+(1+\beta)\sqrt{s_\beta(q)}\right]\|E_{\mathcal I}\|_F.
\end{equation}
Equation~\eqref{eq:app-jacobian-lipschitz} therefore gives the same bound for $\operatorname{Lip}_F(\Phi_{\mathcal I})$.

\par\smallskip\noindent\textbf{Step 2: disjoint groups and the dense comparison.}\quad
Let $q_k=|\mathcal I_k|$ and $m(H)=\max_kq_k$. Reordering tokens by their fixed groups preserves Frobenius norms and makes the mapping block separable. Hence
\begin{equation}
\label{eq:app-parallel-composition}
\begin{aligned}
\|\Phi_H(U)-\Phi_H(\widetilde U)\|_F^2
&=\sum_k\|\Phi_{\mathcal I_k}(U_{\mathcal I_k})-\Phi_{\mathcal I_k}(\widetilde U_{\mathcal I_k})\|_F^2\\
&\le\left(\max_k\operatorname{Lip}_F(\Phi_{\mathcal I_k})\right)^2\|U-\widetilde U\|_F^2.
\end{aligned}
\end{equation}
The maximum is over nonempty clusters. For real $q\ge1$,
\begin{equation}
\label{eq:app-size-monotonicity}
s_\beta'(q)=\frac{e^{2\beta}(e^{2\beta}-1)}{(q-1+e^{2\beta})^2}\ge0.
\end{equation}
Adding the residual and applying monotonicity gives
\begin{equation}
\label{eq:app-fixed-partition-bound}
\operatorname{Lip}_F(\mathcal A_H)
\le1+v\left[\beta+(1+\beta)\sqrt{s_\beta(m(H))}\right]=B(m(H))\le B(n).
\end{equation}
Applying exactly the same single-block calculation to $q=n$ establishes $\operatorname{Lip}_F(\mathcal A_{\mathrm{dense}})\le B(n)$. Defining $B_{\mathrm{ours}}(H)=B(m(H))$ and $B_{\mathrm{dense}}=B(n)$ gives Eq.~\eqref{eq:local-partition-bound}. Throughout this appendix, $B(\cdot)$ denotes the derived upper bound rather than the exact Lipschitz constant in Eq.~\eqref{eq:app-lipschitz-definition}.
\end{proof}

For $v>0$, $\beta>0$, and $m(H)<n$, the comparison is strict: $B(m(H))<B(n)$. Since $s_\beta(q)\le q$, $1+v[\beta+(1+\beta)\sqrt q]$ is also valid, while the bounded-logit expression is tighter because $s_\beta(q)$ saturates at $e^{2\beta}$. The numerical diagnostics in Appendix~\ref{app:theory_validation} confirm that the predicted cluster-size dependence is visible in trained models.

\subsection{Proof of Theorem~\ref{cor:mi-partition-bound}}
\label{app:mi-sampling-proof}
\begin{proof}
Fix a temporal patch and condition on its pre-draw soft assignments $R=(r_{s,k})$. Sample token labels independently with $\Pr(\kappa_s=k\mid R)=r_{s,k}$. This is the hard forward distribution of Gumbel-Max, $\kappa_s=\arg\max_k(\log r_{s,k}+g_{s,k})$, with fresh independent standard Gumbel noises across tokens and labels. Let $h_{s,k}=\mathbf1\{\kappa_s=k\}$ and $M_k=\sum_sh_{s,k}$. The marginal and conditional entropy of the variables $X,Y$ in the main text are
\begin{equation}
\label{eq:app-joint-construction}
\pi_k=\frac1n\sum_s r_{s,k},\qquad
\bar h=H(Y\mid X)=\frac1n\sum_sH(r_{s,:}).
\end{equation}
Writing $d=\operatorname{KL}(\pi\|u_K)$, the information gap decomposes as
\begin{equation}
\label{eq:app-loss-gap}
\varepsilon=\log K-I(X;Y)=d+\bar h.
\end{equation}
Both terms are nonnegative. Pinsker's inequality with natural logarithms gives
\begin{equation}
\label{eq:app-soft-balance}
\max_k\pi_k\le\frac1K+\frac12\|\pi-u_K\|_1
\le\frac1K+\sqrt{\frac d2}.
\end{equation}
Next, center the count vector by $Z_k=M_k-n\pi_k$. Independence across tokens and $1-x\le-\log x$ imply
\begin{equation}
\label{eq:app-uncertainty-sum}
\begin{aligned}
\mathbb E[\|Z\|_2^2\mid R]
&=\sum_k\operatorname{Var}(M_k\mid R)
=\sum_{s,k}r_{s,k}(1-r_{s,k})\\
&\le-\sum_{s,k}r_{s,k}\log r_{s,k}=n\bar h.
\end{aligned}
\end{equation}
Zero entries are interpreted by continuity. No independence between the cluster counts is required: the squared Euclidean norm is a sum of coordinate squares without cross terms. For $\bar h>0$, Markov's inequality yields
\begin{equation}
\label{eq:app-count-vector-concentration}
\Pr\!\left(\|Z\|_2>\sqrt{\frac{n\bar h}{\delta}}\;\middle|\;R\right)\le\delta.
\end{equation}
If $\bar h=0$, every assignment is deterministic and $Z=0$ almost surely, so the same event bound holds. Combining this event with Eq.~\eqref{eq:app-soft-balance}, and then applying Cauchy--Schwarz, gives
\begin{equation}
\label{eq:app-mi-probability-derivation}
\begin{aligned}
m(H)&=\max_k(n\pi_k+Z_k)
\le\frac nK+n\sqrt{\frac d2}+\sqrt{\frac{n\bar h}{\delta}}\\
&\le\frac nK+\sqrt{\left(\frac{n^2}{2}+\frac n\delta\right)(d+\bar h)}\\
&=\frac nK+\sqrt{\left(\frac{n^2}{2}+\frac n\delta\right)(\log K-I(X;Y))}.
\end{aligned}
\end{equation}
Using $m(H)\le n$ proves $m(H)\le M_\delta(I(X;Y))$ with conditional probability at least $1-\delta$. Since $M_\delta(I(X;Y))\ge n/K\ge1$, monotonicity of $B$ and Theorem~\ref{thm:local-partition-lipschitz} give, on the same event,
\begin{equation}
\label{eq:app-mi-probability-sensitivity}
\operatorname{Lip}_F(\mathcal A_H)
\le B_{\mathrm{ours}}(H)
\le B\!\left(M_\delta(I(X;Y))\right).
\end{equation}
No additional failure probability is needed for the sensitivity statement, because the structural theorem holds for every realized partition.
\end{proof}

At fixed $n,K,v,\beta,\delta$, increasing $I(X;Y)$ tightens or preserves the certificate. At its maximum $I(X;Y)=\log K$, the assignments are deterministic with a uniform marginal, yielding groups of size $n/K$ whenever divisibility permits. Thus, InfoLoss directly favors the assignment regime associated with the tightest certificate.

\subsection{Deterministic argmax grouping at inference}
\label{app:mi-argmax-proof}
For row-wise argmax $H(R)$ with any fixed tie-breaking rule, a separate deterministic bound is
\begin{equation}
\label{eq:app-argmax-certificate}
\begin{aligned}
m(H(R))&\le M_{\mathrm A}(\varepsilon):=\min\!\left\{n,\;n\left[\frac1K+\sqrt{\frac\varepsilon2}+\varepsilon\right]\right\},\\
\operatorname{Lip}_F(\mathcal A_{H(R)})&\le B(M_{\mathrm A}(\varepsilon)),\qquad
\varepsilon=\log K-I(X;Y).
\end{aligned}
\end{equation}
\begin{proof}
Retain $\pi,\bar h,\varepsilon$ from Eqs.~\eqref{eq:app-joint-construction}--\eqref{eq:app-loss-gap}. Let $h_{s,:}$ be the one-hot row obtained by argmax, and write $a_s=\max_kr_{s,k}$. For each coordinate,
\begin{equation}
\label{eq:app-argmax-entropy}
|h_{s,k}-r_{s,k}|\le1-a_s\le-\log a_s\le H(r_{s,:}).
\end{equation}
The first inequality holds for the selected coordinate and every other coordinate, including ties. The last follows from $-\log r_{s,k}\ge-\log a_s$ for all positive entries. Thus, for $M_k=\sum_sh_{s,k}$, Eq.~\eqref{eq:app-soft-balance} and $d,\bar h\le\varepsilon$ give
\begin{equation}
\label{eq:app-argmax-size}
\frac{M_k}{n}\le\pi_k+\frac1n\sum_s|h_{s,k}-r_{s,k}|
\le\frac1K+\sqrt{\frac\varepsilon2}+\bar h
\le\frac1K+\sqrt{\frac\varepsilon2}+\varepsilon.
\end{equation}
Taking the maximum and using $m(H(R))\le n$ proves the size bound. Since $M_{\mathrm A}(\varepsilon)\ge n/K\ge1$, Theorem~\ref{thm:local-partition-lipschitz} proves the sensitivity bound. No sampling assumption or failure probability is needed.
\end{proof}

\subsection{Patching, multiple heads, and simultaneous bounds}
\label{app:patching-scope}
At fixed $K,v,\beta,\delta$ and information gap, the dense ceiling $B(n)$ and the information-controlled size bounds are nondecreasing in $n$. Spatial patching replaces $N$ with $n=\lceil N/L_s\rceil$, and approximately balanced groups reduce the maximum interacting set to about $n/K$. This gives spatial patching and clustering complementary roles in controlling the derived bound.

The deterministic bound in Appendix~\ref{app:mi-argmax-proof} applies separately to every temporal patch. For a simultaneous sampled bound over $T'\ge1$ patches, put $\varepsilon_* =\max_p\varepsilon_p$. Applying Theorem~\ref{cor:mi-partition-bound} with failure probability $\delta/T'$ per patch and taking a union bound replaces $n/\delta$ by $nT'/\delta$ and the information gap by $\varepsilon_*$ in Eq.~\eqref{eq:mi-size-envelope}. No independence between temporal patches is required, provided each analyzed patch uses independent token-wise sampling conditional on its pre-draw soft assignments. Temporal patching changes the confidence factor, not the size of a single spatial-attention block.

For layer $\ell$, let $\overline{\mathcal L}_{\mathrm{info}}^{(\ell)}=-T'^{-1}\sum_p I_p^{(\ell)}$. Then $\sum_p\varepsilon_p^{(\ell)}=T'(\overline{\mathcal L}_{\mathrm{info}}^{(\ell)}+\log K_\ell)$, and Eq.~\eqref{eq:mi-loss} sums these layer-wise averages. The training objective therefore reduces the aggregate information gap across layers and temporal patches while remaining invariant to cluster-label permutations, allowing groups to adapt locally over time.

The block argument also extends to multiple heads sharing a partition. If head $h$ has non-residual bound $b_h(q)=v_h[\beta_h+(1+\beta_h)\sqrt{s_{\beta_h}(q)}]$ and the concatenated outputs are projected by $W_O$, the corresponding residual multi-head bound is
\begin{equation}
\label{eq:app-multihead-bound}
\operatorname{Lip}_F(\mathcal A_H^{\mathrm{MH}})
\le1+\|W_O\|_2\left(\sum_h b_h(m(H))^2\right)^{1/2}.
\end{equation}
This follows by summing squared head-output differences before applying the output projection. Normalization, feed-forward layers, regional attention, and the prediction head require their own bounds when analyzing a larger composition.

\section{Properties of InfoLoss}
\label{sec:infoloss-properties}
InfoLoss maximizes mutual information between station-token data and cluster assignments separately within each temporal patch. At patch $p$, a token is drawn uniformly from the $n=N'$ data items and its cluster is sampled according to the corresponding row of $\mathbf R_p$. The empirical conditional distribution is therefore $\Pr(Y_p=k\mid X_p=\mathbf x^{\mathrm{clu}}_{p,s})=R_{p,s,k}$, and the marginal is $\pi_{p,k}=n^{-1}\sum_sR_{p,s,k}$. The following properties concern the soft assignments; hard groups are obtained by sampling or argmax as described in Section~\ref{sec:sc-section}.

\paragraph{Cluster balance.}
The marginal entropy satisfies $H(Y_p)=\log K-\operatorname{KL}(\boldsymbol\pi_p\|u_K)$, where $u_K$ is uniform. Maximizing this entropy encourages balanced soft cluster masses and discourages collapse into a few clusters. This equal-mass preference discourages cluster collapse and promotes broad use of the available regional units.

\paragraph{Assignment confidence.}
The conditional entropy is exactly the average row entropy,
\begin{equation}
H(Y_p\mid X_p)=\frac1n\sum_sH(\mathbf R_{p,s,:}).
\end{equation}
Minimizing it favors confident, near-deterministic token-to-cluster assignments. Together with evolution-driven centroids, this produces clear and input-adaptive regional memberships.

\paragraph{Joint effect and temporal scope.}
For $\ell_p=-I(X_p;Y_p)$, the gap to the lower bound $-\log K$ is
\begin{equation}
\label{eq:app-info-properties-gap}
\varepsilon_p=\ell_p+\log K
=H(Y_p\mid X_p)+\operatorname{KL}(\boldsymbol\pi_p\|u_K).
\end{equation}
Both terms are nonnegative, so a small gap jointly promotes confident assignments and balanced cluster usage. Theorem~\ref{cor:mi-partition-bound} translates this gap into a probabilistic upper bound on the largest sampled group and hence into the fixed-partition Lipschitz upper bound of Theorem~\ref{thm:local-partition-lipschitz}. Appendix~\ref{app:mi-argmax-proof} provides the deterministic argmax counterpart. A balanced deterministic partition attains zero when divisibility permits. Because the loss is evaluated independently at each temporal patch and is invariant to cluster-label permutations, it supports the locally adaptive regrouping central to STCFormer.

% Detailed setup moved from the former main-text Sections 4.1 and 4.2.
\section{Experimental Setup Details}
\label{sec:setting}
\label{app:experimental_setup}

\subsection{Datasets and Preprocessing}
\label{app:setup_datasets}
\begin{table}[!htbp]
\centering
\captionsetup{font=small}
\caption{Statistics of the processed weather datasets. Each variable defines a forecasting task.}
\label{tab:dataset_statistics}
\scriptsize
\setlength{\tabcolsep}{2.2pt}
\renewcommand{\arraystretch}{1.10}
\begin{tabular*}{0.78\textwidth}{@{\extracolsep{\fill}}lccc@{}}
\toprule
Statistic & French & Hunan & Global \\
\midrule
Stations & 234 & 3,570 & 3,850 \\
Time steps & 8,784 & 4,368 & 17,544 \\
Start & \shortstack{2016-01-01\\00:00} & \shortstack{2023-04-01\\09:00} & \shortstack{2019-01-01\\00:00} \\
End & \shortstack{2016-12-31\\23:00} & \shortstack{2023-09-30\\08:00} & \shortstack{2020-12-31\\23:00} \\
Interval & 1 hour & 1 hour & 1 hour \\
Variables & \shortstack{Temp., U-wind,\\V-wind} & \shortstack{Temp., U-wind,\\V-wind} & \shortstack{Temp.,\\wind speed} \\
\bottomrule
\end{tabular*}
\end{table}

The \textbf{French dataset}\footnote{https://github.com/meteofrance/meteonet} comprises hourly temperature, u-wind, and v-wind variables collected from 234 meteorological stations, spanning from January 1, 2016, to December 31, 2016. This dataset is derived from the MeteoNet dataset.
The \textbf{Hunan dataset} contains hourly temperature, u-wind, and v-wind data collected from 3,570 meteorological stations, covering the period from April 1, 2023, to September 30, 2023. This dataset is provided by the Hunan Meteorological Bureau.
The \textbf{Global dataset} includes hourly temperature and wind speed variables collected from 3,850 meteorological stations worldwide, spanning from January 1, 2019, to December 31, 2020. This dataset was curated by Corrformer~\cite{DBLP:journals/natmi/WuZLW23}.
For all datasets, we partition the data into training, validation, and test sets with a ratio of 6:2:2. We apply Z-score normalization using the statistics of the training set.

Table~\ref{tab:dataset_statistics} summarizes the processed datasets, comprising eight separate dataset--variable forecasting tasks. Station counts refer to the original stations before spatial patching. The chronological split uses $\lfloor 0.2T\rfloor$ time steps for each of validation and testing, with the remainder used for training, where $T$ is the total sequence length.

\subsection{Baselines}
\label{app:setup_baselines}
In our experiments, we compare STCFormer against 15 baselines in two categories. The six spatio-temporal methods are DCRNN~\cite{DBLP:conf/iclr/LiYS018}, Corrformer~\cite{DBLP:journals/natmi/WuZLW23}, EasyST~\cite{DBLP:conf/cikm/Tang0X024}, CDPNet~\cite{DBLP:conf/ijcai/XuMDGW0025}, STELLA~\cite{fu2025stella}, and S$^2$Transformer~\cite{chen2026s2transformer}. The nine multivariate time series methods are DLinear~\cite{DBLP:conf/aaai/ZengCZ023}, PatchTST~\cite{DBLP:conf/iclr/NieNSK23}, Crossformer~\cite{DBLP:conf/iclr/ZhangY23}, TimeXer~\cite{DBLP:conf/nips/WangWDQZLQWL24}, CCM~\cite{DBLP:conf/nips/ChenLFHFTLY24}, DUET~\cite{DBLP:conf/kdd/QiuW0GH025}, TimeFilter~\cite{DBLP:conf/icml/HuZLLLC0XP25}, TQNet~\cite{lin2025tqnet}, and xPatch~\cite{stitsyuk2025xpatch}. CCM uses DLinear as its forecasting backbone and learns cluster-specific shared parameters; tables abbreviate this combination as CCM. These baselines cover graph-based spatial modeling, station-specific forecasting, and multivariate temporal modeling.

\subsection{Implementation and Hyperparameters}
\label{app:setup_implementation}
\paragraph{Training and evaluation.}
All experiments are implemented with PyTorch and BasicTS~\cite{DBLP:journals/tkde/ShaoWXWYZYSJCCJC25} on a server equipped with an NVIDIA A100 GPU with 40\,GB of memory. We use the past 48 hours to predict the next 24 and 72 hours, with MSE as the training loss. The maximum number of training epochs is 50, with an early-stopping patience of 5. All models use Adam with an initial learning rate of $10^{-4}$, a weight decay of $10^{-4}$, and a cosine annealing learning rate scheduler. For TimeFilter on Hunan and Global, we use a temporal patch length of 24 and 30 training epochs; the French configuration is unchanged. All reported experimental results, including hyperparameter sweeps, are means of three runs; tables also report standard deviations where available. Efficiency benchmarks use batch size 8 for every method. The main text reports 24-hour MSE. Additional forecasting metrics and 72-hour results are provided in Appendix~\ref{sec:additional_results}, and additional ablation metrics in Appendix~\ref{sec:additional_ablation}.

\paragraph{Evaluation metrics.}
MSE and MAE are computed after inverse normalization, in the original data space. For each dataset's target-variable task, SEDI uses one fixed threshold shared by all stations: the 95th percentile of the training-period observations pooled across all stations. The threshold is then held fixed for all methods and evaluated horizons. For each run, we pool all test samples into a single contingency table and compute SEDI once, rather than averaging station-wise scores. Let $h=\mathrm{TP}/(\mathrm{TP}+\mathrm{FN})$ and $f=\mathrm{FP}/(\mathrm{FP}+\mathrm{TN})$ denote the pooled hit and false-alarm rates. Then $\mathrm{SEDI}=[\log f-\log h-\log(1-f)+\log(1-h)]/[\log f+\log h+\log(1-f)+\log(1-h)]$, with larger values indicating better extreme-event discrimination. Reported SEDI values summarize the three run-level scores.\paragraph{Model configuration.}
The common settings across datasets are an embedding dimension of 256, an FFN dimension of 512, a dropout rate of 0.1, and four attention heads. The temporal and spatial patch sizes $(L_t,L_s)$ are $(4,2)$ for French, $(4,7)$ for Hunan, and $(6,7)$ for Global. STCFormer uses three layers, a clustering ratio of 0.05, and an InfoLoss weight of $\lambda=0.1$ on all datasets. InfoLoss is the negative within-patch mutual information in Eq.~\eqref{eq:mi-loss}, computed at every layer, averaged over temporal patches, and summed across layers; no additional entropy coefficient is used. The layer counts of the baselines follow their original papers. For models requiring a predefined graph, we construct a 20-nearest-neighbor spatial adjacency matrix using station latitude and longitude.

% Generated from Feishu main revision 310 and ablation revision 193.
% STCFormer denotes the STCFormerV4 results in the source tables.
% Corrformer / Hunan Temp. / 72h / MAE SD corrected to 0.245 with author confirmation.
\section{Additional Forecasting Results}
\label{sec:additional_results}
Tables~\ref{tab:results_mae_24}--\ref{tab:results_sedi_72} report 24-hour MAE and SEDI and all 72-hour metrics under the main experimental setup. Across the complete 48 comparisons, STCFormer ranks first in 42 and second in five, demonstrating consistently strong performance across metrics and horizons.

\begin{table}[!htbp]
\centering
\caption{24-hour forecasting results (MAE, $\downarrow$) with a 48-hour look-back window. Values are mean $\pm$ standard deviation over three runs. Bold and underlined means indicate the best and second-best reported values in each column; ties share the same marking.}
\label{tab:results_mae_24}
\begingroup
\fontsize{8}{9.5}\selectfont
\setlength{\tabcolsep}{0.8pt}
\renewcommand{\arraystretch}{1.10}
\begin{tabular*}{\textwidth}{@{\extracolsep{\fill}}l*{8}{c}@{}}
\toprule
\multirow{2}{*}{\textbf{Method}} & \multicolumn{3}{c}{\textbf{French}} & \multicolumn{3}{c}{\textbf{Hunan}} & \multicolumn{2}{c}{\textbf{Global}} \\
\cmidrule(lr){2-4}\cmidrule(lr){5-7}\cmidrule(lr){8-9}
 & Temp. & U-Wind & V-Wind & Temp. & U-Wind & V-Wind & Temp. & Wind \\
\midrule
\multicolumn{9}{l}{\textit{Spatio-Temporal Methods}} \\
DCRNN & \underline{1.894}{\fontsize{6}{7}\selectfont\textpm0.017} & 1.791{\fontsize{6}{7}\selectfont\textpm0.040} & 2.157{\fontsize{6}{7}\selectfont\textpm0.025} & 1.558{\fontsize{6}{7}\selectfont\textpm0.456} & 0.654{\fontsize{6}{7}\selectfont\textpm0.005} & 0.783{\fontsize{6}{7}\selectfont\textpm0.009} & 1.941{\fontsize{6}{7}\selectfont\textpm0.023} & \underline{1.282}{\fontsize{6}{7}\selectfont\textpm0.000} \\
Corrformer & 2.121{\fontsize{6}{7}\selectfont\textpm0.018} & 1.540{\fontsize{6}{7}\selectfont\textpm0.016} & 1.794{\fontsize{6}{7}\selectfont\textpm0.005} & 1.662{\fontsize{6}{7}\selectfont\textpm0.126} & 0.675{\fontsize{6}{7}\selectfont\textpm0.001} & 0.756{\fontsize{6}{7}\selectfont\textpm0.002} & 1.888{\fontsize{6}{7}\selectfont\textpm0.004} & 1.304{\fontsize{6}{7}\selectfont\textpm0.011} \\
EasyST & 2.138{\fontsize{6}{7}\selectfont\textpm0.100} & 1.554{\fontsize{6}{7}\selectfont\textpm0.023} & 1.673{\fontsize{6}{7}\selectfont\textpm0.000} & 1.829{\fontsize{6}{7}\selectfont\textpm0.230} & 0.683{\fontsize{6}{7}\selectfont\textpm0.012} & 0.762{\fontsize{6}{7}\selectfont\textpm0.012} & 1.843{\fontsize{6}{7}\selectfont\textpm0.003} & 1.289{\fontsize{6}{7}\selectfont\textpm0.002} \\
CDPNet & 2.083{\fontsize{6}{7}\selectfont\textpm0.022} & 1.502{\fontsize{6}{7}\selectfont\textpm0.004} & 1.535{\fontsize{6}{7}\selectfont\textpm0.007} & 1.696{\fontsize{6}{7}\selectfont\textpm0.009} & 0.691{\fontsize{6}{7}\selectfont\textpm0.000} & 0.789{\fontsize{6}{7}\selectfont\textpm0.004} & 1.843{\fontsize{6}{7}\selectfont\textpm0.001} & 1.290{\fontsize{6}{7}\selectfont\textpm0.002} \\
STELLA & 4.099{\fontsize{6}{7}\selectfont\textpm0.291} & 2.438{\fontsize{6}{7}\selectfont\textpm0.166} & 2.336{\fontsize{6}{7}\selectfont\textpm0.155} & 3.674{\fontsize{6}{7}\selectfont\textpm0.170} & 1.038{\fontsize{6}{7}\selectfont\textpm0.022} & 1.304{\fontsize{6}{7}\selectfont\textpm0.057} & 1.966{\fontsize{6}{7}\selectfont\textpm0.002} & 1.315{\fontsize{6}{7}\selectfont\textpm0.000} \\
S$^2$Transformer & 2.141{\fontsize{6}{7}\selectfont\textpm0.016} & 1.484{\fontsize{6}{7}\selectfont\textpm0.030} & 1.567{\fontsize{6}{7}\selectfont\textpm0.057} & 1.955{\fontsize{6}{7}\selectfont\textpm0.099} & 0.689{\fontsize{6}{7}\selectfont\textpm0.001} & 0.805{\fontsize{6}{7}\selectfont\textpm0.001} & \underline{1.802}{\fontsize{6}{7}\selectfont\textpm0.001} & 1.290{\fontsize{6}{7}\selectfont\textpm0.000} \\
\midrule
\multicolumn{9}{l}{\textit{Multivariate Time Series Methods}} \\
DLinear & 1.964{\fontsize{6}{7}\selectfont\textpm0.003} & 1.519{\fontsize{6}{7}\selectfont\textpm0.001} & 1.575{\fontsize{6}{7}\selectfont\textpm0.000} & 1.583{\fontsize{6}{7}\selectfont\textpm0.002} & 0.643{\fontsize{6}{7}\selectfont\textpm0.000} & 0.723{\fontsize{6}{7}\selectfont\textpm0.000} & 2.062{\fontsize{6}{7}\selectfont\textpm0.000} & 1.392{\fontsize{6}{7}\selectfont\textpm0.000} \\
PatchTST & 1.971{\fontsize{6}{7}\selectfont\textpm0.006} & 1.515{\fontsize{6}{7}\selectfont\textpm0.003} & 1.608{\fontsize{6}{7}\selectfont\textpm0.002} & 1.577{\fontsize{6}{7}\selectfont\textpm0.005} & 0.652{\fontsize{6}{7}\selectfont\textpm0.001} & 0.725{\fontsize{6}{7}\selectfont\textpm0.001} & 1.918{\fontsize{6}{7}\selectfont\textpm0.000} & 1.380{\fontsize{6}{7}\selectfont\textpm0.001} \\
Crossformer & 2.111{\fontsize{6}{7}\selectfont\textpm0.005} & \underline{1.408}{\fontsize{6}{7}\selectfont\textpm0.000} & 1.520{\fontsize{6}{7}\selectfont\textpm0.004} & 1.634{\fontsize{6}{7}\selectfont\textpm0.015} & 0.659{\fontsize{6}{7}\selectfont\textpm0.003} & 0.750{\fontsize{6}{7}\selectfont\textpm0.003} & 1.903{\fontsize{6}{7}\selectfont\textpm0.007} & 1.299{\fontsize{6}{7}\selectfont\textpm0.003} \\
TimeXer & 1.907{\fontsize{6}{7}\selectfont\textpm0.025} & 1.445{\fontsize{6}{7}\selectfont\textpm0.008} & 1.551{\fontsize{6}{7}\selectfont\textpm0.001} & 1.634{\fontsize{6}{7}\selectfont\textpm0.026} & 0.659{\fontsize{6}{7}\selectfont\textpm0.001} & 0.730{\fontsize{6}{7}\selectfont\textpm0.000} & 1.912{\fontsize{6}{7}\selectfont\textpm0.003} & 1.352{\fontsize{6}{7}\selectfont\textpm0.000} \\
CCM & 1.903{\fontsize{6}{7}\selectfont\textpm0.004} & 1.410{\fontsize{6}{7}\selectfont\textpm0.002} & 1.502{\fontsize{6}{7}\selectfont\textpm0.001} & 1.505{\fontsize{6}{7}\selectfont\textpm0.003} & 0.643{\fontsize{6}{7}\selectfont\textpm0.000} & 0.718{\fontsize{6}{7}\selectfont\textpm0.001} & 1.891{\fontsize{6}{7}\selectfont\textpm0.000} & 1.330{\fontsize{6}{7}\selectfont\textpm0.000} \\
DUET & 1.971{\fontsize{6}{7}\selectfont\textpm0.029} & 1.465{\fontsize{6}{7}\selectfont\textpm0.004} & 1.484{\fontsize{6}{7}\selectfont\textpm0.012} & 1.697{\fontsize{6}{7}\selectfont\textpm0.004} & 0.646{\fontsize{6}{7}\selectfont\textpm0.006} & 0.786{\fontsize{6}{7}\selectfont\textpm0.011} & 1.872{\fontsize{6}{7}\selectfont\textpm0.016} & 1.288{\fontsize{6}{7}\selectfont\textpm0.006} \\
TimeFilter & 1.940{\fontsize{6}{7}\selectfont\textpm0.012} & 1.416{\fontsize{6}{7}\selectfont\textpm0.004} & \underline{1.483}{\fontsize{6}{7}\selectfont\textpm0.002} & 1.529{\fontsize{6}{7}\selectfont\textpm0.018} & \underline{0.639}{\fontsize{6}{7}\selectfont\textpm0.004} & \underline{0.712}{\fontsize{6}{7}\selectfont\textpm0.009} & 1.853{\fontsize{6}{7}\selectfont\textpm0.025} & \underline{1.282}{\fontsize{6}{7}\selectfont\textpm0.009} \\
TQNet & 1.930{\fontsize{6}{7}\selectfont\textpm0.001} & 1.426{\fontsize{6}{7}\selectfont\textpm0.003} & 1.505{\fontsize{6}{7}\selectfont\textpm0.008} & \underline{1.492}{\fontsize{6}{7}\selectfont\textpm0.008} & 0.646{\fontsize{6}{7}\selectfont\textpm0.000} & 0.719{\fontsize{6}{7}\selectfont\textpm0.002} & 1.840{\fontsize{6}{7}\selectfont\textpm0.001} & 1.297{\fontsize{6}{7}\selectfont\textpm0.000} \\
xPatch & 1.982{\fontsize{6}{7}\selectfont\textpm0.003} & 1.472{\fontsize{6}{7}\selectfont\textpm0.000} & 1.547{\fontsize{6}{7}\selectfont\textpm0.001} & 1.545{\fontsize{6}{7}\selectfont\textpm0.003} & 0.643{\fontsize{6}{7}\selectfont\textpm0.000} & 0.720{\fontsize{6}{7}\selectfont\textpm0.000} & 1.925{\fontsize{6}{7}\selectfont\textpm0.001} & 1.322{\fontsize{6}{7}\selectfont\textpm0.000} \\
\midrule
\textbf{STCFormer} & \textbf{1.861}{\fontsize{6}{7}\selectfont\textpm0.015} & \textbf{1.400}{\fontsize{6}{7}\selectfont\textpm0.008} & \textbf{1.466}{\fontsize{6}{7}\selectfont\textpm0.004} & \textbf{1.482}{\fontsize{6}{7}\selectfont\textpm0.026} & \textbf{0.622}{\fontsize{6}{7}\selectfont\textpm0.001} & \textbf{0.700}{\fontsize{6}{7}\selectfont\textpm0.002} & \textbf{1.757}{\fontsize{6}{7}\selectfont\textpm0.000} & \textbf{1.273}{\fontsize{6}{7}\selectfont\textpm0.001} \\
\bottomrule
\end{tabular*}
\endgroup
\end{table}
\begin{table}[!htbp]
\centering
\caption{24-hour forecasting results (SEDI, $\uparrow$) with a 48-hour look-back window. Values are mean $\pm$ standard deviation over three runs. Bold and underlined means indicate the best and second-best reported values in each column; ties share the same marking.}
\label{tab:results_sedi_24}
\begingroup
\fontsize{8}{9.5}\selectfont
\setlength{\tabcolsep}{0.8pt}
\renewcommand{\arraystretch}{1.10}
\begin{tabular*}{\textwidth}{@{\extracolsep{\fill}}l*{8}{c}@{}}
\toprule
\multirow{2}{*}{\textbf{Method}} & \multicolumn{3}{c}{\textbf{French}} & \multicolumn{3}{c}{\textbf{Hunan}} & \multicolumn{2}{c}{\textbf{Global}} \\
\cmidrule(lr){2-4}\cmidrule(lr){5-7}\cmidrule(lr){8-9}
 & Temp. & U-Wind & V-Wind & Temp. & U-Wind & V-Wind & Temp. & Wind \\
\midrule
\multicolumn{9}{l}{\textit{Spatio-Temporal Methods}} \\
DCRNN & 0.556{\fontsize{6}{7}\selectfont\textpm0.061} & 0.676{\fontsize{6}{7}\selectfont\textpm0.006} & 0.607{\fontsize{6}{7}\selectfont\textpm0.053} & 0.583{\fontsize{6}{7}\selectfont\textpm0.091} & 0.561{\fontsize{6}{7}\selectfont\textpm0.030} & 0.613{\fontsize{6}{7}\selectfont\textpm0.052} & \textbf{0.920}{\fontsize{6}{7}\selectfont\textpm0.003} & 0.648{\fontsize{6}{7}\selectfont\textpm0.008} \\
Corrformer & 0.503{\fontsize{6}{7}\selectfont\textpm0.012} & 0.645{\fontsize{6}{7}\selectfont\textpm0.019} & \textbf{0.616}{\fontsize{6}{7}\selectfont\textpm0.015} & 0.649{\fontsize{6}{7}\selectfont\textpm0.058} & 0.532{\fontsize{6}{7}\selectfont\textpm0.001} & 0.627{\fontsize{6}{7}\selectfont\textpm0.001} & 0.887{\fontsize{6}{7}\selectfont\textpm0.008} & 0.595{\fontsize{6}{7}\selectfont\textpm0.002} \\
EasyST & 0.632{\fontsize{6}{7}\selectfont\textpm0.004} & 0.598{\fontsize{6}{7}\selectfont\textpm0.000} & 0.483{\fontsize{6}{7}\selectfont\textpm0.003} & 0.701{\fontsize{6}{7}\selectfont\textpm0.134} & 0.541{\fontsize{6}{7}\selectfont\textpm0.010} & 0.551{\fontsize{6}{7}\selectfont\textpm0.007} & 0.901{\fontsize{6}{7}\selectfont\textpm0.001} & 0.631{\fontsize{6}{7}\selectfont\textpm0.001} \\
CDPNet & 0.625{\fontsize{6}{7}\selectfont\textpm0.002} & 0.671{\fontsize{6}{7}\selectfont\textpm0.004} & 0.607{\fontsize{6}{7}\selectfont\textpm0.004} & 0.735{\fontsize{6}{7}\selectfont\textpm0.021} & 0.522{\fontsize{6}{7}\selectfont\textpm0.006} & \underline{0.629}{\fontsize{6}{7}\selectfont\textpm0.015} & 0.916{\fontsize{6}{7}\selectfont\textpm0.001} & 0.639{\fontsize{6}{7}\selectfont\textpm0.008} \\
STELLA & 0.452{\fontsize{6}{7}\selectfont\textpm0.048} & 0.530{\fontsize{6}{7}\selectfont\textpm0.029} & 0.400{\fontsize{6}{7}\selectfont\textpm0.041} & 0.492{\fontsize{6}{7}\selectfont\textpm0.089} & 0.480{\fontsize{6}{7}\selectfont\textpm0.025} & 0.533{\fontsize{6}{7}\selectfont\textpm0.054} & 0.899{\fontsize{6}{7}\selectfont\textpm0.003} & 0.597{\fontsize{6}{7}\selectfont\textpm0.001} \\
S$^2$Transformer & 0.554{\fontsize{6}{7}\selectfont\textpm0.015} & 0.576{\fontsize{6}{7}\selectfont\textpm0.006} & 0.493{\fontsize{6}{7}\selectfont\textpm0.001} & 0.653{\fontsize{6}{7}\selectfont\textpm0.069} & 0.533{\fontsize{6}{7}\selectfont\textpm0.035} & 0.592{\fontsize{6}{7}\selectfont\textpm0.027} & 0.911{\fontsize{6}{7}\selectfont\textpm0.001} & 0.617{\fontsize{6}{7}\selectfont\textpm0.005} \\
\midrule
\multicolumn{9}{l}{\textit{Multivariate Time Series Methods}} \\
DLinear & 0.644{\fontsize{6}{7}\selectfont\textpm0.002} & 0.607{\fontsize{6}{7}\selectfont\textpm0.000} & 0.524{\fontsize{6}{7}\selectfont\textpm0.000} & \underline{0.744}{\fontsize{6}{7}\selectfont\textpm0.000} & 0.502{\fontsize{6}{7}\selectfont\textpm0.000} & 0.547{\fontsize{6}{7}\selectfont\textpm0.000} & 0.898{\fontsize{6}{7}\selectfont\textpm0.001} & 0.602{\fontsize{6}{7}\selectfont\textpm0.001} \\
PatchTST & 0.644{\fontsize{6}{7}\selectfont\textpm0.004} & 0.649{\fontsize{6}{7}\selectfont\textpm0.000} & 0.587{\fontsize{6}{7}\selectfont\textpm0.006} & 0.742{\fontsize{6}{7}\selectfont\textpm0.012} & 0.567{\fontsize{6}{7}\selectfont\textpm0.001} & 0.598{\fontsize{6}{7}\selectfont\textpm0.011} & 0.910{\fontsize{6}{7}\selectfont\textpm0.000} & 0.621{\fontsize{6}{7}\selectfont\textpm0.001} \\
Crossformer & 0.587{\fontsize{6}{7}\selectfont\textpm0.042} & \underline{0.684}{\fontsize{6}{7}\selectfont\textpm0.013} & 0.602{\fontsize{6}{7}\selectfont\textpm0.012} & 0.741{\fontsize{6}{7}\selectfont\textpm0.008} & 0.513{\fontsize{6}{7}\selectfont\textpm0.005} & 0.621{\fontsize{6}{7}\selectfont\textpm0.008} & 0.908{\fontsize{6}{7}\selectfont\textpm0.001} & 0.647{\fontsize{6}{7}\selectfont\textpm0.009} \\
TimeXer & 0.630{\fontsize{6}{7}\selectfont\textpm0.001} & 0.645{\fontsize{6}{7}\selectfont\textpm0.000} & 0.603{\fontsize{6}{7}\selectfont\textpm0.016} & 0.670{\fontsize{6}{7}\selectfont\textpm0.005} & 0.572{\fontsize{6}{7}\selectfont\textpm0.003} & \underline{0.629}{\fontsize{6}{7}\selectfont\textpm0.002} & 0.913{\fontsize{6}{7}\selectfont\textpm0.001} & 0.646{\fontsize{6}{7}\selectfont\textpm0.001} \\
CCM & 0.643{\fontsize{6}{7}\selectfont\textpm0.006} & 0.671{\fontsize{6}{7}\selectfont\textpm0.005} & 0.607{\fontsize{6}{7}\selectfont\textpm0.006} & 0.707{\fontsize{6}{7}\selectfont\textpm0.012} & 0.571{\fontsize{6}{7}\selectfont\textpm0.002} & \underline{0.629}{\fontsize{6}{7}\selectfont\textpm0.002} & 0.916{\fontsize{6}{7}\selectfont\textpm0.000} & 0.628{\fontsize{6}{7}\selectfont\textpm0.001} \\
DUET & 0.619{\fontsize{6}{7}\selectfont\textpm0.005} & 0.637{\fontsize{6}{7}\selectfont\textpm0.003} & 0.587{\fontsize{6}{7}\selectfont\textpm0.008} & 0.717{\fontsize{6}{7}\selectfont\textpm0.042} & 0.525{\fontsize{6}{7}\selectfont\textpm0.013} & 0.600{\fontsize{6}{7}\selectfont\textpm0.004} & 0.910{\fontsize{6}{7}\selectfont\textpm0.002} & 0.616{\fontsize{6}{7}\selectfont\textpm0.000} \\
TimeFilter & 0.638{\fontsize{6}{7}\selectfont\textpm0.022} & 0.670{\fontsize{6}{7}\selectfont\textpm0.009} & 0.609{\fontsize{6}{7}\selectfont\textpm0.007} & 0.723{\fontsize{6}{7}\selectfont\textpm0.013} & \underline{0.573}{\fontsize{6}{7}\selectfont\textpm0.001} & \underline{0.629}{\fontsize{6}{7}\selectfont\textpm0.005} & 0.917{\fontsize{6}{7}\selectfont\textpm0.001} & \underline{0.650}{\fontsize{6}{7}\selectfont\textpm0.001} \\
TQNet & \underline{0.669}{\fontsize{6}{7}\selectfont\textpm0.007} & 0.671{\fontsize{6}{7}\selectfont\textpm0.004} & 0.605{\fontsize{6}{7}\selectfont\textpm0.003} & 0.740{\fontsize{6}{7}\selectfont\textpm0.005} & 0.571{\fontsize{6}{7}\selectfont\textpm0.003} & 0.611{\fontsize{6}{7}\selectfont\textpm0.003} & 0.916{\fontsize{6}{7}\selectfont\textpm0.001} & 0.643{\fontsize{6}{7}\selectfont\textpm0.001} \\
xPatch & 0.647{\fontsize{6}{7}\selectfont\textpm0.011} & 0.658{\fontsize{6}{7}\selectfont\textpm0.001} & 0.589{\fontsize{6}{7}\selectfont\textpm0.001} & 0.740{\fontsize{6}{7}\selectfont\textpm0.009} & 0.567{\fontsize{6}{7}\selectfont\textpm0.001} & 0.604{\fontsize{6}{7}\selectfont\textpm0.001} & 0.912{\fontsize{6}{7}\selectfont\textpm0.000} & 0.638{\fontsize{6}{7}\selectfont\textpm0.001} \\
\midrule
\textbf{STCFormer} & \textbf{0.672}{\fontsize{6}{7}\selectfont\textpm0.001} & \textbf{0.689}{\fontsize{6}{7}\selectfont\textpm0.007} & \underline{0.615}{\fontsize{6}{7}\selectfont\textpm0.006} & \textbf{0.751}{\fontsize{6}{7}\selectfont\textpm0.009} & \textbf{0.580}{\fontsize{6}{7}\selectfont\textpm0.001} & \textbf{0.639}{\fontsize{6}{7}\selectfont\textpm0.017} & \underline{0.918}{\fontsize{6}{7}\selectfont\textpm0.002} & \textbf{0.653}{\fontsize{6}{7}\selectfont\textpm0.001} \\
\bottomrule
\end{tabular*}
\endgroup
\end{table}
\begin{table}[!htbp]
\centering
\caption{72-hour forecasting results (MAE, $\downarrow$) with a 48-hour look-back window. Values are mean $\pm$ standard deviation over three runs. Bold and underlined means indicate the best and second-best reported values in each column; ties share the same marking.}
\label{tab:results_mae_72}
\begingroup
\fontsize{8}{9.5}\selectfont
\setlength{\tabcolsep}{0.8pt}
\renewcommand{\arraystretch}{1.10}
\begin{tabular*}{\textwidth}{@{\extracolsep{\fill}}l*{8}{c}@{}}
\toprule
\multirow{2}{*}{\textbf{Method}} & \multicolumn{3}{c}{\textbf{French}} & \multicolumn{3}{c}{\textbf{Hunan}} & \multicolumn{2}{c}{\textbf{Global}} \\
\cmidrule(lr){2-4}\cmidrule(lr){5-7}\cmidrule(lr){8-9}
 & Temp. & U-Wind & V-Wind & Temp. & U-Wind & V-Wind & Temp. & Wind \\
\midrule
\multicolumn{9}{l}{\textit{Spatio-Temporal Methods}} \\
DCRNN & 3.302{\fontsize{6}{7}\selectfont\textpm0.016} & 1.987{\fontsize{6}{7}\selectfont\textpm0.126} & 2.103{\fontsize{6}{7}\selectfont\textpm0.127} & 2.388{\fontsize{6}{7}\selectfont\textpm0.034} & 0.733{\fontsize{6}{7}\selectfont\textpm0.003} & 0.891{\fontsize{6}{7}\selectfont\textpm0.026} & 2.924{\fontsize{6}{7}\selectfont\textpm0.340} & 1.492{\fontsize{6}{7}\selectfont\textpm0.003} \\
Corrformer & 2.711{\fontsize{6}{7}\selectfont\textpm0.030} & 2.100{\fontsize{6}{7}\selectfont\textpm0.034} & 1.996{\fontsize{6}{7}\selectfont\textpm0.008} & 2.813{\fontsize{6}{7}\selectfont\textpm0.245} & 0.721{\fontsize{6}{7}\selectfont\textpm0.002} & 0.893{\fontsize{6}{7}\selectfont\textpm0.005} & 2.784{\fontsize{6}{7}\selectfont\textpm0.005} & 1.465{\fontsize{6}{7}\selectfont\textpm0.002} \\
EasyST & 2.878{\fontsize{6}{7}\selectfont\textpm0.063} & 1.992{\fontsize{6}{7}\selectfont\textpm0.016} & 1.957{\fontsize{6}{7}\selectfont\textpm0.049} & 4.431{\fontsize{6}{7}\selectfont\textpm0.660} & 0.710{\fontsize{6}{7}\selectfont\textpm0.002} & 0.855{\fontsize{6}{7}\selectfont\textpm0.010} & \underline{2.502}{\fontsize{6}{7}\selectfont\textpm0.018} & 1.498{\fontsize{6}{7}\selectfont\textpm0.000} \\
CDPNet & 2.862{\fontsize{6}{7}\selectfont\textpm0.007} & 1.973{\fontsize{6}{7}\selectfont\textpm0.005} & 1.966{\fontsize{6}{7}\selectfont\textpm0.010} & 2.418{\fontsize{6}{7}\selectfont\textpm0.028} & 0.723{\fontsize{6}{7}\selectfont\textpm0.003} & 0.869{\fontsize{6}{7}\selectfont\textpm0.001} & 2.511{\fontsize{6}{7}\selectfont\textpm0.001} & 1.499{\fontsize{6}{7}\selectfont\textpm0.002} \\
STELLA & 4.631{\fontsize{6}{7}\selectfont\textpm0.019} & 2.754{\fontsize{6}{7}\selectfont\textpm0.077} & 2.487{\fontsize{6}{7}\selectfont\textpm0.089} & 4.042{\fontsize{6}{7}\selectfont\textpm0.269} & 1.015{\fontsize{6}{7}\selectfont\textpm0.013} & 1.266{\fontsize{6}{7}\selectfont\textpm0.009} & 2.588{\fontsize{6}{7}\selectfont\textpm0.005} & 1.474{\fontsize{6}{7}\selectfont\textpm0.002} \\
S$^2$Transformer & 2.910{\fontsize{6}{7}\selectfont\textpm0.148} & 2.062{\fontsize{6}{7}\selectfont\textpm0.071} & 1.969{\fontsize{6}{7}\selectfont\textpm0.016} & 2.940{\fontsize{6}{7}\selectfont\textpm0.162} & 0.765{\fontsize{6}{7}\selectfont\textpm0.013} & 0.978{\fontsize{6}{7}\selectfont\textpm0.015} & 2.564{\fontsize{6}{7}\selectfont\textpm0.000} & 1.465{\fontsize{6}{7}\selectfont\textpm0.001} \\
\midrule
\multicolumn{9}{l}{\textit{Multivariate Time Series Methods}} \\
DLinear & 2.598{\fontsize{6}{7}\selectfont\textpm0.000} & 1.944{\fontsize{6}{7}\selectfont\textpm0.003} & 1.936{\fontsize{6}{7}\selectfont\textpm0.002} & 2.383{\fontsize{6}{7}\selectfont\textpm0.003} & 0.714{\fontsize{6}{7}\selectfont\textpm0.000} & \underline{0.843}{\fontsize{6}{7}\selectfont\textpm0.000} & 2.644{\fontsize{6}{7}\selectfont\textpm0.000} & 1.515{\fontsize{6}{7}\selectfont\textpm0.000} \\
PatchTST & 2.599{\fontsize{6}{7}\selectfont\textpm0.006} & 1.975{\fontsize{6}{7}\selectfont\textpm0.002} & 1.952{\fontsize{6}{7}\selectfont\textpm0.001} & 2.420{\fontsize{6}{7}\selectfont\textpm0.033} & 0.716{\fontsize{6}{7}\selectfont\textpm0.001} & 0.861{\fontsize{6}{7}\selectfont\textpm0.001} & 2.559{\fontsize{6}{7}\selectfont\textpm0.000} & 1.495{\fontsize{6}{7}\selectfont\textpm0.000} \\
Crossformer & 2.633{\fontsize{6}{7}\selectfont\textpm0.002} & 1.977{\fontsize{6}{7}\selectfont\textpm0.039} & 1.956{\fontsize{6}{7}\selectfont\textpm0.019} & \underline{2.375}{\fontsize{6}{7}\selectfont\textpm0.046} & 0.713{\fontsize{6}{7}\selectfont\textpm0.001} & 0.860{\fontsize{6}{7}\selectfont\textpm0.003} & 2.505{\fontsize{6}{7}\selectfont\textpm0.011} & 1.464{\fontsize{6}{7}\selectfont\textpm0.004} \\
TimeXer & 2.597{\fontsize{6}{7}\selectfont\textpm0.008} & 1.923{\fontsize{6}{7}\selectfont\textpm0.007} & 1.944{\fontsize{6}{7}\selectfont\textpm0.007} & 2.420{\fontsize{6}{7}\selectfont\textpm0.018} & \underline{0.708}{\fontsize{6}{7}\selectfont\textpm0.002} & 0.852{\fontsize{6}{7}\selectfont\textpm0.005} & 2.531{\fontsize{6}{7}\selectfont\textpm0.001} & 1.462{\fontsize{6}{7}\selectfont\textpm0.002} \\
CCM & 2.599{\fontsize{6}{7}\selectfont\textpm0.003} & \textbf{1.912}{\fontsize{6}{7}\selectfont\textpm0.002} & 1.948{\fontsize{6}{7}\selectfont\textpm0.005} & 2.456{\fontsize{6}{7}\selectfont\textpm0.017} & 0.718{\fontsize{6}{7}\selectfont\textpm0.000} & 0.858{\fontsize{6}{7}\selectfont\textpm0.002} & 2.568{\fontsize{6}{7}\selectfont\textpm0.001} & 1.500{\fontsize{6}{7}\selectfont\textpm0.000} \\
DUET & 2.646{\fontsize{6}{7}\selectfont\textpm0.028} & 1.952{\fontsize{6}{7}\selectfont\textpm0.001} & \underline{1.934}{\fontsize{6}{7}\selectfont\textpm0.022} & 2.406{\fontsize{6}{7}\selectfont\textpm0.002} & 0.717{\fontsize{6}{7}\selectfont\textpm0.006} & 0.902{\fontsize{6}{7}\selectfont\textpm0.003} & \underline{2.502}{\fontsize{6}{7}\selectfont\textpm0.008} & \underline{1.459}{\fontsize{6}{7}\selectfont\textpm0.000} \\
TimeFilter & \underline{2.591}{\fontsize{6}{7}\selectfont\textpm0.013} & 1.956{\fontsize{6}{7}\selectfont\textpm0.008} & 1.953{\fontsize{6}{7}\selectfont\textpm0.012} & 2.376{\fontsize{6}{7}\selectfont\textpm0.004} & 0.715{\fontsize{6}{7}\selectfont\textpm0.002} & 0.862{\fontsize{6}{7}\selectfont\textpm0.015} & 2.661{\fontsize{6}{7}\selectfont\textpm0.011} & 1.472{\fontsize{6}{7}\selectfont\textpm0.008} \\
TQNet & 2.594{\fontsize{6}{7}\selectfont\textpm0.001} & 1.940{\fontsize{6}{7}\selectfont\textpm0.006} & \underline{1.934}{\fontsize{6}{7}\selectfont\textpm0.004} & 2.451{\fontsize{6}{7}\selectfont\textpm0.008} & 0.719{\fontsize{6}{7}\selectfont\textpm0.000} & 0.854{\fontsize{6}{7}\selectfont\textpm0.010} & 2.512{\fontsize{6}{7}\selectfont\textpm0.002} & 1.467{\fontsize{6}{7}\selectfont\textpm0.000} \\
xPatch & 2.596{\fontsize{6}{7}\selectfont\textpm0.002} & 1.977{\fontsize{6}{7}\selectfont\textpm0.001} & 1.952{\fontsize{6}{7}\selectfont\textpm0.001} & 2.417{\fontsize{6}{7}\selectfont\textpm0.011} & 0.716{\fontsize{6}{7}\selectfont\textpm0.000} & 0.862{\fontsize{6}{7}\selectfont\textpm0.000} & 2.582{\fontsize{6}{7}\selectfont\textpm0.001} & 1.486{\fontsize{6}{7}\selectfont\textpm0.000} \\
\midrule
\textbf{STCFormer} & \textbf{2.585}{\fontsize{6}{7}\selectfont\textpm0.006} & \underline{1.917}{\fontsize{6}{7}\selectfont\textpm0.044} & \textbf{1.925}{\fontsize{6}{7}\selectfont\textpm0.015} & \textbf{2.328}{\fontsize{6}{7}\selectfont\textpm0.100} & \textbf{0.707}{\fontsize{6}{7}\selectfont\textpm0.003} & \textbf{0.827}{\fontsize{6}{7}\selectfont\textpm0.007} & \textbf{2.493}{\fontsize{6}{7}\selectfont\textpm0.004} & \textbf{1.436}{\fontsize{6}{7}\selectfont\textpm0.001} \\
\bottomrule
\end{tabular*}
\endgroup
\end{table}
\begin{table}[!htbp]
\centering
\caption{72-hour forecasting results (MSE, $\downarrow$) with a 48-hour look-back window. Values are mean $\pm$ standard deviation over three runs. Bold and underlined means indicate the best and second-best reported values in each column; ties share the same marking.}
\label{tab:results_mse_72}
\begingroup
\fontsize{8}{9.5}\selectfont
\setlength{\tabcolsep}{0.8pt}
\renewcommand{\arraystretch}{1.10}
\begin{tabular*}{\textwidth}{@{\extracolsep{\fill}}l*{8}{c}@{}}
\toprule
\multirow{2}{*}{\textbf{Method}} & \multicolumn{3}{c}{\textbf{French}} & \multicolumn{3}{c}{\textbf{Hunan}} & \multicolumn{2}{c}{\textbf{Global}} \\
\cmidrule(lr){2-4}\cmidrule(lr){5-7}\cmidrule(lr){8-9}
 & Temp. & U-Wind & V-Wind & Temp. & U-Wind & V-Wind & Temp. & Wind \\
\midrule
\multicolumn{9}{l}{\textit{Spatio-Temporal Methods}} \\
DCRNN & 19.294{\fontsize{6}{7}\selectfont\textpm0.154} & 7.819{\fontsize{6}{7}\selectfont\textpm0.883} & 9.671{\fontsize{6}{7}\selectfont\textpm1.655} & 11.447{\fontsize{6}{7}\selectfont\textpm0.063} & 2.369{\fontsize{6}{7}\selectfont\textpm0.052} & 3.567{\fontsize{6}{7}\selectfont\textpm0.109} & 17.582{\fontsize{6}{7}\selectfont\textpm4.554} & 4.852{\fontsize{6}{7}\selectfont\textpm0.018} \\
Corrformer & 12.053{\fontsize{6}{7}\selectfont\textpm0.319} & 9.043{\fontsize{6}{7}\selectfont\textpm0.302} & 8.401{\fontsize{6}{7}\selectfont\textpm0.023} & 14.779{\fontsize{6}{7}\selectfont\textpm0.658} & 2.223{\fontsize{6}{7}\selectfont\textpm0.027} & 3.735{\fontsize{6}{7}\selectfont\textpm0.001} & 13.125{\fontsize{6}{7}\selectfont\textpm0.010} & 4.826{\fontsize{6}{7}\selectfont\textpm0.006} \\
EasyST & 13.579{\fontsize{6}{7}\selectfont\textpm0.614} & 7.361{\fontsize{6}{7}\selectfont\textpm0.078} & 7.437{\fontsize{6}{7}\selectfont\textpm0.387} & 31.283{\fontsize{6}{7}\selectfont\textpm7.987} & \underline{2.216}{\fontsize{6}{7}\selectfont\textpm0.021} & \underline{3.263}{\fontsize{6}{7}\selectfont\textpm0.074} & 12.895{\fontsize{6}{7}\selectfont\textpm0.239} & 4.892{\fontsize{6}{7}\selectfont\textpm0.017} \\
CDPNet & 13.508{\fontsize{6}{7}\selectfont\textpm0.064} & 7.838{\fontsize{6}{7}\selectfont\textpm0.022} & 8.214{\fontsize{6}{7}\selectfont\textpm0.105} & 11.485{\fontsize{6}{7}\selectfont\textpm0.351} & 2.371{\fontsize{6}{7}\selectfont\textpm0.026} & 3.494{\fontsize{6}{7}\selectfont\textpm0.001} & 12.985{\fontsize{6}{7}\selectfont\textpm0.077} & 5.011{\fontsize{6}{7}\selectfont\textpm0.008} \\
STELLA & 33.753{\fontsize{6}{7}\selectfont\textpm0.423} & 12.604{\fontsize{6}{7}\selectfont\textpm0.623} & 10.618{\fontsize{6}{7}\selectfont\textpm0.737} & 25.984{\fontsize{6}{7}\selectfont\textpm3.066} & 2.537{\fontsize{6}{7}\selectfont\textpm0.029} & 3.797{\fontsize{6}{7}\selectfont\textpm0.019} & 13.075{\fontsize{6}{7}\selectfont\textpm0.067} & 4.797{\fontsize{6}{7}\selectfont\textpm0.011} \\
S$^2$Transformer & 13.937{\fontsize{6}{7}\selectfont\textpm1.781} & 7.855{\fontsize{6}{7}\selectfont\textpm0.363} & \underline{7.197}{\fontsize{6}{7}\selectfont\textpm0.091} & 14.260{\fontsize{6}{7}\selectfont\textpm1.227} & 2.220{\fontsize{6}{7}\selectfont\textpm0.034} & 3.357{\fontsize{6}{7}\selectfont\textpm0.125} & 13.135{\fontsize{6}{7}\selectfont\textpm0.127} & \underline{4.754}{\fontsize{6}{7}\selectfont\textpm0.016} \\
\midrule
\multicolumn{9}{l}{\textit{Multivariate Time Series Methods}} \\
DLinear & 11.287{\fontsize{6}{7}\selectfont\textpm0.019} & \underline{7.214}{\fontsize{6}{7}\selectfont\textpm0.012} & 7.722{\fontsize{6}{7}\selectfont\textpm0.009} & 11.458{\fontsize{6}{7}\selectfont\textpm0.025} & 2.261{\fontsize{6}{7}\selectfont\textpm0.000} & 3.333{\fontsize{6}{7}\selectfont\textpm0.000} & 14.201{\fontsize{6}{7}\selectfont\textpm0.014} & 5.018{\fontsize{6}{7}\selectfont\textpm0.007} \\
PatchTST & 11.399{\fontsize{6}{7}\selectfont\textpm0.044} & 7.793{\fontsize{6}{7}\selectfont\textpm0.000} & 8.109{\fontsize{6}{7}\selectfont\textpm0.039} & 11.629{\fontsize{6}{7}\selectfont\textpm0.158} & 2.331{\fontsize{6}{7}\selectfont\textpm0.008} & 3.474{\fontsize{6}{7}\selectfont\textpm0.001} & 13.570{\fontsize{6}{7}\selectfont\textpm0.006} & 5.004{\fontsize{6}{7}\selectfont\textpm0.004} \\
Crossformer & 11.447{\fontsize{6}{7}\selectfont\textpm0.030} & 8.025{\fontsize{6}{7}\selectfont\textpm0.478} & 7.969{\fontsize{6}{7}\selectfont\textpm0.106} & 12.589{\fontsize{6}{7}\selectfont\textpm0.247} & 2.315{\fontsize{6}{7}\selectfont\textpm0.012} & 3.463{\fontsize{6}{7}\selectfont\textpm0.098} & 13.322{\fontsize{6}{7}\selectfont\textpm0.119} & 4.842{\fontsize{6}{7}\selectfont\textpm0.017} \\
TimeXer & 11.383{\fontsize{6}{7}\selectfont\textpm0.082} & 7.446{\fontsize{6}{7}\selectfont\textpm0.044} & 8.065{\fontsize{6}{7}\selectfont\textpm0.034} & 11.702{\fontsize{6}{7}\selectfont\textpm0.274} & 2.258{\fontsize{6}{7}\selectfont\textpm0.007} & 3.309{\fontsize{6}{7}\selectfont\textpm0.042} & 13.390{\fontsize{6}{7}\selectfont\textpm0.023} & 4.808{\fontsize{6}{7}\selectfont\textpm0.005} \\
CCM & 11.293{\fontsize{6}{7}\selectfont\textpm0.022} & 7.295{\fontsize{6}{7}\selectfont\textpm0.036} & 8.023{\fontsize{6}{7}\selectfont\textpm0.050} & 12.934{\fontsize{6}{7}\selectfont\textpm0.129} & 2.329{\fontsize{6}{7}\selectfont\textpm0.001} & 3.384{\fontsize{6}{7}\selectfont\textpm0.007} & 13.656{\fontsize{6}{7}\selectfont\textpm0.017} & 5.041{\fontsize{6}{7}\selectfont\textpm0.004} \\
DUET & 11.929{\fontsize{6}{7}\selectfont\textpm0.207} & 7.236{\fontsize{6}{7}\selectfont\textpm0.024} & 7.895{\fontsize{6}{7}\selectfont\textpm0.162} & \textbf{11.337}{\fontsize{6}{7}\selectfont\textpm0.117} & 2.222{\fontsize{6}{7}\selectfont\textpm0.020} & 3.627{\fontsize{6}{7}\selectfont\textpm0.017} & \underline{12.633}{\fontsize{6}{7}\selectfont\textpm0.107} & 4.904{\fontsize{6}{7}\selectfont\textpm0.069} \\
TimeFilter & 11.415{\fontsize{6}{7}\selectfont\textpm0.186} & 7.666{\fontsize{6}{7}\selectfont\textpm0.116} & 8.086{\fontsize{6}{7}\selectfont\textpm0.112} & \underline{11.365}{\fontsize{6}{7}\selectfont\textpm0.093} & 2.303{\fontsize{6}{7}\selectfont\textpm0.016} & 3.390{\fontsize{6}{7}\selectfont\textpm0.129} & 13.287{\fontsize{6}{7}\selectfont\textpm0.043} & 4.808{\fontsize{6}{7}\selectfont\textpm0.019} \\
TQNet & 11.430{\fontsize{6}{7}\selectfont\textpm0.011} & 7.408{\fontsize{6}{7}\selectfont\textpm0.030} & 7.980{\fontsize{6}{7}\selectfont\textpm0.060} & 11.377{\fontsize{6}{7}\selectfont\textpm0.062} & 2.317{\fontsize{6}{7}\selectfont\textpm0.003} & 3.336{\fontsize{6}{7}\selectfont\textpm0.080} & 13.121{\fontsize{6}{7}\selectfont\textpm0.014} & 4.872{\fontsize{6}{7}\selectfont\textpm0.001} \\
xPatch & \underline{11.285}{\fontsize{6}{7}\selectfont\textpm0.003} & 7.696{\fontsize{6}{7}\selectfont\textpm0.004} & 8.094{\fontsize{6}{7}\selectfont\textpm0.005} & 11.709{\fontsize{6}{7}\selectfont\textpm0.140} & 2.326{\fontsize{6}{7}\selectfont\textpm0.001} & 3.469{\fontsize{6}{7}\selectfont\textpm0.006} & 13.803{\fontsize{6}{7}\selectfont\textpm0.007} & 4.952{\fontsize{6}{7}\selectfont\textpm0.003} \\
\midrule
\textbf{STCFormer} & \textbf{11.145}{\fontsize{6}{7}\selectfont\textpm0.015} & \textbf{7.120}{\fontsize{6}{7}\selectfont\textpm0.454} & \textbf{7.043}{\fontsize{6}{7}\selectfont\textpm0.065} & 11.432{\fontsize{6}{7}\selectfont\textpm0.981} & \textbf{2.208}{\fontsize{6}{7}\selectfont\textpm0.013} & \textbf{3.249}{\fontsize{6}{7}\selectfont\textpm0.051} & \textbf{12.531}{\fontsize{6}{7}\selectfont\textpm0.074} & \textbf{4.750}{\fontsize{6}{7}\selectfont\textpm0.003} \\
\bottomrule
\end{tabular*}
\endgroup
\end{table}
\begin{table}[!htbp]
\centering
\caption{72-hour forecasting results (SEDI, $\uparrow$) with a 48-hour look-back window. Values are mean $\pm$ standard deviation over three runs. Bold and underlined means indicate the best and second-best reported values in each column; ties share the same marking.}
\label{tab:results_sedi_72}
\begingroup
\fontsize{8}{9.5}\selectfont
\setlength{\tabcolsep}{0.8pt}
\renewcommand{\arraystretch}{1.10}
\begin{tabular*}{\textwidth}{@{\extracolsep{\fill}}l*{8}{c}@{}}
\toprule
\multirow{2}{*}{\textbf{Method}} & \multicolumn{3}{c}{\textbf{French}} & \multicolumn{3}{c}{\textbf{Hunan}} & \multicolumn{2}{c}{\textbf{Global}} \\
\cmidrule(lr){2-4}\cmidrule(lr){5-7}\cmidrule(lr){8-9}
 & Temp. & U-Wind & V-Wind & Temp. & U-Wind & V-Wind & Temp. & Wind \\
\midrule
\multicolumn{9}{l}{\textit{Spatio-Temporal Methods}} \\
DCRNN & 0.394{\fontsize{6}{7}\selectfont\textpm0.173} & 0.456{\fontsize{6}{7}\selectfont\textpm0.004} & 0.406{\fontsize{6}{7}\selectfont\textpm0.010} & 0.548{\fontsize{6}{7}\selectfont\textpm0.025} & 0.416{\fontsize{6}{7}\selectfont\textpm0.006} & 0.441{\fontsize{6}{7}\selectfont\textpm0.025} & 0.867{\fontsize{6}{7}\selectfont\textpm0.010} & 0.521{\fontsize{6}{7}\selectfont\textpm0.003} \\
Corrformer & 0.342{\fontsize{6}{7}\selectfont\textpm0.015} & 0.472{\fontsize{6}{7}\selectfont\textpm0.010} & 0.370{\fontsize{6}{7}\selectfont\textpm0.014} & 0.328{\fontsize{6}{7}\selectfont\textpm0.046} & 0.481{\fontsize{6}{7}\selectfont\textpm0.007} & 0.473{\fontsize{6}{7}\selectfont\textpm0.016} & 0.849{\fontsize{6}{7}\selectfont\textpm0.004} & 0.450{\fontsize{6}{7}\selectfont\textpm0.003} \\
EasyST & 0.499{\fontsize{6}{7}\selectfont\textpm0.003} & 0.463{\fontsize{6}{7}\selectfont\textpm0.003} & 0.371{\fontsize{6}{7}\selectfont\textpm0.010} & 0.447{\fontsize{6}{7}\selectfont\textpm0.034} & 0.424{\fontsize{6}{7}\selectfont\textpm0.009} & 0.463{\fontsize{6}{7}\selectfont\textpm0.005} & 0.874{\fontsize{6}{7}\selectfont\textpm0.001} & 0.494{\fontsize{6}{7}\selectfont\textpm0.002} \\
CDPNet & 0.498{\fontsize{6}{7}\selectfont\textpm0.002} & 0.513{\fontsize{6}{7}\selectfont\textpm0.003} & 0.395{\fontsize{6}{7}\selectfont\textpm0.000} & \underline{0.550}{\fontsize{6}{7}\selectfont\textpm0.006} & 0.456{\fontsize{6}{7}\selectfont\textpm0.004} & 0.460{\fontsize{6}{7}\selectfont\textpm0.000} & 0.880{\fontsize{6}{7}\selectfont\textpm0.005} & 0.520{\fontsize{6}{7}\selectfont\textpm0.002} \\
STELLA & 0.277{\fontsize{6}{7}\selectfont\textpm0.014} & 0.397{\fontsize{6}{7}\selectfont\textpm0.012} & 0.284{\fontsize{6}{7}\selectfont\textpm0.028} & 0.349{\fontsize{6}{7}\selectfont\textpm0.029} & 0.381{\fontsize{6}{7}\selectfont\textpm0.006} & 0.402{\fontsize{6}{7}\selectfont\textpm0.010} & 0.865{\fontsize{6}{7}\selectfont\textpm0.004} & 0.455{\fontsize{6}{7}\selectfont\textpm0.002} \\
S$^2$Transformer & 0.304{\fontsize{6}{7}\selectfont\textpm0.084} & 0.419{\fontsize{6}{7}\selectfont\textpm0.001} & 0.353{\fontsize{6}{7}\selectfont\textpm0.014} & 0.376{\fontsize{6}{7}\selectfont\textpm0.046} & 0.412{\fontsize{6}{7}\selectfont\textpm0.020} & 0.449{\fontsize{6}{7}\selectfont\textpm0.037} & 0.871{\fontsize{6}{7}\selectfont\textpm0.013} & 0.477{\fontsize{6}{7}\selectfont\textpm0.004} \\
\midrule
\multicolumn{9}{l}{\textit{Multivariate Time Series Methods}} \\
DLinear & 0.504{\fontsize{6}{7}\selectfont\textpm0.005} & 0.462{\fontsize{6}{7}\selectfont\textpm0.000} & 0.386{\fontsize{6}{7}\selectfont\textpm0.000} & 0.509{\fontsize{6}{7}\selectfont\textpm0.001} & 0.400{\fontsize{6}{7}\selectfont\textpm0.000} & 0.428{\fontsize{6}{7}\selectfont\textpm0.000} & 0.858{\fontsize{6}{7}\selectfont\textpm0.001} & 0.480{\fontsize{6}{7}\selectfont\textpm0.001} \\
PatchTST & 0.486{\fontsize{6}{7}\selectfont\textpm0.011} & 0.494{\fontsize{6}{7}\selectfont\textpm0.002} & 0.406{\fontsize{6}{7}\selectfont\textpm0.005} & 0.486{\fontsize{6}{7}\selectfont\textpm0.010} & 0.478{\fontsize{6}{7}\selectfont\textpm0.005} & 0.463{\fontsize{6}{7}\selectfont\textpm0.001} & 0.882{\fontsize{6}{7}\selectfont\textpm0.000} & 0.503{\fontsize{6}{7}\selectfont\textpm0.000} \\
Crossformer & -0.270{\fontsize{6}{7}\selectfont\textpm0.005} & \textbf{0.521}{\fontsize{6}{7}\selectfont\textpm0.009} & 0.425{\fontsize{6}{7}\selectfont\textpm0.019} & 0.504{\fontsize{6}{7}\selectfont\textpm0.027} & 0.467{\fontsize{6}{7}\selectfont\textpm0.004} & 0.476{\fontsize{6}{7}\selectfont\textpm0.017} & 0.878{\fontsize{6}{7}\selectfont\textpm0.002} & 0.511{\fontsize{6}{7}\selectfont\textpm0.006} \\
TimeXer & 0.470{\fontsize{6}{7}\selectfont\textpm0.005} & 0.516{\fontsize{6}{7}\selectfont\textpm0.012} & 0.424{\fontsize{6}{7}\selectfont\textpm0.005} & 0.479{\fontsize{6}{7}\selectfont\textpm0.012} & 0.484{\fontsize{6}{7}\selectfont\textpm0.009} & \textbf{0.481}{\fontsize{6}{7}\selectfont\textpm0.002} & \underline{0.886}{\fontsize{6}{7}\selectfont\textpm0.001} & 0.526{\fontsize{6}{7}\selectfont\textpm0.001} \\
CCM & 0.510{\fontsize{6}{7}\selectfont\textpm0.001} & 0.519{\fontsize{6}{7}\selectfont\textpm0.001} & \underline{0.434}{\fontsize{6}{7}\selectfont\textpm0.006} & 0.527{\fontsize{6}{7}\selectfont\textpm0.003} & \underline{0.485}{\fontsize{6}{7}\selectfont\textpm0.002} & 0.468{\fontsize{6}{7}\selectfont\textpm0.001} & 0.880{\fontsize{6}{7}\selectfont\textpm0.001} & 0.506{\fontsize{6}{7}\selectfont\textpm0.000} \\
DUET & 0.473{\fontsize{6}{7}\selectfont\textpm0.004} & 0.458{\fontsize{6}{7}\selectfont\textpm0.004} & 0.423{\fontsize{6}{7}\selectfont\textpm0.002} & 0.487{\fontsize{6}{7}\selectfont\textpm0.008} & 0.444{\fontsize{6}{7}\selectfont\textpm0.005} & 0.467{\fontsize{6}{7}\selectfont\textpm0.010} & 0.875{\fontsize{6}{7}\selectfont\textpm0.010} & 0.488{\fontsize{6}{7}\selectfont\textpm0.004} \\
TimeFilter & 0.515{\fontsize{6}{7}\selectfont\textpm0.002} & 0.504{\fontsize{6}{7}\selectfont\textpm0.006} & 0.403{\fontsize{6}{7}\selectfont\textpm0.022} & 0.538{\fontsize{6}{7}\selectfont\textpm0.010} & 0.478{\fontsize{6}{7}\selectfont\textpm0.000} & 0.479{\fontsize{6}{7}\selectfont\textpm0.002} & 0.846{\fontsize{6}{7}\selectfont\textpm0.002} & \underline{0.527}{\fontsize{6}{7}\selectfont\textpm0.002} \\
TQNet & \underline{0.518}{\fontsize{6}{7}\selectfont\textpm0.007} & 0.513{\fontsize{6}{7}\selectfont\textpm0.002} & 0.425{\fontsize{6}{7}\selectfont\textpm0.004} & 0.513{\fontsize{6}{7}\selectfont\textpm0.001} & 0.484{\fontsize{6}{7}\selectfont\textpm0.001} & 0.464{\fontsize{6}{7}\selectfont\textpm0.004} & \underline{0.886}{\fontsize{6}{7}\selectfont\textpm0.001} & 0.524{\fontsize{6}{7}\selectfont\textpm0.001} \\
xPatch & 0.514{\fontsize{6}{7}\selectfont\textpm0.004} & 0.502{\fontsize{6}{7}\selectfont\textpm0.001} & 0.415{\fontsize{6}{7}\selectfont\textpm0.003} & 0.494{\fontsize{6}{7}\selectfont\textpm0.004} & 0.476{\fontsize{6}{7}\selectfont\textpm0.000} & 0.465{\fontsize{6}{7}\selectfont\textpm0.000} & 0.883{\fontsize{6}{7}\selectfont\textpm0.000} & 0.521{\fontsize{6}{7}\selectfont\textpm0.001} \\
\midrule
\textbf{STCFormer} & \textbf{0.527}{\fontsize{6}{7}\selectfont\textpm0.008} & \underline{0.520}{\fontsize{6}{7}\selectfont\textpm0.009} & \textbf{0.438}{\fontsize{6}{7}\selectfont\textpm0.009} & \textbf{0.581}{\fontsize{6}{7}\selectfont\textpm0.027} & \textbf{0.496}{\fontsize{6}{7}\selectfont\textpm0.007} & \underline{0.480}{\fontsize{6}{7}\selectfont\textpm0.018} & \textbf{0.889}{\fontsize{6}{7}\selectfont\textpm0.001} & \textbf{0.530}{\fontsize{6}{7}\selectfont\textpm0.000} \\
\bottomrule
\end{tabular*}
\endgroup
\end{table}
\FloatBarrier

\section{Additional Ablation Results}
\label{sec:additional_ablation}
Tables~\ref{tab:ablation_mae_24} and~\ref{tab:ablation_sedi_24} report the MAE and SEDI results corresponding to the 24-hour MSE ablation in Table~\ref{tab:ablation_full}. The full model attains the best reported mean on all eight tasks for both metrics, consistent with the MSE comparison. The variants and all other experimental settings follow the main ablation study.

\begin{table}[!htbp]
\centering
\caption{24-hour ablation results (MAE, $\downarrow$) with a 48-hour look-back window. Values are mean $\pm$ standard deviation over three runs. Bold means indicate the best reported values in each column; ties are both bold.}
\label{tab:ablation_mae_24}
\begingroup
\fontsize{8}{9.5}\selectfont
\setlength{\tabcolsep}{0.8pt}
\renewcommand{\arraystretch}{1.10}
\begin{tabular*}{\textwidth}{@{\extracolsep{\fill}}l*{8}{c}@{}}
\toprule
\multirow{2}{*}{\textbf{Variant}} & \multicolumn{3}{c}{\textbf{French}} & \multicolumn{3}{c}{\textbf{Hunan}} & \multicolumn{2}{c}{\textbf{Global}} \\
\cmidrule(lr){2-4}\cmidrule(lr){5-7}\cmidrule(lr){8-9}
 & Temp. & U-Wind & V-Wind & Temp. & U-Wind & V-Wind & Temp. & Wind \\
\midrule
Window-shared & 1.885{\fontsize{6}{7}\selectfont\textpm0.015} & 1.411{\fontsize{6}{7}\selectfont\textpm0.008} & 1.497{\fontsize{6}{7}\selectfont\textpm0.016} & 1.522{\fontsize{6}{7}\selectfont\textpm0.020} & 0.652{\fontsize{6}{7}\selectfont\textpm0.008} & 0.721{\fontsize{6}{7}\selectfont\textpm0.006} & 1.766{\fontsize{6}{7}\selectfont\textpm0.003} & 1.325{\fontsize{6}{7}\selectfont\textpm0.010} \\
Full Attention & 1.869{\fontsize{6}{7}\selectfont\textpm0.010} & 1.412{\fontsize{6}{7}\selectfont\textpm0.019} & 1.490{\fontsize{6}{7}\selectfont\textpm0.032} & 1.518{\fontsize{6}{7}\selectfont\textpm0.023} & 0.649{\fontsize{6}{7}\selectfont\textpm0.000} & 0.722{\fontsize{6}{7}\selectfont\textpm0.001} & 1.766{\fontsize{6}{7}\selectfont\textpm0.001} & 1.298{\fontsize{6}{7}\selectfont\textpm0.001} \\
Static Clustering & 1.886{\fontsize{6}{7}\selectfont\textpm0.023} & 1.414{\fontsize{6}{7}\selectfont\textpm0.013} & 1.492{\fontsize{6}{7}\selectfont\textpm0.009} & 1.508{\fontsize{6}{7}\selectfont\textpm0.043} & 0.649{\fontsize{6}{7}\selectfont\textpm0.002} & 0.725{\fontsize{6}{7}\selectfont\textpm0.002} & 1.764{\fontsize{6}{7}\selectfont\textpm0.003} & 1.298{\fontsize{6}{7}\selectfont\textpm0.001} \\
w/o InfoLoss & 1.906{\fontsize{6}{7}\selectfont\textpm0.046} & 1.412{\fontsize{6}{7}\selectfont\textpm0.009} & 1.479{\fontsize{6}{7}\selectfont\textpm0.005} & 1.549{\fontsize{6}{7}\selectfont\textpm0.009} & 0.659{\fontsize{6}{7}\selectfont\textpm0.002} & 0.719{\fontsize{6}{7}\selectfont\textpm0.001} & 1.767{\fontsize{6}{7}\selectfont\textpm0.000} & 1.295{\fontsize{6}{7}\selectfont\textpm0.001} \\
w/o Global Attn. & 1.877{\fontsize{6}{7}\selectfont\textpm0.009} & 1.410{\fontsize{6}{7}\selectfont\textpm0.003} & 1.477{\fontsize{6}{7}\selectfont\textpm0.011} & 1.572{\fontsize{6}{7}\selectfont\textpm0.009} & 0.653{\fontsize{6}{7}\selectfont\textpm0.000} & 0.728{\fontsize{6}{7}\selectfont\textpm0.002} & 1.773{\fontsize{6}{7}\selectfont\textpm0.001} & 1.294{\fontsize{6}{7}\selectfont\textpm0.001} \\
w/o Local Attn. & 1.887{\fontsize{6}{7}\selectfont\textpm0.006} & 1.418{\fontsize{6}{7}\selectfont\textpm0.019} & 1.474{\fontsize{6}{7}\selectfont\textpm0.001} & 1.491{\fontsize{6}{7}\selectfont\textpm0.022} & 0.650{\fontsize{6}{7}\selectfont\textpm0.000} & 0.724{\fontsize{6}{7}\selectfont\textpm0.001} & 1.783{\fontsize{6}{7}\selectfont\textpm0.001} & 1.301{\fontsize{6}{7}\selectfont\textpm0.002} \\
\midrule
\textbf{STCFormer} & \textbf{1.861}{\fontsize{6}{7}\selectfont\textpm0.015} & \textbf{1.400}{\fontsize{6}{7}\selectfont\textpm0.008} & \textbf{1.466}{\fontsize{6}{7}\selectfont\textpm0.004} & \textbf{1.482}{\fontsize{6}{7}\selectfont\textpm0.026} & \textbf{0.622}{\fontsize{6}{7}\selectfont\textpm0.001} & \textbf{0.700}{\fontsize{6}{7}\selectfont\textpm0.002} & \textbf{1.757}{\fontsize{6}{7}\selectfont\textpm0.000} & \textbf{1.273}{\fontsize{6}{7}\selectfont\textpm0.001} \\
\bottomrule
\end{tabular*}
\endgroup
\end{table}
\begin{table}[!htbp]
\centering
\caption{24-hour ablation results (SEDI, $\uparrow$) with a 48-hour look-back window. Values are mean $\pm$ standard deviation over three runs. Bold means indicate the best reported values in each column; ties are both bold.}
\label{tab:ablation_sedi_24}
\begingroup
\fontsize{8}{9.5}\selectfont
\setlength{\tabcolsep}{0.8pt}
\renewcommand{\arraystretch}{1.10}
\begin{tabular*}{\textwidth}{@{\extracolsep{\fill}}l*{8}{c}@{}}
\toprule
\multirow{2}{*}{\textbf{Variant}} & \multicolumn{3}{c}{\textbf{French}} & \multicolumn{3}{c}{\textbf{Hunan}} & \multicolumn{2}{c}{\textbf{Global}} \\
\cmidrule(lr){2-4}\cmidrule(lr){5-7}\cmidrule(lr){8-9}
 & Temp. & U-Wind & V-Wind & Temp. & U-Wind & V-Wind & Temp. & Wind \\
\midrule
Window-shared & 0.665{\fontsize{6}{7}\selectfont\textpm0.001} & 0.664{\fontsize{6}{7}\selectfont\textpm0.000} & 0.600{\fontsize{6}{7}\selectfont\textpm0.018} & 0.718{\fontsize{6}{7}\selectfont\textpm0.013} & 0.544{\fontsize{6}{7}\selectfont\textpm0.005} & 0.611{\fontsize{6}{7}\selectfont\textpm0.006} & 0.910{\fontsize{6}{7}\selectfont\textpm0.001} & 0.638{\fontsize{6}{7}\selectfont\textpm0.001} \\
Full Attention & 0.670{\fontsize{6}{7}\selectfont\textpm0.004} & 0.667{\fontsize{6}{7}\selectfont\textpm0.003} & 0.611{\fontsize{6}{7}\selectfont\textpm0.000} & 0.723{\fontsize{6}{7}\selectfont\textpm0.015} & 0.546{\fontsize{6}{7}\selectfont\textpm0.003} & 0.628{\fontsize{6}{7}\selectfont\textpm0.006} & 0.909{\fontsize{6}{7}\selectfont\textpm0.000} & 0.633{\fontsize{6}{7}\selectfont\textpm0.000} \\
Static Clustering & 0.668{\fontsize{6}{7}\selectfont\textpm0.013} & 0.656{\fontsize{6}{7}\selectfont\textpm0.004} & 0.609{\fontsize{6}{7}\selectfont\textpm0.031} & 0.724{\fontsize{6}{7}\selectfont\textpm0.036} & 0.546{\fontsize{6}{7}\selectfont\textpm0.001} & 0.625{\fontsize{6}{7}\selectfont\textpm0.009} & 0.914{\fontsize{6}{7}\selectfont\textpm0.002} & 0.649{\fontsize{6}{7}\selectfont\textpm0.001} \\
w/o InfoLoss & 0.670{\fontsize{6}{7}\selectfont\textpm0.003} & 0.655{\fontsize{6}{7}\selectfont\textpm0.002} & 0.596{\fontsize{6}{7}\selectfont\textpm0.007} & 0.713{\fontsize{6}{7}\selectfont\textpm0.010} & 0.553{\fontsize{6}{7}\selectfont\textpm0.007} & 0.632{\fontsize{6}{7}\selectfont\textpm0.004} & 0.911{\fontsize{6}{7}\selectfont\textpm0.000} & 0.645{\fontsize{6}{7}\selectfont\textpm0.000} \\
w/o Global Attn. & 0.665{\fontsize{6}{7}\selectfont\textpm0.000} & 0.658{\fontsize{6}{7}\selectfont\textpm0.006} & 0.607{\fontsize{6}{7}\selectfont\textpm0.017} & 0.712{\fontsize{6}{7}\selectfont\textpm0.017} & 0.553{\fontsize{6}{7}\selectfont\textpm0.001} & 0.627{\fontsize{6}{7}\selectfont\textpm0.013} & 0.912{\fontsize{6}{7}\selectfont\textpm0.001} & 0.646{\fontsize{6}{7}\selectfont\textpm0.001} \\
w/o Local Attn. & 0.661{\fontsize{6}{7}\selectfont\textpm0.003} & 0.661{\fontsize{6}{7}\selectfont\textpm0.009} & 0.598{\fontsize{6}{7}\selectfont\textpm0.011} & 0.714{\fontsize{6}{7}\selectfont\textpm0.020} & 0.557{\fontsize{6}{7}\selectfont\textpm0.003} & 0.630{\fontsize{6}{7}\selectfont\textpm0.006} & 0.909{\fontsize{6}{7}\selectfont\textpm0.000} & 0.649{\fontsize{6}{7}\selectfont\textpm0.000} \\
\midrule
\textbf{STCFormer} & \textbf{0.672}{\fontsize{6}{7}\selectfont\textpm0.001} & \textbf{0.689}{\fontsize{6}{7}\selectfont\textpm0.007} & \textbf{0.615}{\fontsize{6}{7}\selectfont\textpm0.006} & \textbf{0.751}{\fontsize{6}{7}\selectfont\textpm0.009} & \textbf{0.580}{\fontsize{6}{7}\selectfont\textpm0.001} & \textbf{0.639}{\fontsize{6}{7}\selectfont\textpm0.017} & \textbf{0.918}{\fontsize{6}{7}\selectfont\textpm0.002} & \textbf{0.653}{\fontsize{6}{7}\selectfont\textpm0.001} \\
\bottomrule
\end{tabular*}
\endgroup
\end{table}
\FloatBarrier

% Additional comparisons: Feishu experiment plan, revision 1066.
\section{Additional Analyses}
\label{app:additional_comparisons}
We extend the comparison to three time series foundation models and three grid-based predictive models. Table~\ref{tab:additional_foundation_grid} summarizes 24-hour MSE on all eight dataset--variable tasks using a 48-hour input window. STCFormer results are those of the main experiment. We report mean and standard deviation, with the best mean in each column in bold.

\begin{table}[!htbp]
\centering
\caption{Additional 24-hour forecasting comparisons (MSE, $\downarrow$) with a 48-hour input window. Values are mean $\pm$ standard deviation over three runs. Bold means indicate the best reported values in each column; ties are both bold.}
\label{tab:additional_foundation_grid}
\begingroup
\fontsize{8}{9.5}\selectfont
\setlength{\tabcolsep}{0.7pt}
\renewcommand{\arraystretch}{1.12}
\begin{tabular*}{\textwidth}{@{\extracolsep{\fill}}l*{8}{c}@{}}
\toprule
\multirow{2}{*}{Method} & \multicolumn{3}{c}{French} & \multicolumn{3}{c}{Hunan} & \multicolumn{2}{c}{Global} \\
\cmidrule(lr){2-4}\cmidrule(lr){5-7}\cmidrule(lr){8-9}
 & Temp. & U-Wind & V-Wind & Temp. & U-Wind & V-Wind & Temp. & Wind \\
\midrule
\multicolumn{9}{l}{\textit{Time Series Foundation Models}} \\
Timer & 7.304{\fontsize{6}{7}\selectfont\textpm0.014} & 4.831{\fontsize{6}{7}\selectfont\textpm0.008} & 5.694{\fontsize{6}{7}\selectfont\textpm0.033} & 5.041{\fontsize{6}{7}\selectfont\textpm0.012} & 1.987{\fontsize{6}{7}\selectfont\textpm0.008} & 2.769{\fontsize{6}{7}\selectfont\textpm0.020} & 8.002{\fontsize{6}{7}\selectfont\textpm0.008} & 4.873{\fontsize{6}{7}\selectfont\textpm0.015} \\
Chronos-Bolt-Tiny & 6.706{\fontsize{6}{7}\selectfont\textpm0.012} & 4.538{\fontsize{6}{7}\selectfont\textpm0.002} & 5.437{\fontsize{6}{7}\selectfont\textpm0.002} & 4.990{\fontsize{6}{7}\selectfont\textpm0.003} & 1.924{\fontsize{6}{7}\selectfont\textpm0.003} & 2.537{\fontsize{6}{7}\selectfont\textpm0.000} & 7.894{\fontsize{6}{7}\selectfont\textpm0.004} & 4.691{\fontsize{6}{7}\selectfont\textpm0.005} \\
Time-MoE & 6.427{\fontsize{6}{7}\selectfont\textpm0.014} & 4.231{\fontsize{6}{7}\selectfont\textpm0.017} & 5.026{\fontsize{6}{7}\selectfont\textpm0.009} & 5.138{\fontsize{6}{7}\selectfont\textpm0.008} & 1.947{\fontsize{6}{7}\selectfont\textpm0.011} & 2.429{\fontsize{6}{7}\selectfont\textpm0.014} & 7.267{\fontsize{6}{7}\selectfont\textpm0.007} & 3.798{\fontsize{6}{7}\selectfont\textpm0.006} \\
\midrule
\multicolumn{9}{l}{\textit{Grid-Based Predictive Models}} \\
PhyDNet & 10.225{\fontsize{6}{7}\selectfont\textpm0.487} & 5.224{\fontsize{6}{7}\selectfont\textpm0.278} & 6.259{\fontsize{6}{7}\selectfont\textpm0.106} & 6.028{\fontsize{6}{7}\selectfont\textpm0.050} & 2.315{\fontsize{6}{7}\selectfont\textpm0.026} & 3.192{\fontsize{6}{7}\selectfont\textpm0.054} & 22.217{\fontsize{6}{7}\selectfont\textpm0.422} & 6.830{\fontsize{6}{7}\selectfont\textpm0.005} \\
PredRNN & 8.793{\fontsize{6}{7}\selectfont\textpm0.409} & 4.278{\fontsize{6}{7}\selectfont\textpm0.069} & 5.276{\fontsize{6}{7}\selectfont\textpm0.236} & 5.359{\fontsize{6}{7}\selectfont\textpm0.057} & 2.313{\fontsize{6}{7}\selectfont\textpm0.004} & 3.276{\fontsize{6}{7}\selectfont\textpm0.065} & 20.139{\fontsize{6}{7}\selectfont\textpm0.169} & 6.574{\fontsize{6}{7}\selectfont\textpm0.007} \\
SimVP & 9.472{\fontsize{6}{7}\selectfont\textpm0.217} & 4.862{\fontsize{6}{7}\selectfont\textpm0.042} & 5.560{\fontsize{6}{7}\selectfont\textpm0.314} & 7.274{\fontsize{6}{7}\selectfont\textpm0.595} & 2.508{\fontsize{6}{7}\selectfont\textpm0.028} & 3.553{\fontsize{6}{7}\selectfont\textpm0.238} & 20.372{\fontsize{6}{7}\selectfont\textpm0.077} & 6.687{\fontsize{6}{7}\selectfont\textpm0.096} \\
\midrule
STCFormer & \textbf{6.291}{\fontsize{6}{7}\selectfont\textpm0.090} & \textbf{4.045}{\fontsize{6}{7}\selectfont\textpm0.028} & \textbf{4.883}{\fontsize{6}{7}\selectfont\textpm0.013} & \textbf{4.743}{\fontsize{6}{7}\selectfont\textpm0.149} & \textbf{1.832}{\fontsize{6}{7}\selectfont\textpm0.001} & \textbf{2.402}{\fontsize{6}{7}\selectfont\textpm0.003} & \textbf{6.917}{\fontsize{6}{7}\selectfont\textpm0.002} & \textbf{3.584}{\fontsize{6}{7}\selectfont\textpm0.002} \\
\bottomrule
\end{tabular*}
\endgroup
\end{table}

\subsection{Comparison with Time Series Foundation Models}
\label{app:foundation_models}
We evaluate Timer~\citep{liu2024timer}, Chronos-Bolt-Tiny~\citep{ansari2024chronos}, and Time-MoE~\citep{shi2025timemoe}. These models undergo full-parameter fine-tuning for five epochs on French and three epochs on Hunan and Global. STCFormer obtains lower mean MSE on all eight tasks, with reductions of 1.1\%--5.6\% relative to the strongest foundation-model baseline on each task. Time-MoE is the strongest of these baselines on six tasks, whereas Chronos-Bolt-Tiny performs best on Hunan temperature and U-wind. These results support the effectiveness of STCFormer under the evaluated fine-tuning budgets.

\subsection{Comparison with Grid-Based Predictive Models}
\label{app:grid_weather_models}
We additionally evaluate PhyDNet~\citep{DBLP:conf/cvpr/GuenT20a}, PredRNN~\citep{DBLP:journals/pami/WangWZGWYL23}, and SimVP~\citep{DBLP:conf/cvpr/Gao0WL22}, each trained for 30 epochs. Station observations are interpolated onto a grid, forecasts are generated on that grid, and predictions are mapped back to station locations for evaluation. STCFormer achieves the lowest mean MSE on all eight tasks. On Global temperature and wind, it reduces MSE from the strongest grid baseline's 20.139 and 6.574 to 6.917 and 3.584, respectively. Although interpolation can introduce station--grid conversion error, the consistent margin across all eight tasks highlights both the necessity and effectiveness of forecasting directly on irregular station networks.

\subsection{Additional Hyperparameter Studies}
\label{app:additional_hyperparameters}
We complement the InfoLoss-weight, cluster-ratio, and embedding-dimension studies in Section~\ref{sec:hyperparameter_sensitivity} with look-back length and spatio-temporal patch size. We report test MSE for temperature forecasting: the look-back study forecasts 72 hours on French temperature, while the patch-size study forecasts 24 hours. Each plotted value is the mean of three runs; error bars are omitted for compactness.

\subsubsection{Look-Back Window Length}
Figure~\ref{fig:lookback_sensitivity} compares look-back windows of 24, 48, 96, and 168 hours for 72-hour French temperature forecasting. STCFormer attains MSE values of 11.498, 11.245, 11.318, and 11.501, respectively, with its lowest error at 48 hours. It achieves the lowest reported MSE among the compared methods at 24- and 48-hour look-back lengths; CCM performs best at 96 and 168 hours. The results support the 48-hour reference window, while showing that longer histories do not consistently improve performance. CCM uses DLinear as its forecasting backbone, as in the main comparison.

\begin{figure}[htbp]
\centering
\includegraphics[width=0.70\linewidth]{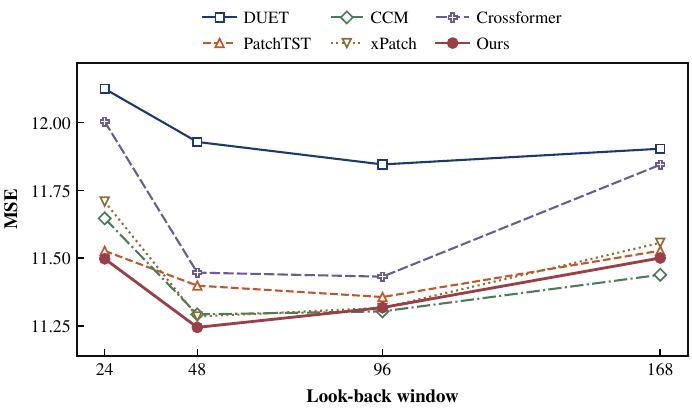}
\caption{Look-back sensitivity on French temperature with a fixed 72-hour forecasting horizon. The horizontal axis is the input-window length in hours.}
\label{fig:lookback_sensitivity}
\end{figure}

\subsubsection{Spatio-Temporal Patch Sizes}
We jointly vary the temporal patch length $L_t\in\{1,2,4,6\}$ and the spatial patch length $L_s$, using $L_s\in\{1,2,3,6\}$ on French and $L_s\in\{1,7,10,14\}$ on Hunan. Figure~\ref{fig:patch_size_sensitivity} presents the resulting 24-hour MSE. The French sweep favors $(L_t,L_s)=(6,2)$, while the main experiments use $(4,2)$. Hunan reaches 4.742 at $(2,10)$, almost identical to 4.743 at $(4,7)$. With $L_t=4$, replacing $L_s=1$ by 2 on French and 7 on Hunan reduces MSE from 6.553 to 6.291 and from 5.163 to 4.743, respectively. Joint spatial and temporal patching is thus useful in these settings, but the best combination is dataset-dependent and coarser patches do not uniformly improve performance.

\begin{figure}[htbp]
\centering
\begin{subfigure}[t]{0.49\linewidth}
\centering
\includegraphics[width=\linewidth]{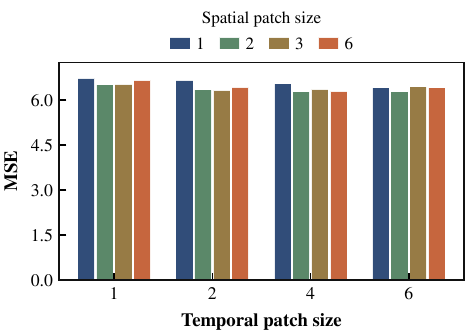}
\caption{French temperature.}
\end{subfigure}\hfill
\begin{subfigure}[t]{0.49\linewidth}
\centering
\includegraphics[width=\linewidth]{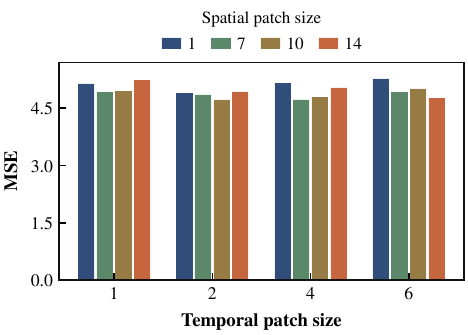}
\caption{Hunan temperature.}
\end{subfigure}
\caption{Joint sensitivity to temporal and spatial patch sizes (24-hour MSE). Each bar represents one $(L_t,L_s)$ configuration; colors distinguish spatial patch lengths.}
\label{fig:patch_size_sensitivity}
\end{figure}
\FloatBarrier

\subsection{Additional Efficiency Analysis}
\label{app:additional_efficiency}
We complement the Hunan comparison in Section~\ref{sec:efficiency_analysis} with French temperature forecasting, using a 48-hour input window, a 24-hour horizon, and batch size 8 for every method. Figure~\ref{fig:efficiency_french} plots mean training time per epoch against test MSE, with bubble area proportional to peak training GPU memory. The time axis is logarithmic to accommodate the wide runtime range.

Among the methods shown, STCFormer achieves the lowest reported MSE (6.291), taking $15.306\pm0.003$\,s/epoch and 0.378\,GiB. It is $18.1\times$ faster than DCRNN; compared with TimeFilter, it uses 28.3\% less training time and 86.8\% less memory. PatchTST and TimeXer train faster, whereas DUET uses less memory. These results demonstrate a favorable accuracy--efficiency trade-off across methods with substantially different computational profiles.

\begin{figure}[!htbp]
\centering
\includegraphics[width=0.75\linewidth]{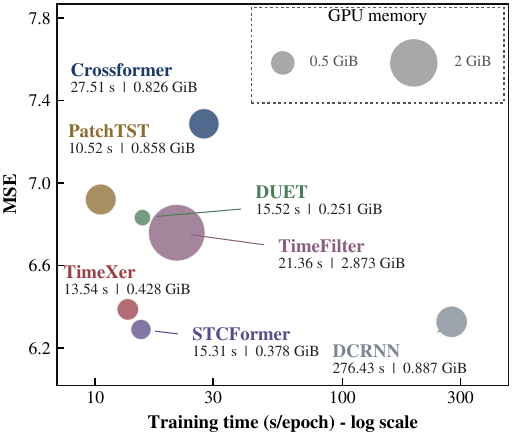}
\caption{Efficiency on French temperature (48-to-24-hour forecasting, batch size 8). Bubble area indicates peak training GPU memory; labels report time per epoch and memory. The time axis uses a logarithmic scale.}
\label{fig:efficiency_french}
\end{figure}
\FloatBarrier
\subsubsection{Comparison with Full Attention}
\label{app:full_attention_efficiency}
We compare CGAB with the Full Attention variant on French and Global temperature forecasting, using a 48-hour input window, a 24-hour horizon, and batch size 8.

\setlength{\columnsep}{18pt}
\setlength{\intextsep}{6pt}
\begin{wraptable}[11]{r}{0.50\textwidth}
\centering
\captionsetup{font=small,hypcap=false}
\caption{CGAB versus full attention. Time: s/epoch; memory: GiB. All values are three-run means; lower is better. Best values per dataset are bold.}
\label{tab:full_attention_efficiency}
\fontsize{8}{9.5}\selectfont
\setlength{\tabcolsep}{2pt}
\renewcommand{\arraystretch}{1.10}
\begin{tabular*}{\linewidth}{@{\extracolsep{\fill}}lrrr@{}}
\toprule
Method & Time & Memory & MSE \\
\midrule
\multicolumn{4}{l}{\textit{French temperature}} \\
Full Attention & \textbf{12.478} & \textbf{0.315} & 6.334 \\
CGAB & 15.314 & 0.378 & \textbf{6.291} \\
\midrule
\multicolumn{4}{l}{\textit{Global temperature}} \\
Full Attention & \textbf{63.194} & 10.023 & 6.976 \\
CGAB & 70.427 & \textbf{8.913} & \textbf{6.917} \\
\bottomrule
\end{tabular*}
\end{wraptable}
Table~\ref{tab:full_attention_efficiency} shows that CGAB lowers MSE on both datasets. Its clustering and regional aggregation increase epoch time by 22.7\% on French and 11.4\% on Global, while reducing peak memory on the larger Global network by 11.1\%, from 10.023 to 8.913\,GiB. The modest runtime cost therefore supports more accurate forecasts and, at larger spatial scale, lower memory use, making the additional structured interaction worthwhile.
\par

\subsection{Sensitivity to Spatial Patch Construction}
\label{app:spatial_patch_construction}
We compare four ways to organize stations into spatial patches: random grouping, Hilbert-curve ordering, capacity-constrained geographic clustering, and the serpentine ordering used in the main experiments. This comparison concerns patch construction before embedding, rather than the subsequent dynamic clustering in CGAB.

\begin{wraptable}[16]{r}{0.50\textwidth}
\centering
\captionsetup{font=small,hypcap=false}
\caption{Spatial patch construction (temperature MSE). Values are mean $\pm$ standard deviation over three runs; best means are bold.}
\label{tab:spatial_patch_construction}
\fontsize{8}{9.5}\selectfont
\setlength{\tabcolsep}{1.5pt}
\renewcommand{\arraystretch}{1.16}
\begin{tabular*}{\linewidth}{@{\extracolsep{\fill}}lcc@{}}
\toprule
Construction & French & Hunan \\
\midrule
Random & $6.381\!\pm\!0.043$ & $4.982\!\pm\!0.284$ \\
Hilbert curve & $\mathbf{6.152}\!\pm\!0.018$ & $4.808\!\pm\!0.044$ \\
\shortstack[l]{Capacity-constrained\\geographic clustering} & $6.259\!\pm\!0.140$ & $4.782\!\pm\!0.155$ \\
Serpentine (default) & $6.291\!\pm\!0.090$ & $\mathbf{4.743}\!\pm\!0.149$ \\
\bottomrule
\end{tabular*}
\end{wraptable}
Table~\ref{tab:spatial_patch_construction} shows lower mean MSE for all three geographic constructions than for random grouping on both datasets. Hilbert ordering performs best on French (6.152), whereas the default serpentine ordering performs best on Hunan (4.743). Relative to random grouping, serpentine ordering reduces mean MSE by 1.4\% and 4.8\%, respectively. These results validate geographically informed patch construction: every geographic strategy improves over random grouping, and the default serpentine ordering provides a strong, scalable choice across datasets.
\par

\subsection{Numerical Diagnostics for the Theoretical Analysis}
\label{app:theory_validation}
\captionsetup{hypcap=false}

We examine whether the cluster-size dependence in Theorem~\ref{thm:local-partition-lipschitz} and the assignment behavior encouraged by InfoLoss are visible on French Temperature. Two matched one-layer models differ only in the InfoLoss weight (Full-1L: $0.1$; NoInfo-1L: $0$). Both use a 48-hour input, 24-hour forecast, 6-hour temporal patches, two-station spatial patches, $n=117$ valid station tokens, and $K=10$ clusters. Bound and assignment statistics cover all 1,685 test windows and eight temporal patches per window (13,480 records per model).

\paragraph{Cluster size and the diagnostic bound.}
Figure~\ref{fig:theory_cluster_bound} evaluates the monotone $B(q)/B(n)$ dependence on the largest cluster size $q$; the curves follow the bound formula using constants reconstructed from the reported median and dense diagnostics, and markers show the observed medians. A smaller largest cluster yields a smaller fixed-partition diagnostic, while $q=n$ recovers the dense value. The mean diagnostic ratio is $0.4076$ for Full-1L and $0.5337$ for NoInfo-1L. Using the maximum observed local-attention input norm instantiates the theoretical dependence on the empirical test domain.

\paragraph{Effect of InfoLoss.}
Table~\ref{tab:theory_infoloss} summarizes the matched ablation. With InfoLoss, the mean station--cluster mutual information is higher, while the largest cluster, cluster-size variation, and empty-cluster rate are lower. The mean local-to-dense diagnostic ratio falls by $0.1261$. This is consistent with the theoretical chain from more informative, balanced assignments to smaller clusters and a tighter fixed-partition diagnostic. 

\begin{table}[!htbp]
  \centering
  \captionsetup{font=small}
  \caption{French Temperature one-layer assignment and bound diagnostics, averaged over 13,480 test-window/patch records per model. $\Delta$ is Full-1L minus NoInfo-1L.}
  \label{tab:theory_infoloss}
  \fontsize{8}{9.5}\selectfont
  \setlength{\tabcolsep}{6pt}
  \resizebox{0.82\textwidth}{!}{%
  \begin{tabular}{lcccc}
    \toprule
    Metric & Preferred & Full-1L & NoInfo-1L & $\Delta$ \\
    \midrule
    Mutual information $I$ & $\uparrow$ & \textbf{2.2878} & 2.0283 & $+0.2594$ \\
    Largest-cluster ratio $m(H)/n$ & $\downarrow$ & \textbf{0.1248} & 0.2454 & $-0.1206$ \\
    Cluster-size CV & $\downarrow$ & \textbf{0.1513} & 0.6951 & $-0.5438$ \\
    Empty-cluster rate & $\downarrow$ & \textbf{0.0000} & 0.0225 & $-0.0225$ \\
    Bound ratio $B(m(H))/B(n)$ & $\downarrow$ & \textbf{0.4076} & 0.5337 & $-0.1261$ \\
    \bottomrule
  \end{tabular}}
\end{table}

\paragraph{Fixed-partition perturbations.}
For the theorem-facing residual local-attention operator, we freeze each partition before perturbing its input and disable dropout. On the same 64 test windows (eight patches each), we evaluate 20 unit-Frobenius Gaussian directions at five relative perturbation magnitudes. Figure~\ref{fig:theory_max_amplification} compares only the \emph{largest sampled} amplification for each partition family. The maxima increase from Balanced (2.6266) through Learned (2.7020), size-matched Random (2.9542), and Imbalanced (3.1824) to Dense (3.8256), consistent with the diagnostic ordering at this aggregate level. Balanced gives the smallest amplification and Learned remains close, with both clearly below size-matched Random, Imbalanced, and Dense. The separation between Learned and size-matched Random further shows that the learned partition aligns the restricted interactions with the data. 

\begin{figure}[!htbp]
\centering
\begin{subfigure}[t]{0.49\linewidth}
  \centering
  \includegraphics[width=\linewidth]{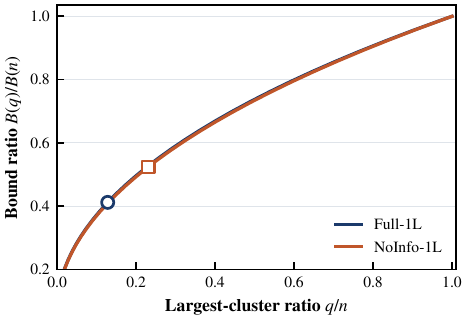}
  \caption{Cluster size and the fixed-partition diagnostic.}
  \label{fig:theory_cluster_bound}
\end{subfigure}\hfill
\begin{subfigure}[t]{0.49\linewidth}
  \centering
  \includegraphics[width=\linewidth]{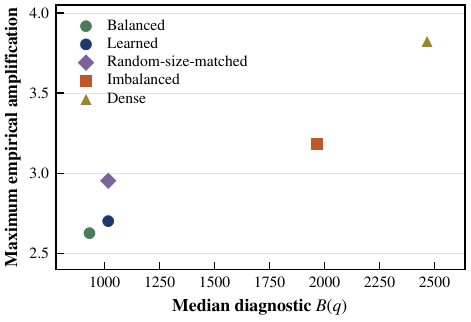}
  \caption{Largest sampled amplification.}
  \label{fig:theory_max_amplification}
\end{subfigure}
\caption{Numerical diagnostics on French temperature. (a) Normalized bound versus largest-cluster ratio; open markers denote test-record medians. (b) Maximum sampled fixed-partition amplification versus median $B(q)$ across five partition families.}
\label{fig:theory_diagnostics}
\end{figure}

\subsection{Prediction Visualizations}
\label{app:prediction_visualizations}
Figures~\ref{fig:prediction_french}--\ref{fig:prediction_global} compare STCFormer with DLinear, PatchTST, Crossformer, TimeXer, DUET, S$^2$Transformer, and EasyST on French temperature, Hunan temperature, and Global wind speed. Each figure shows the same station and forecast window across methods, using 48 hours of observations to predict the following 72 hours. Gray curves denote the observed history, black curves the future ground truth, and red dashed curves the predictions; the shaded region marks the forecast horizon.

For French temperature, STCFormer closely follows the low-temperature troughs and the declining level of successive peaks. On Hunan temperature, it captures both the cooler regime after the observed drop and the subsequent daily fluctuations, whereas several baselines retain a warm bias. On Global wind speed, STCFormer reproduces the sustained increase and elevated values near the end of the horizon more faithfully than the comparatively flat forecasts of DLinear, PatchTST, and TimeXer. Together, these cases visually confirm STCFormer's ability to track regime changes, periodic variation, and evolving trends across distinct station forecasting settings.

\begin{figure}[!ht]
\centering
\includegraphics[width=\textwidth,height=0.86\textheight,keepaspectratio]{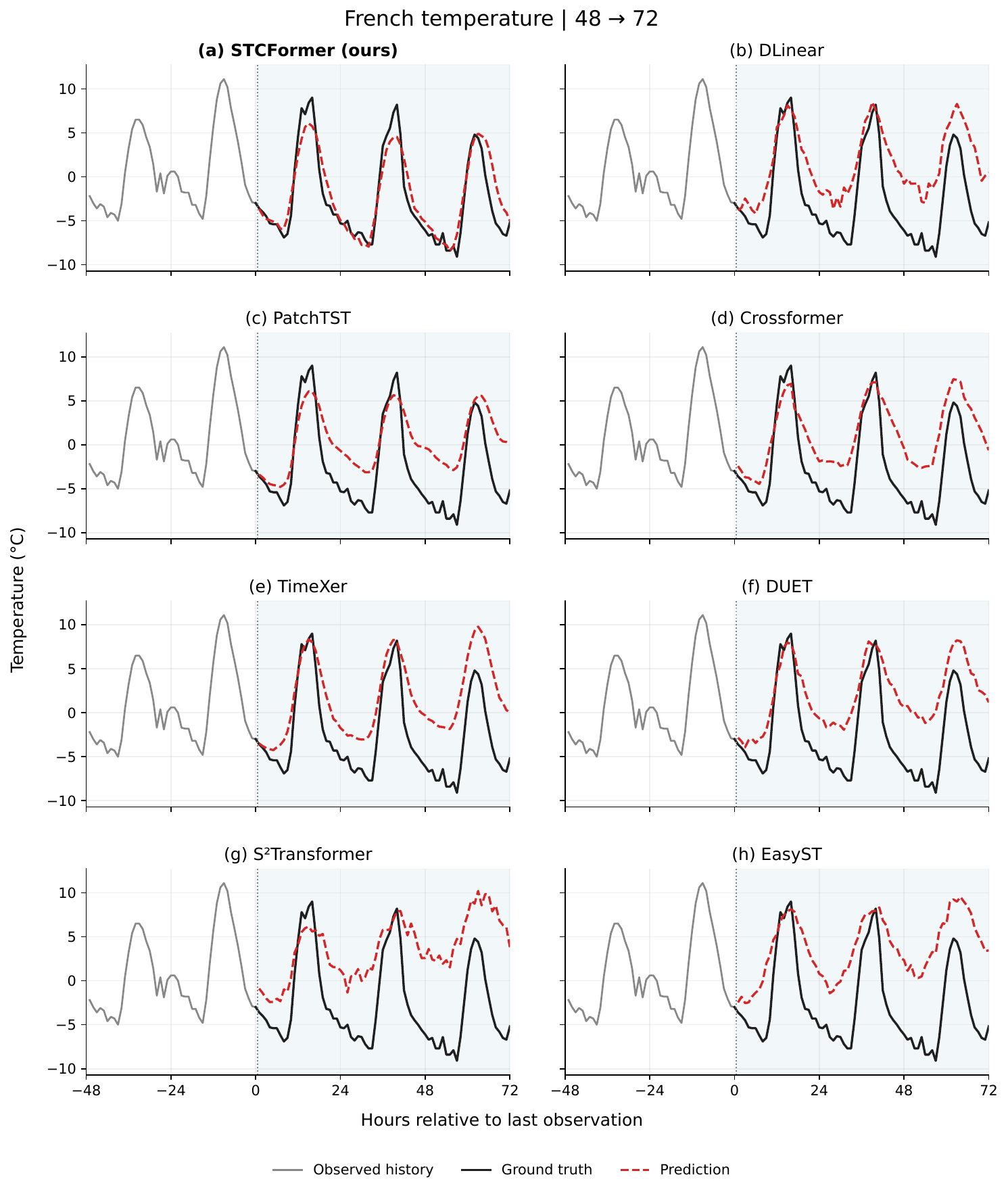}
\caption{Prediction visualization on French temperature with a 48-hour input and a 72-hour forecast horizon. All methods are shown for the same station and window. Gray: observed history; black: ground truth; red dashed: prediction. The shaded region denotes the forecast horizon.}
\label{fig:prediction_french}
\end{figure}

\begin{figure}[p]
\centering
\includegraphics[width=\textwidth,height=0.86\textheight,keepaspectratio]{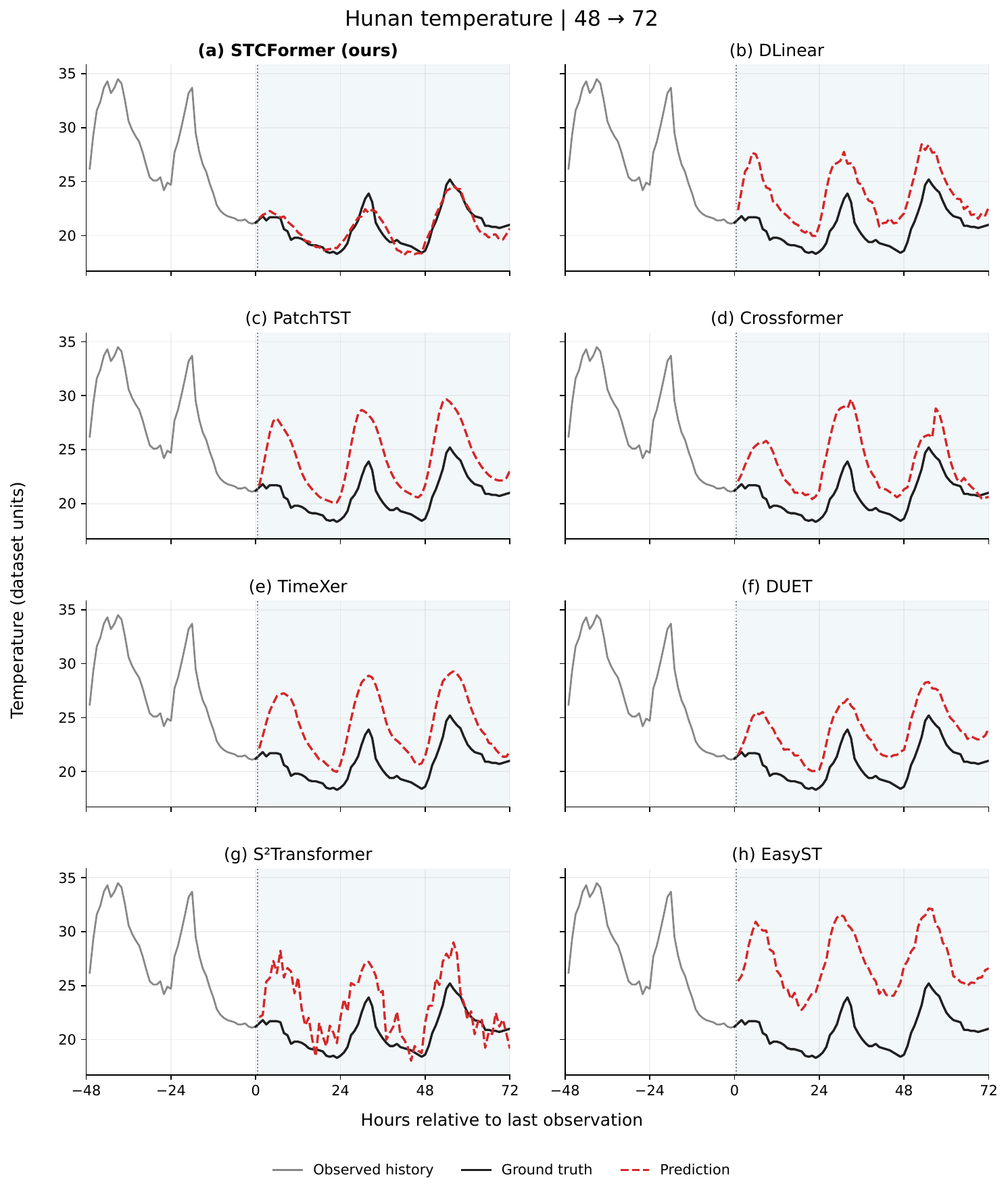}
\caption{Prediction visualization on Hunan temperature with a 48-hour input and a 72-hour forecast horizon. The plotting conventions follow Figure~\ref{fig:prediction_french}. Values are shown in dataset units.}
\label{fig:prediction_hunan}
\end{figure}

\begin{figure}[p]
\centering
\includegraphics[width=\textwidth,height=0.86\textheight,keepaspectratio]{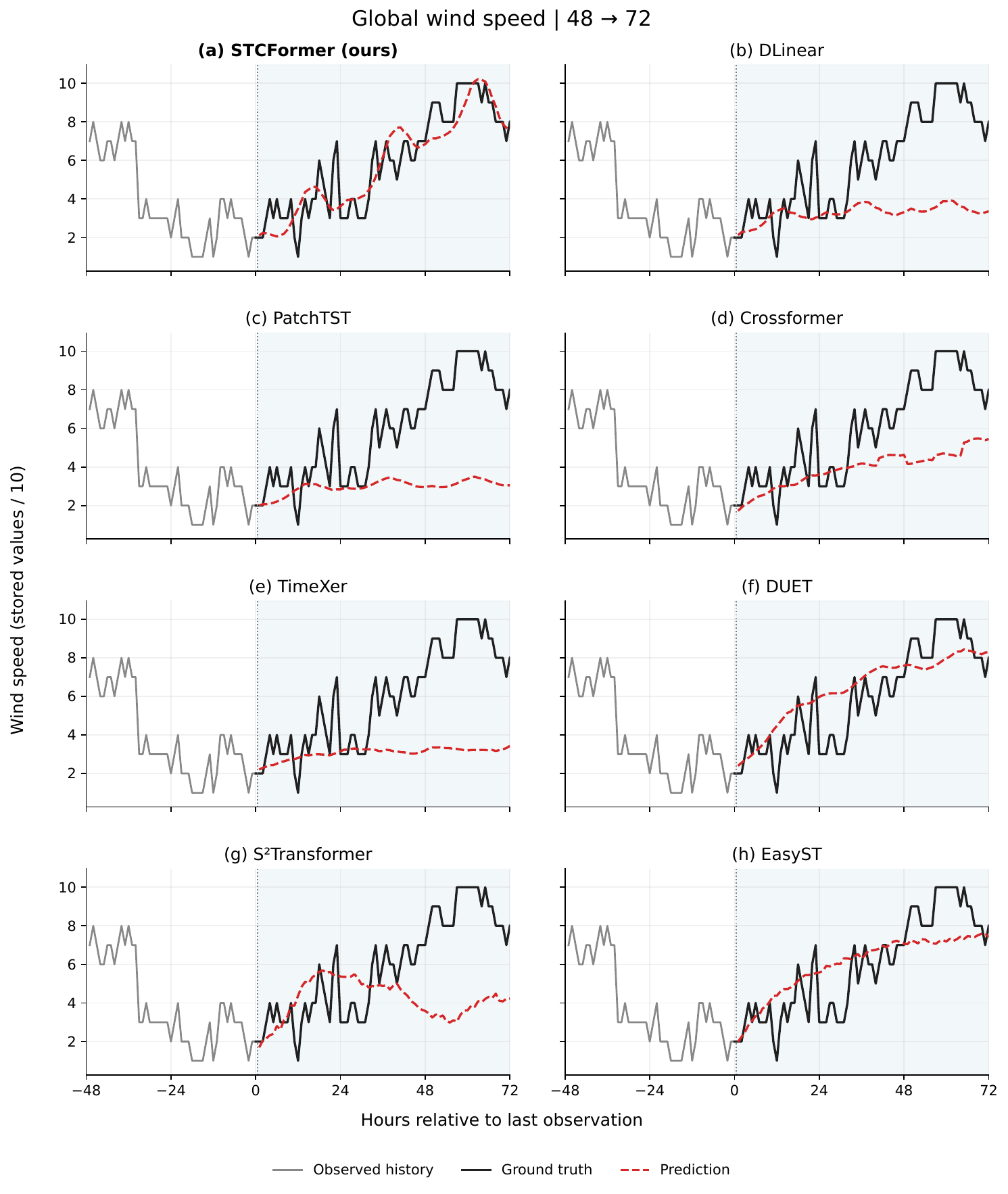}
\caption{Prediction visualization on Global wind speed with a 48-hour input and a 72-hour forecast horizon. The plotting conventions follow Figure~\ref{fig:prediction_french}. Stored values are divided by 10 for display, consistently for observations and predictions.}
\label{fig:prediction_global}
\end{figure}
\FloatBarrier

\end{document}